\documentclass{article}
\usepackage{float}
\usepackage{graphicx}
\usepackage{amssymb, amsmath, amsthm,mathtools}
\usepackage{epstopdf}

\usepackage{bbm}
\usepackage[margin=1in]{geometry}
\usepackage{layout, color}
\usepackage{enumerate}
\usepackage{enumitem}
\usepackage{authblk}
\usepackage{algorithm}
\usepackage{algpseudocode}
\usepackage{setspace}
\usepackage{hyperref}
\usepackage{wrapfig}
\usepackage[compress]{natbib}
\usepackage{caption} 
\newsavebox{\algobox}

\usepackage{url}
\usepackage{subfig}
\usepackage{mdwlist}
\usepackage{multirow}
\usepackage{amsthm}
\usepackage{color}
\usepackage[dvipsnames]{xcolor}

\usepackage[title]{appendix}

\newtheorem{theorem}{Theorem}[section]

\newtheorem{proposition}[theorem]{Proposition}
\newtheorem{assump}{Assumption}

\newtheorem{lemma}[theorem]{Lemma}
\newtheorem{definition}[theorem]{Definition}

\theoremstyle{definition}

\newtheorem{conjecture*}{Conjecture}
\theoremstyle{plain}

\newcommand{\Alpha}{\boldsymbol{\alpha}}

\newcommand{\Beta}{\boldsymbol{\beta}}

\usepackage[normalem]{ulem}

\newcommand{\Zx}{Z_x^{(\theta)}}
\newcommand{\Zy}{Z_y^{(\theta)}}

\newenvironment{keywords}{%
    \small\textbf{Keywords:}\space
}{%
    \par
}

\title{Density-Reweighted Entropic Optimal Transport: Decoupling Geometry from Sampling Density}
\author{ Keyi Li${^{1,*}}$~~~~Yuval Kluger${^{1,2}}$~~~~Boris Landa${^{1,3,*}}$\\
\small{${^1}$Program of Applied \& Computational Mathematics, Yale University}\\
\small{${^2}$Department of Pathology, Yale University}\\
\small{${^3}$Department of Electrical and Computer Engineering, Yale University}\\
\small{${^*}$Corresponding author. Email: keyi.y.li@yale.edu, boris.landa@yale.edu}
}

\date{}
\begin{document}
\maketitle

\begin{abstract}
    Dataset alignment is a central step in data analysis across science and engineering, where the goal is to match observations between datasets. Entropic Optimal Transport (EOT) offers a computationally tractable framework for this task by encoding cross-dataset affinities in a transport plan. However, when two datasets are sampled from geometrically similar low-dimensional structures with substantially different sampling densities, the EOT plan may match points by relative sampling density rather than geometric proximity, yielding geometrically misleading correspondences. To address this issue, we propose a density-reweighted EOT framework in which the influence of sampling density on the transport plan can be discounted to a desired degree, ranging from standard EOT to alignment driven purely by underlying geometry. Under suitable regularity conditions, we establish convergence of the reweighted EOT plan to a family of population-level plans whose dependence on sampling density is made explicit. Through simulations, we show that our approach recovers geometrically faithful correspondences, improving over related EOT-based frameworks when datasets exhibit substantial sampling density disparity. 
\end{abstract}

\begin{keywords}
Entropic Optimal Transport, dataset alignment, sampling density, manifold learning
\end{keywords}


\section{Introduction}

    Dataset alignment, the problem of establishing correspondences between observations across multiple datasets, is a fundamental task arising throughout the scientific and engineering disciplines~\cite{ align_ex2, align_ex4, align_ex5}. In genomics, for instance, datasets collected under heterogeneous conditions, such as different experimental batches or measurement technologies, often exhibit similar low-dimensional structures characterizing the underlying biological process, yet undergo systematic deformations that render direct comparison infeasible~\cite{genomic_batch, genomics_batch2, genomics_batch3}. Establishing correspondences in a geometrically meaningful way, i.e., identifying for each data point its most similar counterparts in the other dataset with respect to their underlying geometry, is therefore essential for reliable downstream analysis~\cite{align_ex2}. 

    Optimal Transport (OT)~\cite{OT_intro} provides a natural mathematical formalization of this task by seeking a transport plan that minimizes the total cost of moving mass between two datasets, where the cost function encodes geometric proximity between data points. However, exact OT scales cubically in the number of samples~\cite{OT_intro2}, limiting its applicability to large-scale datasets. Entropic Optimal Transport (EOT)~\cite{EOT}, an entropy-regularized convex relaxation of OT, offers a computationally tractable alternative. Formally, given two finite point clouds $\mathbf{X} = \{\mathbf{x}_i\}_{i=1}^m$ and $\mathbf{Y} = \{\mathbf{y}_j\}_{j=1}^n$, a cost matrix $\mathbf{C} \in \mathbb{R}^{m \times n}$ with entries $C_{ij} = c(\mathbf{x}_i, \mathbf{y}_j)$ denoting the transport cost per unit mass from $\mathbf{x}_i$ to $\mathbf{y}_j$, and marginal constraints $\mathbf{a} \in \mathbb{R}_+^m$, $\mathbf{b} \in \mathbb{R}_+^n$ satisfying $\mathbf{1}^\top \mathbf{a} = \mathbf{1}^\top \mathbf{b}$, EOT seeks the transport plan $\mathbf{W}^*$ (also referred to as EOT plan hereafter) solving
    \begin{equation}\label{eq:EOT_problem}
        \begin{aligned}
        \min_{\tilde{\mathbf{W}} \in \mathbb{R}_{+}^{m\times n}} \quad
        & \sum_{i=1}^m \sum_{j=1}^n C_{ij}\, \tilde{W}_{ij}
        \;+\; \varepsilon \sum_{i=1}^m \sum_{j=1}^n \tilde{W}_{ij} \log \tilde{W}_{ij} \\
        \text{s.t.} \quad
        & \sum_{j=1}^n \tilde{W}_{ij} = a_i, \quad \sum_{i=1}^m \tilde{W}_{ij} = b_j,
        \quad \forall\, i \in [m],\, \forall\, j \in [n],
        \end{aligned}
    \end{equation}
    where $\varepsilon > 0$ is the entropic regularization parameter controlling the
    diffuseness of the resulting transport plan. The EOT problem in~\eqref{eq:EOT_problem}
    is strictly convex and admits a unique solution of the form
    \begin{equation}\label{eq:EOT_standard_solution_form}
        W^*_{ij} \;=\; u_i\, K_{ij}\, v_j, \qquad
        K_{ij} \;=\; \exp\!\left(-C_{ij}/\varepsilon\right),
        \quad \forall\, i \in [m],\, \forall\, j \in [n],
    \end{equation}
    where $\mathbf{u} \in \mathbb{R}^m_{+}$ and $\mathbf{v} \in \mathbb{R}^n_{+}$ are
    positive scaling vectors that can be computed efficiently via the Sinkhorn
    algorithm~\cite{sinkhorn}. The EOT plan $\mathbf{W}^*$ 
    reflects pairwise affinities between data points across the two datasets, and thus provides a natural mechanism for establishing correspondence. Beyond its computational efficiency~\cite{computational_sinkhorn, computational_sinkhorn2}, EOT enjoys favorable statistical properties~\cite{statistical_sinkhorn}, such as robustness to perturbations in the input datasets, making it well-suited for large-scale modern datasets. It has found successful application in domain adaptation~\cite{EOT_domain_adaptation, EOT_domain2}, generative modeling~\cite{generative1, generative2}, computer vision~\cite{cv1, cv2}, and computational biology~\cite{waddington_OT}, among others~\cite{OT_intro2}.

    In dataset alignment tasks, the EOT problem~\eqref{eq:EOT_problem} is typically applied with squared Euclidean distances as the cost matrix and uniform empirical marginals reflecting the respective dataset sizes. Concretely, one sets $C_{ij} = \|\mathbf{x}_i - \mathbf{y}_j\|^2$, $a_i = n$ for all $i \in [m]$, and $b_j = m$ for all $j \in [n]$. However, this formulation, combined with the inherent mass-conservation requirement of EOT, can lead to biased alignment when datasets are class-imbalanced or, more generally, when they exhibit sampling density mismatch~\cite{problem_eot,problem_eot2}. Specifically, when two datasets are sampled from geometrically similar low-dimensional structures (e.g., manifolds) but with substantially different sampling densities, the EOT plan may match points across datasets based on their relative sampling densities rather than their geometric proximity, producing correspondences that are geometrically misleading; see Figure~\ref{fig:demo}(a). Such density mismatches arise naturally in practice: in genomics, for instance, cells profiled across experimental conditions, sequencing batches, or measurement technologies may exhibit artificially skewed sampling densities due to technical biases introduced by the acquisition process itself~\cite{genomic_batch, genomics_batch2, genomics_batch3}. While Unbalanced Optimal Transport (UOT)~\cite{UOT} (discussed in Section~\ref{sec:related_work}) mitigates this limitation by allowing variation in total mass, it requires additional hyperparameter tuning and offers little intuition on how the resulting transport plan is jointly influenced by sampling density and geometric similarity.
    \begin{figure}[t]
        \centering
        \includegraphics[width=0.9\textwidth]{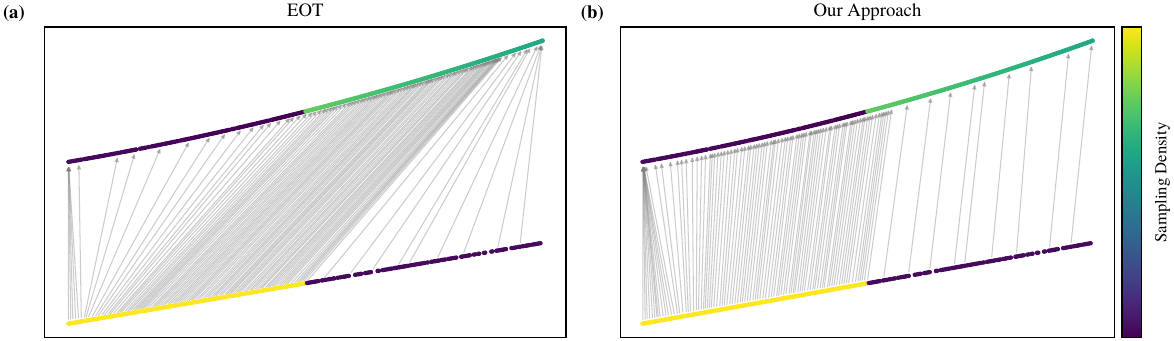}
        \caption{\textbf{Sampling density mismatch biases EOT-based dataset alignment.} Two datasets---a line segment (bottom) and an arc (top)---share similar underlying geometry but differ substantially in sampling density (yellow/green: high density; blue/purple: low density). Gray arrows indicate maximum-coupling correspondences, where each point is matched to its highest-affinity counterpart in the other dataset. \textbf{(a)}~Standard EOT routes correspondences toward high-density regions, producing geometrically misleading matchings. \textbf{(b)}~Our approach recovers correspondences that are driven by the underlying geometry. See Supplement~\ref{sec:demo} for the experiment details.}
        \label{fig:demo}
    \end{figure}
    
    \paragraph{Our Contribution} 
    In this work, we address the challenge of obtaining geometrically coherent EOT-based alignments between datasets sampled from manifolds with similar underlying geometry but substantially different sampling densities. In this setting, standard EOT can produce correspondences that are strongly distorted by density variation and therefore geometrically misleading; see Figure~\ref{fig:demo}(a). Our goal is to recover the alignment that would arise if both manifolds were sampled uniformly, so that correspondences are driven primarily by geometry rather than by sampling density; see Figure~\ref{fig:demo}(b). More broadly, we aim to provide a principled mechanism for interpolating between purely geometry-driven alignment and the density-sensitive alignment produced by standard EOT.

    \textbf{A density-reweighted EOT framework for geometry-driven alignment.} We show that the dependence of the EOT plan on sampling density can be mitigated through density reweighting. Specifically, while standard EOT with uniform empirical marginals amounts to scaling a Gaussian kernel to have constant row and column sums, we propose an approach that instead reweighs the Gaussian kernel by local sampling density and seeks scaling factors that satisfy suitably reweighted marginal constraints. Such density-reweighting assigns greater importance to data points in sparse regions and discounts data points in dense regions, compensating for sampling density variation. The reweighting strength can be controlled by a density discounting factor $\theta \in [0, 1]$. At $\theta = 0$, the method reduces to standard EOT with uniform empirical marginals; at $\theta = 1$, the population limit is independent of the sampling densities, so correspondences are driven by the underlying geometry; intermediate values of $\theta$ allow a trade-off between the two regimes (see Figure~\ref{fig:demo_alpha}). Algorithm~\ref{alg:EOT_invariant} outlines our proposed approach, where $\mathbf{W}^{(\theta)}$ from~\eqref{eq:EOT_solution_form} is the reweighted plan.

    We prove that under suitable regularity conditions on the manifold geometry and sampling densities, $\mathbf{W}^{(\theta)}$ converges with high probability as $m,n\rightarrow\infty$, up to a global scaling constant, to a population-level EOT plan whose dependence on sampling density is explicitly governed by $\theta \in [0, 1]$. In particular, at $\theta = 1$, $\mathbf{W}^{(\theta)}$ converges to the population-level EOT plan under uniform measures on both manifolds.
    This is established in two settings: when the densities are known exactly (Theorem~\ref{thm:finite_population_density_known}) and when the density of each dataset is estimated using a kernel density estimator (Theorem~\ref{thm:finite_population_density_estimated}), with explicit convergence rates.
    Finally, we demonstrate empirically that our approach outperforms standard EOT and UOT across multiple hyperparameter configurations in recovering geometrically faithful correspondences between two datasets with substantial sampling density disparity. In these experiments, we also discuss practical considerations and provide guidance to facilitate deployment of the proposed framework.

    \begin{figure}[!ht]
        \begin{minipage}[t]{0.6\textwidth}
            \vspace{0pt}
            \begin{algorithm}[H]
                \caption{Density-reweighted EOT for Geometry-driven Alignment}
                \label{alg:EOT_invariant}
    
                \textbf{Input:} Datasets $\mathbf{X} = \{\mathbf{x}_i\}_{i=1}^m \subset \mathbb{R}^{D}$ 
                and $\mathbf{Y} = \{\mathbf{y}_j\}_{j=1}^n \subset \mathbb{R}^{D}$; positive density estimates 
                $\hat{f}_i$ and $\hat{g}_j$; EOT regularization $\varepsilon > 0$; 
                discounting factor $\theta \in [0,1]$.
    
                \begin{algorithmic}[1]
                    \State \textbf{Compute scaling constant.}
                    \begin{equation}\label{eq:S_scale}
                        S^{(\theta)} = \frac{\dfrac{1}{m} \sum_{i=1}^m \dfrac{1}{(\hat{f}_i)^{\,\theta}}}
                                 {\dfrac{1}{n}\sum_{j=1}^n \dfrac{1}{(\hat{g}_j)^{\,\theta}}}.
                    \end{equation}
    
                    \State \textbf{Form density-adjusted kernel.}
                    \begin{equation}\label{eq:M}
                        M_{ij}^{(\theta)} = \frac{\exp\!\left(-\|\mathbf{x}_i - \mathbf{y}_j\|_2^2 / \varepsilon\right)}
                                      {(\hat{f}_i)^{\,\theta}\,(\hat{g}_j)^{\,\theta}},
                        \quad \forall i\in [m],\, j \in [n].
                    \end{equation}
    
                    \State \textbf{Sinkhorn scaling.} Find $\boldsymbol{\alpha} \in \mathbb{R}_+^m$,
                    $\boldsymbol{\beta} \in \mathbb{R}_+^n$ such that
                    \begin{equation}\label{eq:row_col_constraint}
                    \begin{aligned}
                        \frac{1}{n}\sum_{j=1}^{n} \alpha_i^{(\theta)} M_{ij}^{(\theta)} \beta_j^{(\theta)} &= \frac{1}{\sqrt{S^{(\theta)}}\,(\hat{f}_i)^{\,\theta}}, \\
                        \frac{1}{m}\sum_{i=1}^{m} \alpha_i^{(\theta)} M_{ij}^{(\theta)} \beta_j^{(\theta)} &= \frac{\sqrt{S^{(\theta)}}}{(\hat{g}_j)^{\,\theta}}.
                    \end{aligned}
                    \end{equation}
    
                    \State \textbf{Recover the density-reweighted EOT} plan.
                    \begin{equation}\label{eq:EOT_solution_form}
                        W_{ij}^{(\theta)} = \alpha_i^{(\theta)} \exp\!\left(-\|\mathbf{x}_i - \mathbf{y}_j\|_2^2 / \varepsilon\right) \beta_j^{(\theta)}.
                    \end{equation}
                \end{algorithmic}
            \end{algorithm}
        \end{minipage}
        \hfill
        \begin{minipage}[t]{0.37\textwidth}
            \vspace{0pt}
            \centering
            \includegraphics[width=0.92\textwidth]{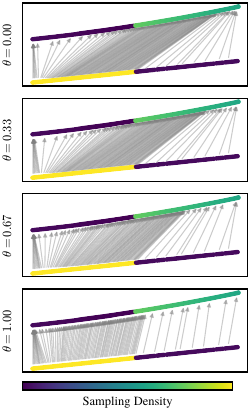}
            \captionof{figure}{\textbf{Discounting the influence of sampling density via 
            $\theta$ in Algorithm~\ref{alg:EOT_invariant}.} Each panel shows the 
            maximum-coupling correspondences for a different value of $\theta$. 
            At $\theta = 0$, the method reduces to standard EOT; at $\theta = 1$, 
            it recovers purely geometry-based alignment.}
            \label{fig:demo_alpha}
        \end{minipage}
    \end{figure}    

\section{Related Work}\label{sec:related_work}
    \paragraph{Unbalanced Optimal Transport}

    Unbalanced Optimal Transport (UOT)~\cite{UOT} extends EOT by relaxing the hard marginal constraints in~\eqref{eq:EOT_problem} through soft penalty terms, allowing the transport plan to deviate from the prescribed marginals and thus accommodate variation in the total transported mass. A popular choice of penalty is the class of $f$-divergences~\cite{UOT, UOT_TV}, among which the KL-divergence formulation is widely adopted. Formally, UOT with the KL-divergence penalty seeks a transport plan solving
    \begin{equation}\label{eq:UOT_problem}
    \begin{aligned}
        \min_{\tilde{\mathbf{W}} \in \mathbb{R}_{+}^{m\times n}} \quad
        & \sum_{i=1}^m \sum_{j=1}^n C_{ij}\, \tilde{W}_{ij}
        \;+\; \varepsilon \sum_{i=1}^m \sum_{j=1}^n \tilde{W}_{ij} \log \tilde{W}_{ij}
        +\; \lambda_1\, \mathrm{KL}\!\left(\tilde{\mathbf{W}}\mathbf{1}_n \,\Big\|\, \mathbf{a}\right)
        \;+\; \lambda_2\, \mathrm{KL}\!\left(\tilde{\mathbf{W}}^\top\mathbf{1}_m \,\Big\|\, \mathbf{b}\right),
    \end{aligned}
    \end{equation}
    where $\mathrm{KL}(\mathbf{p} \| \mathbf{q}) = \sum_i \left( p_i \log(p_i / q_i) - p_i + q_i\right)$ is the generalized KL divergence for non-negative vectors $\mathbf{p}, \mathbf{q}$ of the same dimension, and $\lambda_1, \lambda_2 > 0$ control the degree of marginal relaxation. As $\lambda_1, \lambda_2 \to \infty$, the marginal constraints become increasingly strict and the UOT plan recovers the EOT plan; as $\lambda_1, \lambda_2 \to 0$, the marginal penalties vanish and the UOT plan reduces to the Gaussian kernel, which encodes cross-data similarities via raw Euclidean distances. UOT has been studied from multiple perspectives; we refer the reader to~\cite{UOT,OT_intro, OT_intro2} for  comprehensive overviews, to~\cite{UOT_theory} for its statistical properties, to~\cite{UOT_TV, UOT_alg} for computational algorithms, and to~\cite{waddington_OT, UOT_app2} for applications in machine learning. 

    In the context of dataset alignment, while relaxing strict mass conservation
    allows the UOT plan to suffer less from dataset-specific variation in sampling density, the influence of $(\lambda_1, \lambda_2)$ on the
    resulting plan is implicit: it is unclear how the choice of
    $(\lambda_1, \lambda_2)$ controls the extent to which $\tilde{\mathbf{W}}$
    reflects the underlying geometry versus the sampling densities, and to our knowledge this interplay has not been characterized in prior work. Our approach addresses this limitation by making the density-geometry tradeoff explicit and easily interpretable via the discounting parameter $\theta$.

    \paragraph{Manifold Learning} Our approach is conceptually motivated by Diffusion Maps~\cite{diffusion_maps}, which
    provides a framework for manifold learning with the ability to suppress the influence of the sampling density on the recovered geometry. A key observation underlying
    Diffusion Maps is that the graph Laplacian constructed from a kernel $k(x,y) =
    \exp(-\|x-y\|^2/h^2)$ depends on the sampling density of the data, mixing
    geometric structure with the distribution of sample points. To mitigate this,
    Diffusion Maps introduce the following $\alpha$-normalization of the kernel,
    \begin{equation}
        k_\alpha(x, y) = \frac{k(x,y)}{d(x)^\alpha \, d(y)^\alpha}, \quad
        d(x) = \int_X k(x,y)\,d\mu(y),
    \end{equation}
    where $d(x)$ is a local measure of volume, and $\alpha
    \in [0,1]$ controls the degree to which sampling density is discounted. As $h \to 0$, the infinitesimal generator of the Markov chain induced by $k_\alpha$ converges to a differential operator whose dependence on the sampling density is explicitly controlled by $\alpha$: at $\alpha = 0$, the differential operator retains full dependence on the sampling density, recovering the random-walk graph Laplacian; at $\alpha = 1$, the density dependence is completely removed and the differential operator recovers the Laplace--Beltrami operator on the manifold, independently of the distribution of sample points. We extend this idea to the EOT framework by incorporating an analogous density discounting via a parameter $\theta \in [0,1]$ (Algorithm~\ref{alg:EOT_invariant}). At $\theta = 0$, the standard EOT plan is recovered; at $\theta = 1$, the resulting transport plan is invariant to the sampling measure, with correspondences governed purely by geometry; and intermediate values of $\theta$ provide a principled balance between the two regimes.

    Our work is also closely related to EOT Eigenmaps~\cite{boris_EOT}, which established a connection between the EOT plan and manifold learning. Specifically, \cite{boris_EOT} showed that the singular vectors of the EOT plan can be used to jointly embed two datasets sampled from a shared low-dimensional manifold, possibly subject to dataset-specific deformations and corruptions. A central result is that, when viewed as a linear operator, the EOT plan converges in the large-sample limit to a kernel operator whose singular vectors encode the intrinsic geometry of the underlying manifold. This analysis assumes that the shared manifold is sampled according to the same density in both datasets. In contrast, our work addresses the setting in which two datasets have similar underlying geometry while exhibiting different sampling densities. Therefore, our proposed reweighting can enable joint manifold learning and embedding methods that are robust to dataset-specific variation in sampling density.

\section{Density-Reweighted EOT and Theoretical Guarantees}

    Let $\mathbf{X} = \{\mathbf{x}_i\}_{i=1}^{m} \subset \mathbb{R}^D$ and $\mathbf{Y} = \{\mathbf{y}_j\}_{j=1}^{n} \subset \mathbb{R}^D$ be two point clouds sampled i.i.d.\ from probability measures $\mu\, d\omega$ and $\nu\, d\eta$, respectively, where $d\omega$ and $d\eta$ denote the volume forms on Riemannian manifolds $\mathcal{M}_x, \mathcal{M}_y \subset \mathbb{R}^D$ with intrinsic dimensions $d_1, d_2$, and $\mu$, $\nu$ are the corresponding sampling density functions. We begin with the following proposition, which motivates Algorithm~\ref{alg:EOT_invariant} by connecting it to a suitably modified variant of the optimization problem~\eqref{eq:EOT_problem}.
    
    \begin{proposition}\label{prop:density_invariant_eot}
        The reweighted EOT plan $\mathbf{W}^{(\theta)}$ in~\eqref{eq:EOT_solution_form} is the unique minimizer of the following optimization problem:
        \begin{equation}\label{eq:density_invariant_opt}
            \begin{aligned}
            \operatorname*{min}_{_{\tilde{\mathbf{W}} \in \mathbb{R}_{+}^{m\times n}}}& \sum_i^m \sum_j^n \frac{1}{(\hat{f}_i)^{\,\theta}\,(\hat{g}_j)^{\,\theta}} \cdot \|\mathbf{x}_i - \mathbf{y}_j\|_2^2 \cdot \tilde{W}_{ij} + \varepsilon \sum_i^m \sum_j^n \frac{1}{(\hat{f}_i)^{\,\theta}\,(\hat{g}_j)^{\,\theta}} \cdot \tilde{W}_{ij} \log\tilde{W}_{ij} \\
            \text{s.t.} \quad & \frac{1}{n}\sum_{j=1}^n \frac{\tilde{W}_{ij}}{(\hat{f}_i)^{\,\theta}\,(\hat{g}_j)^{\,\theta} } = \frac{1}{\sqrt{S^{(\theta)}}(\hat{f}_i)^{\,\theta}},
            \quad
            \frac{1}{m}\sum_{i=1}^m \frac{\tilde{W}_{ij}}{(\hat{f}_i)^{\,\theta}\,(\hat{g}_j)^{\,\theta}}
            = \frac{\sqrt{S^{(\theta)}}}{(\hat{g}_j)^{\,\theta}}, \quad \forall i \in [m], \forall j \in [n], 
            \end{aligned}
        \end{equation}
        where $S^{(\theta)}$ is defined in~\eqref{eq:S_scale}. 
    \end{proposition}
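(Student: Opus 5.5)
Write $w_{ij} = (\hat f_i)^{-\theta}(\hat g_j)^{-\theta} > 0$ and $C_{ij} = \|\mathbf{x}_i-\mathbf{y}_j\|_2^2$. The plan is to treat \eqref{eq:density_invariant_opt} as a weighted entropic problem and argue in three steps: it is strictly convex with linear constraints, its KKT conditions force a Gibbs form, and the plan of Algorithm~\ref{alg:EOT_invariant} satisfies those conditions exactly.

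\textbf{Strict convexity.} The objective is
\[
F(\tilde{\mathbf{W}}) = \sum_{i,j} w_{ij}\bigl(C_{ij}\tilde W_{ij} + \varepsilon \tilde W_{ij}\log\tilde W_{ij}\bigr).
\]
Each summand is strictly convex on $\mathbb{R}_+$, since $t\log t$ is strictly convex and the weights are positive. The constraints are linear in $\tilde{\mathbf{W}}$. Hence $F$ has at most one minimizer on the feasible set.

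\textbf{Feasibility and the KKT argument.} Since $W^{(\theta)}_{ij} = \alpha_i e^{-C_{ij}/\varepsilon}\beta_j$ and $M^{(\theta)}_{ij} = w_{ij}e^{-C_{ij}/\varepsilon}$, we have $w_{ij}W^{(\theta)}_{ij} = \alpha_i M^{(\theta)}_{ij}\beta_j$. The constraints of \eqref{eq:density_invariant_opt} are therefore literally the Sinkhorn equations \eqref{eq:row_col_constraint}, so $\mathbf{W}^{(\theta)}$ is feasible, and it has strictly positive entries.

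For optimality, I would introduce multipliers $\phi_i$ and $\psi_j$ for the two constraint families and form the Lagrangian. Stationarity in $\tilde W_{ij}$ gives
\[
w_{ij}\bigl(C_{ij} + \varepsilon(1+\log\tilde W_{ij})\bigr) = \frac{w_{ij}}{n}\phi_i + \frac{w_{ij}}{m}\psi_j .
\]
The factor $w_{ij}$ divides out, leaving
\[
\tilde W_{ij} = e^{\phi_i/(n\varepsilon) - 1/2}\, e^{-C_{ij}/\varepsilon}\, e^{\psi_j/(m\varepsilon) - 1/2}.
\]
This has exactly the form \eqref{eq:EOT_solution_form}. Given $\mathbf{W}^{(\theta)}$, I would set $\phi_i = n\varepsilon(\log\alpha_i + 1/2)$ and $\psi_j = m\varepsilon(\log\beta_j + 1/2)$. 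Then $\mathbf{W}^{(\theta)}$ is a stationary point of the Lagrangian in the interior of the domain.

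\textbf{Conclusion and the one point needing care.} For a convex objective with affine constraints, a feasible point satisfying stationarity with some multipliers is a global minimizer. The simplest route is direct: for any feasible $\tilde{\mathbf{W}}$, write $F(\tilde{\mathbf{W}}) \ge$ the linearization of $F$ at $\mathbf{W}^{(\theta)}$, and note that the linear term vanishes on feasible directions by the stationarity identity. The only subtlety is the boundary, where $\tilde W_{ij}=0$ and $\log$ is singular. This causes no trouble: the derivative of $t\log t$ tends to $-\infty$ as $t\to 0$, and the linearization inequality $t\log t \ge s\log s + (1+\log s)(t-s)$ holds for all $t\ge 0$ whenever $s>0$.

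Combining the three steps, $\mathbf{W}^{(\theta)}$ is a minimizer, and by strict convexity it is the unique one. Existence of $\boldsymbol\alpha$ and $\boldsymbol\beta$ is taken as given by the algorithm's Sinkhorn step. It is consistent with the masses: both constraint families sum to $\frac{1}{nm}\sum_{i,j}\alpha_i M^{(\theta)}_{ij}\beta_j$, and the definition of $S^{(\theta)}$ makes $\frac{m}{\sqrt{S^{(\theta)}}}\cdot\frac1m\sum_i \hat f_i^{-\theta}$ equal to $n\sqrt{S^{(\theta)}}\cdot\frac1n\sum_j \hat g_j^{-\theta}$. This equal-total-mass check is the only place $S^{(\theta)}$ enters, and I would remark on it for completeness.
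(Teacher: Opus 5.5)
Your proof is correct and follows essentially the paper's route: strict convexity gives uniqueness, the constraints of \eqref{eq:density_invariant_opt} are literally the Sinkhorn equations, and Lagrangian stationarity yields the Gibbs form \eqref{eq:EOT_solution_form}. Your tangent-line inequality for $t\log t$ at $s>0$ goes slightly further than the paper, proving the sufficiency of the first-order conditions that the paper only asserts, and it makes the paper's compactness-based existence step unnecessary.

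There is one slip in your closing mass check. The summed row and column constraints equal $\frac1n\sum_{i,j}\alpha_iM^{(\theta)}_{ij}\beta_j$ and $\frac1m\sum_{i,j}\alpha_iM^{(\theta)}_{ij}\beta_j$ respectively, not both $\frac{1}{nm}\sum_{i,j}\alpha_iM^{(\theta)}_{ij}\beta_j$. Mass consistency therefore reads $\frac{1}{\sqrt{S^{(\theta)}}}\cdot\frac1m\sum_i\hat f_i^{-\theta}=\sqrt{S^{(\theta)}}\cdot\frac1n\sum_j\hat g_j^{-\theta}$, which is exactly \eqref{eq:S_scale}. The equality you wrote would require $m=n$. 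This does not affect your main argument, since you take the scaling factors as given.
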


    Compared with the standard EOT formulation in~\eqref{eq:EOT_problem}, the problem in~\eqref{eq:density_invariant_opt} reweights the contribution of each pair $(\mathbf{x}_i, \mathbf{y}_j)$ in both the transport cost and the entropic regularizer by $1/(\hat{f}_i\hat{g}_j)^{\,\theta}$. The marginal constraints are reweighted analogously: the mass at each data point is scaled inversely by its local density: $1/(\hat{f}_i)^{\,\theta}$ for each $\mathbf{x}_i$ and $1/(\hat{g}_j)^{\,\theta}$ for each $\mathbf{y}_j$. This reweighting scheme assigns more mass to points in sparse regions and less mass to points in dense regions, compensating for non-uniform sampling. The scaling factor $\sqrt{S}$ on the right-hand side ensures that total mass is conserved across the two reweighted marginals.

    The reweighting in Proposition~\ref{prop:density_invariant_eot} is motivated by the goal of discounting the influence of sampling density from the transport plan, so that the resulting transport plan approximates the plan one would obtain if both datasets were drawn according to density-reweighted reference measures from their respective manifolds. In what follows, we first introduce the family of population-level EOT plans $\mathcal{W}^{(\theta)}_{\varepsilon}$ under density-reweighted reference measures. We then show that under mild regularity conditions, $\mathbf{W}^{(\theta)}$ in~\eqref{eq:EOT_solution_form} computed with fixed $\theta \in [0,1]$ and known sampling densities converges uniformly to $\mathcal{W}^{(\theta)}_{\varepsilon}$, up to a global scaling constant, with high probability as $m, n \to \infty$. We subsequently address the practically relevant setting in which the sampling densities are unknown and must be estimated from data.

    We next formalize our assumptions on the data geometry, sampling densities, and dataset sizes.
    \begin{assump}
    \label{assump:manifold_and_density}
        $\mathcal{M}_x$ and $\mathcal{M}_y$ are smooth, compact, boundaryless Riemannian manifolds isometrically embedded in $\mathbb{R}^{D}$, with intrinsic dimension $d_1$ and $d_2$, respectively. The sampling densities $\mu$ and $\nu$ are strictly positive and belong to $\mathcal{C}^3(\mathcal{M}_x)$ and $\mathcal{C}^3(\mathcal{M}_y)$, respectively.
    \end{assump}
    Assumption~\ref{assump:manifold_and_density} is a standard regularity condition in manifold learning, with compactness of the manifolds and smoothness of the densities ensuring that finite-sample estimators concentrate around their population-level counterparts at a controlled rate.
 
    \begin{assump}
    \label{assump:m_and_n}
        $n \geq m \geq n^\gamma$ for some constant $\gamma  \in (0,1)$.
    \end{assump} 
    We note that Assumption~\ref{assump:m_and_n} generalizes to $n \geq m \geq c n^{\gamma}$ for any constant $c > 0$; we set $c = 1$ for simplicity.  
    To state our results, we define the $\theta$-reweighted density functions as
    \begin{equation}\label{eq:tilt_density}
        \mu^{(\theta)}(\mathbf{x}) := \frac{\left(\mu(\mathbf{x})\right)^{1-\theta}}{\int_{\mathcal{M}_x} \left(\mu(\mathbf{x})\right)^{1-\theta}\, d\omega(\mathbf{x})}, \quad 
        \nu^{(\theta)}(\mathbf{y}) := \frac{\left(\nu(\mathbf{y})\right)^{1-\theta}}{\int_{\mathcal{M}_y} \left(\nu(\mathbf{y})\right)^{1-\theta}\, d\eta(\mathbf{y})},
    \end{equation}
    where, as $\theta$ increases from $0$ to $1$, the reweighted densities interpolate between 
    the sampling densities $\mu$, $\nu$ and the uniform densities on their respective manifolds. We define the family of corresponding population-level EOT plans $\mathcal{W}^{(\theta)}_\varepsilon: \mathcal{M}_x \times \mathcal{M}_y \to \mathbb{R}_+$ and the Gaussian kernel $\mathcal{K}_\varepsilon: \mathcal{M}_x \times \mathcal{M}_y \to \mathbb{R}_+$ as
    \begin{equation}
    \label{eq:W_kernel}
    \mathcal{W}^{(\theta)}_\varepsilon(\mathbf{x},\mathbf{y}) = u_\varepsilon^{(\theta)}(\mathbf{x}) \mathcal{K}_\varepsilon(\mathbf{x},\mathbf{y}) v_\varepsilon^{(\theta)}(\mathbf{y}), \qquad \mathcal{K}_\varepsilon(\mathbf{x},\mathbf{y}) = \exp\left\{ \frac{-\|\mathbf{x}-\mathbf{y}\|_2^2}{\varepsilon} \right\},
    \end{equation}
    where $u_\varepsilon^{(\theta)}$ and $v_\varepsilon^{(\theta)}$ are functions that solve the integral equations
    \begin{equation}
    \label{eq:integral_eq}
    \begin{cases}
    \displaystyle \int_{\mathcal{M}_y} \mathcal{W}^{(\theta)}_\varepsilon(\mathbf{x},\mathbf{y}) \, \nu^{(\theta)}(\mathbf{y}) \, d\eta(\mathbf{y}) 
    = \int_{\mathcal{M}_y} u_\varepsilon^{(\theta)}(\mathbf{x}) \mathcal{K}_\varepsilon(\mathbf{x},\mathbf{y}) v_\varepsilon^{(\theta)}(\mathbf{y}) \, \nu^{(\theta)}(\mathbf{y})\, d\eta(\mathbf{y}) = 1, \quad \forall \mathbf{x} \in \mathcal{M}_x; \\[8pt]
    \displaystyle \int_{\mathcal{M}_x} \mathcal{W}^{(\theta)}_\varepsilon(\mathbf{x},\mathbf{y}) \, \mu^{(\theta)}(\mathbf{x})\, d\omega(\mathbf{x}) 
    = \int_{\mathcal{M}_x} u_\varepsilon^{(\theta)}(\mathbf{x}) \mathcal{K}_\varepsilon(\mathbf{x},\mathbf{y}) v_\varepsilon^{(\theta)}(\mathbf{y})\, \mu^{(\theta)}(\mathbf{x}) \, d\omega(\mathbf{x}) = 1, \quad \forall \mathbf{y} \in \mathcal{M}_y. 
    \end{cases}
    \end{equation}
    We note $\mathcal{W}^{(\theta)}_\varepsilon$ is doubly stochastic with respect to the density-reweighted measures $\mu^{(\theta)} \, d\omega$ and $\nu^{(\theta)} \, d\eta$. The scaling functions $u_\varepsilon^{(\theta)}$ and $v_\varepsilon^{(\theta)}$ are guaranteed to exist as positive, continuous functions on $\mathcal{M}_x$ and $\mathcal{M}_y$, respectively, and are unique up to a positive multiplicative constant~\cite{knopp_theory}.

    We now relate the transport plan $\mathbf{W}^{(\theta)}$ in~\eqref{eq:EOT_solution_form}, computed with $\theta \in [0,1]$ and known sampling densities, to the population-level EOT map $\mathcal{W}^{(\theta)}_{\varepsilon}$ defined in~\eqref{eq:W_kernel}, in Theorem~\ref{thm:finite_population_density_known} below. The proof is in Supplement~\ref{sec:proof density known}.

    \begin{theorem}\label{thm:finite_population_density_known}
        Suppose Algorithm~\ref{alg:EOT_invariant} is applied with exact sampling densities $\hat{f}_i = \mu(\mathbf{x}_i)$ and $\hat{g}_j = \nu(\mathbf{y}_j)$, $\forall i \in [m]$, $\forall j \in [n]$, and let $\mathbf{W}^{(\theta)} \in \mathbb{R}^{m\times n}_+$ denote the resulting transport plan in~\eqref{eq:EOT_solution_form}. Then, under Assumptions~\ref{assump:manifold_and_density} and~\ref{assump:m_and_n}, there exist positive constants $\tau_0, n_0(\varepsilon), C'(\varepsilon)$ (which may additionally depend on $\mathcal{M}_x, \mathcal{M}_y, \mu, \nu, \theta$, and $\gamma$) such that for all $\tau \geq \tau_0$, $n \geq n_0(\varepsilon, \tau)$, and $m \geq n^{\gamma}$, with probability at least $1 - n^{-\tau}$,
        \begin{equation}\label{eq:W_ij2W_kernel}
            \left| C^{(\theta)}\, W_{ij}^{(\theta)} - \mathcal{W}^{(\theta)}_{\varepsilon}(\mathbf{x}_i,\mathbf{y}_j) \right| 
            \leq \tau C'(\varepsilon) \sqrt{\frac{\log m}{m}}
        \end{equation}
        holds simultaneously for all $i \in [m]$ and $j \in [n]$, where $C^{(\theta)} = \sqrt{\int_{\mathcal{M}_x} \mu(\mathbf{x})^{1-\theta}\, d\omega(\mathbf{x}) \cdot \int_{\mathcal{M}_y} \nu(\mathbf{y})^{1-\theta}\, d\eta(\mathbf{y})}$. 
    \end{theorem}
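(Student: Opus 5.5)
The plan is to rewrite the discrete scaling problem \eqref{eq:row_col_constraint} as an empirical (Monte Carlo) discretization of the population integral equations \eqref{eq:integral_eq}. Once that identification is made, the convergence of Sinkhorn scaling factors for kernels sampled i.i.d.\ from a compact domain can be handled by a stability argument in the spirit of EOT Eigenmaps~\cite{boris_EOT}.

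\textbf{Step 1 (identification).} With $\hat f_i=\mu(\mathbf{x}_i)$, define $a_i=\alpha_i^{(\theta)}\mu(\mathbf{x}_i)^{-\theta}$. Multiplying the first line of \eqref{eq:row_col_constraint} by $\mu(\mathbf{x}_i)^{\theta}$ gives
\[
\frac1n\sum_j \alpha_i \mathcal{K}_\varepsilon(\mathbf{x}_i,\mathbf{y}_j)\beta_j\,\nu(\mathbf{y}_j)^{-\theta}=S^{-1/2}.
\]
Because $\mathbf{y}_j\sim\nu\,d\eta$, the weights $\nu(\mathbf{y}_j)^{-\theta}/n$ form an unnormalized empirical approximation of $\nu^{1-\theta}d\eta = c_y\,\nu^{(\theta)}d\eta$, where $c_y=\int\nu^{1-\theta}d\eta$. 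The same holds on the $x$ side with $c_x=\int\mu^{1-\theta}d\omega$. By the law of large numbers, $S^{(\theta)}\to c_x/c_y$. Substituting $\tilde u(\mathbf{x}_i)=\alpha_i\sqrt{\cdot}$ and $\tilde v(\mathbf{y}_j)=\beta_j\sqrt{\cdot}$ with appropriate constants, one checks that $(C^{(\theta)}\alpha_i,\beta_j)$, or a symmetric split of the constant, solves an empirical version of \eqref{eq:integral_eq} in which the marginals $\nu^{(\theta)}d\eta$ and $\mu^{(\theta)}d\omega$ are replaced by the normalized weighted empirical measures. This computation explains the factor $C^{(\theta)}=\sqrt{c_xc_y}$. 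I will pin the constants by matching total masses: the total mass of $W$ is $m\sqrt{S}\,\overline{\mu^{-\theta}}\cdots$, and that of $\mathcal W$ is $1$.

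\textbf{Step 2 (concentration).} I would work in the log domain. Write $\phi=\log u$ and $\psi=\log v$, and extend the discrete scalings to functions on the whole manifolds through the Sinkhorn fixed-point formula
\[
\hat u(\mathbf{x})=\Big(\tfrac1n\textstyle\sum_j \mathcal{K}_\varepsilon(\mathbf{x},\mathbf{y}_j)\hat v(\mathbf{y}_j)w_j\Big)^{-1},
\]
where the weights $w_j$ are normalized, and define $\hat v$ analogously. Compactness and $\mathcal{K}_\varepsilon\in[e^{-\mathrm{diam}^2/\varepsilon},1]$ give a priori bounds on $\hat u$ and $\hat v$ that are uniform in the sample. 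Next I apply Hoeffding/Bernstein bounds together with a union bound over the sample points, or over a net of the manifold using Lipschitz continuity of $\mathcal{K}_\varepsilon$. These bounds show that the empirical integral operators applied to the population scalings $u_\varepsilon^{(\theta)}$ and $v_\varepsilon^{(\theta)}$ deviate from their population values by $O(\tau\sqrt{\log m/m})$, and likewise for $S^{(\theta)}$ and the normalizations. The probability is at least $1-n^{-\tau}$, using $m\ge n^\gamma$ so that $\log n\lesssim\log m$. The weights $\nu^{-\theta}$ are bounded because the densities are positive and continuous on compact manifolds.

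\textbf{Step 3 (stability, the main obstacle).} Here I must show that small residuals in the Sinkhorn equations yield small errors in the scalings, uniformly in the sample. I will use the Hilbert projective metric: a positive kernel bounded between $e^{-\mathrm{diam}^2/\varepsilon}$ and $1$ is a strict contraction, with Birkhoff coefficient $\kappa<1$ depending on $\varepsilon$. Consequently the population scalings, plugged into the empirical equations, are a fixed point up to an $O(\sqrt{\log m/m})$ perturbation, and the empirical solution lies within $O\big((1-\kappa)^{-1}\sqrt{\log m/m}\big)$ of them in Hilbert metric. Fixing the scale ambiguity by the mass normalization from Step 1 converts this projective bound into a sup-norm bound on $\log\alpha_i$ and $\log\beta_j$. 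A sup-norm bound on the logarithms of two bounded functions then yields an additive bound on the products $\alpha_i\mathcal{K}_\varepsilon\beta_j$, which gives \eqref{eq:W_ij2W_kernel} with $C'(\varepsilon)$ absorbing $\kappa$ and the a priori bounds. The delicate parts are handling the extra randomness in $S^{(\theta)}$ and the mismatch between normalized and unnormalized weights. I expect both to contribute only additional $O(\sqrt{\log m/m})$ terms.
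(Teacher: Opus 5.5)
Your proof is correct, and its first two steps match the paper's. The paper also plugs in the population scalings $\hat\alpha_i=u_\varepsilon^{(\theta)}(\mathbf{x}_i)/\sqrt{\Zx}$ and $\hat\beta_j=v_\varepsilon^{(\theta)}(\mathbf{y}_j)/\sqrt{\Zy}$, which is the constant split your Step~1 arrives at. It then shows by Hoeffding, conditioning on $\mathbf{x}_i$ (resp.\ $\mathbf{y}_j$) with a union bound using $m\ge n^\gamma$, that all row and column residuals are $1+\widetilde{\mathcal{O}}_m(\sqrt{\log m/m})$. Only residuals at sample points are needed, so you can drop your extension of the scalings to functions and the net argument.

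The routes differ in the stability step. The paper invokes the quantitative matrix-scaling lemma of EOT Eigenmaps (Lemma~\ref{lem:scaling_factor_concen}). That lemma directly produces a genuine scaling pair with entrywise relative error $O_\varepsilon(\delta)$. The cost is checking how $a,b,s,C_1,C_2,\min_i r_i,\min_j c_j$ scale with $m$ and $n$ so that the coefficient stays bounded. The scale ambiguity then disappears because $\alpha_i\beta_j$ is invariant under Lemma~\ref{lemma:existence_uniqueness_scaling}.

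Your Birkhoff--Hopf route uses the fact that coordinatewise multiplication by positive vectors, global scalars, and coordinatewise inversion are all Hilbert-metric isometries. So the weights $\mu^{-\theta},\nu^{-\theta}$, the marginals $\mathbf{r},\mathbf{c}$ and the random $S^{(\theta)}$ all drop out of the projective estimate. The cross-ratios of $\mathcal{K}_\varepsilon$ equal $\exp\bigl(2\langle \mathbf{x}_i-\mathbf{x}_k,\mathbf{y}_j-\mathbf{y}_l\rangle/\varepsilon\bigr)$. Hence the contraction ratio $\kappa=\tanh(\Delta/4)$, with $\Delta\le 2\,\mathrm{diam}(\mathcal{M}_x)\,\mathrm{diam}(\mathcal{M}_y)/\varepsilon$, is deterministic and needs no $m,n$ bookkeeping. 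You obtain $d_H(\hat{\boldsymbol\alpha},\boldsymbol\alpha)+d_H(\hat{\boldsymbol\beta},\boldsymbol\beta)=O(\delta/(1-\kappa))$, with a transparent dependence on $\varepsilon$.

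What you pay is an explicit scale-fixing step, and there you must use the right mass. The constraints do not determine $\sum_{ij}W_{ij}$. They do fix the weighted mass $\sum_{ij}\alpha_i M^{(\theta)}_{ij}\beta_j=s=\frac{nm}{\sqrt{S^{(\theta)}}}\cdot\frac1m\sum_i\mu(\mathbf{x}_i)^{-\theta}$. The plugged-in pair matches $s$ up to a factor $1+O(\delta)$, and the mass expression in your Step~1 should be this weighted one.

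The quantity $\log\bigl(\hat\alpha_i\hat\beta_j/(\alpha_i\beta_j)\bigr)$ ranges over an interval whose length is at most the sum of the two Hilbert distances. The mass ratio is a weighted average of its exponentials. Together these give every such log-ratio as $O(\delta/(1-\kappa))$. State this conclusion for the products $\alpha_i\beta_j$, since $\log\alpha_i$ and $\log\beta_j$ individually are only defined up to the $(c\boldsymbol\alpha,c^{-1}\boldsymbol\beta)$ ambiguity.
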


    \begin{wrapfigure}{r}{0.42\textwidth}
        \centering
        \includegraphics[width=0.4\textwidth]{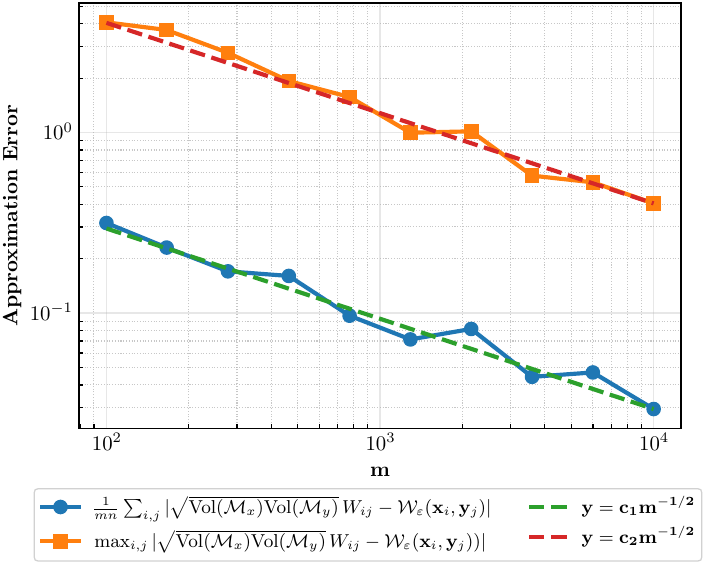}
        \caption{Empirical evaluation of the convergence rate of the reweighted EOT plan. Both error metrics align with~\eqref{eq:W_ij2W_kernel}. Here $C^{(\theta=1)} = \sqrt{\mathrm{Vol}(\mathcal{M}_x)\,\mathrm{Vol}(\mathcal{M}_y)}$, where $\mathrm{Vol}(\mathcal{M}_x)$ and $\mathrm{Vol}(\mathcal{M}_y)$ denote the Riemannian volumes of $\mathcal{M}_x$ and $\mathcal{M}_y$, respectively. See Supplement~\ref{sec:demo} for the experiment details. }
        \vspace{-8pt}
        \label{fig:demo_convergence}
    \end{wrapfigure}

    Theorem~\ref{thm:finite_population_density_known} shows that for any fixed $\varepsilon$ and sufficiently large $m$ and $n$, the entries $W_{ij}^{(\theta)}$ concentrate around $\mathcal{W}^{(\theta)}_{\varepsilon}(\mathbf{x}_i,\mathbf{y}_j)/C^{(\theta)}$ simultaneously for all pairs $(i,j)$ with high probability. It further provides an explicit probabilistic bound on the approximation error that depends on the smaller dataset size $m$ and decays at the standard sample-to-population rate $m^{-1/2}$ up to logarithmic factors. In particular, Theorem~\ref{thm:finite_population_density_known} demonstrates that setting $\theta = 1$ in Algorithm~\ref{alg:EOT_invariant} effectively removes the dependence of $\mathbf{W}^{(\theta)}$ on sampling density, recovering a transport plan that depends solely on the underlying geometry of the manifolds. We numerically demonstrate the convergence rate using the same simulation setup as Figure~\ref{fig:demo} with $\theta = 1$. As shown in Figure~\ref{fig:demo_convergence}, the approximation error in both normalized $\ell_1$ and $\ell_\infty$ norms
    decay approximately like $m^{-1/2}$. Although this stylized example has boundary and piecewise-smooth sampling density (hence outside the formal assumptions of Theorem~\ref{thm:finite_population_density_known}), it illustrates the predicted convergence behavior.

    In practice, the sampling densities $\mu$ and $\nu$ are usually unknown and need to be estimated from data. A natural approach is kernel density estimation (KDE): given $N$ points $\{\mathbf{z}_i\}_{i=1}^N$ in $\mathbb{R}^D$, the KDE with bandwidth $h > 0$ is given by 
    \begin{equation}\label{eq:KDE_classical}
        \hat{\rho}(\mathbf{z}) = \frac{1}{N h^{d/2}}\sum_{i=1}^{N} K\!\left(\frac{\mathbf{z} - \mathbf{z}_i}{\sqrt{h}}\right),
    \end{equation}
    where $K$ is a kernel function. If the points are sampled from a $d$-dimensional manifold embedded in $\mathbb{R}^D$, and $h\rightarrow 0$ sufficiently slowly as $N\rightarrow\infty$, then the KDE converges pointwise to the underlying sampling density. Moreover, the bandwidth can be tuned optimally to achieve a mean squared error of $\mathcal{O}(N^{-4/(d+4)})$~\cite{kde_ref}. Since the form of the KDE~\eqref{eq:KDE_classical} depends explicitly on the intrinsic dimension $d$, which is typically unknown in practice, we consider a mean-normalized KDE whose construction does not require knowledge of $d$. 
    \begin{definition}\label{def:self_normalized_kde}
        For a fixed bandwidth $h > 0$ and kernel $K_h(x,y) = \exp(-\|x-y\|_2^2/h)$, the mean-normalized density estimator at any $\mathbf{x}_i \in \mathbf{X} = \{\mathbf{x}_i\}_{i=1}^m \subset \mathcal{M}$ is defined as
        \begin{equation}\label{eq:self_KDE}
            \hat{f}(\mathbf{x}_i) := \frac{\hat{q}(\mathbf{x}_i)}{\hat{Z}}, \qquad \qquad 
            \hat{q}(\mathbf{x}_i) := \frac{1}{m-1}\sum_{j \neq i}^m K_h(\mathbf{x}_i, \mathbf{x}_j), \qquad \qquad  
            \hat{Z} := \frac{1}{m}\sum_{i=1}^m \hat{q}(\mathbf{x}_i).
        \end{equation}
    \end{definition}
    This construction is motivated by the observation that the optimal $\mathbf{W}^{(\theta)}$ in Proposition~\ref{prop:density_invariant_eot} is invariant to per-dataset multiplicative scalings of the density estimates: if $\hat{f}_i$ and $\hat{g}_j$ approximate $\mu(\mathbf{x}_i)$ and $\nu(\mathbf{y}_j)$ each up to a global constant, the resulting transport plan is rescaled by a positive global constant, leaving the correspondence structure intact. 

    We now establish that the transport plan $\mathbf{W}^{(\theta)}$ computed with a fixed $\theta \in [0,1]$ using the mean-normalized KDE converges, up to a global scaling constant, to the population EOT plan $\mathcal{W}^{(\theta)}_{\varepsilon}$ from~\eqref{eq:W_kernel}. The proof is given in Supplement~\ref{sec:proof density estimated}.

    \begin{theorem}\label{thm:finite_population_density_estimated}
        Let $\hat{\mathbf{f}}$ and $\hat{\mathbf{g}}$ be the mean-normalized density estimators from~\eqref{eq:self_KDE} applied to $\mathbf{X}$ and $\mathbf{Y}$ with bandwidths $h_1$ and $h_2$, respectively. Suppose Algorithm~\ref{alg:EOT_invariant} uses $\hat{f}_i = \hat{\mathbf{f}}(\mathbf{x}_i)$ and $\hat{g}_j = \hat{\mathbf{g}}(\mathbf{y}_j)$. Then, under Assumptions~\ref{assump:manifold_and_density} and~\ref{assump:m_and_n}, there exist positive constants $\tau_0$ and $C'(\varepsilon)$ such that the following holds. For every $\tau \ge \tau_0$, one can choose positive constants $n_0(\varepsilon,\tau)$, $\kappa_1(\varepsilon,\tau)$, $\kappa_2(\varepsilon,\tau)$, $h_1'(\varepsilon,\tau) \le 1$, and $h_2'(\varepsilon,\tau) \le 1$ (which may additionally depend on $\mathcal{M}_x, \mathcal{M}_y, \mu, \nu, \theta$, and $\gamma$) such that for all $n \ge n_0(\varepsilon,\tau)$, $m \ge n^{\gamma}$ (under Assumption~\ref{assump:m_and_n}), and all bandwidths $h_1, h_2$ satisfying
        \begin{equation}\label{eq:bandwidth_window}
            \left( \frac{\kappa_1 \log m}{m} \right)^{1/d_1} \le h_1 \le h_1',
            \qquad
            \left( \frac{\kappa_2 \log n}{n} \right)^{1/d_2} \le h_2 \le h_2',
        \end{equation}
        we have with probability at least $1 - n^{-\tau} - 4n^{-2\gamma}$,
        \begin{equation}\label{eq:W_ij2W_kernel_est}
            \left| C^{(\theta)'} W_{ij}^{(\theta)} - \mathcal{W}^{(\theta)}_{\varepsilon}(\mathbf{x}_i,\mathbf{y}_j) \right|
            \le \tau C'(\varepsilon) \left( h_1 + \sqrt{\frac{\log m}{m h_1^{d_1}}} + h_2 + \sqrt{\frac{\log n}{n h_2^{d_2}}} \right)
        \end{equation}
        holds simultaneously for all $i \in [m]$ and $j \in [n]$, where \sloppy$C^{'(\theta)} = (\|\mu\|_{L^2}\|\nu\|_{L^2})^{\theta} \cdot  \sqrt{\int_{\mathcal{M}_x} \mu(\mathbf{x})^{1-\theta}\, d\omega(\mathbf{x}) \cdot \int_{\mathcal{M}_y} \nu(\mathbf{y})^{1-\theta}\, d\eta(\mathbf{y})};$ with $\|\mu\|_{L^2} = \sqrt{\int_{\mathcal{M}_x} \mu^2\,d\omega}$ and $\|\nu\|_{L^2} = \sqrt{\int_{\mathcal{M}_y} \nu^2\,d\eta}$.
    \end{theorem}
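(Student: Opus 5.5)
The plan is to reduce Theorem~\ref{thm:finite_population_density_estimated} to Theorem~\ref{thm:finite_population_density_known}. We view Algorithm~\ref{alg:EOT_invariant} with estimated densities as a perturbation of the algorithm with exact densities, up to a global scaling.

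\textbf{Step 1: uniform accuracy of the estimators.} First we establish a uniform relative error bound for the mean-normalized KDE. Write $q_h(\mathbf{x}) = \int_{\mathcal{M}_x} K_{h}(\mathbf{x},\mathbf{x}')\mu(\mathbf{x}')\,d\omega(\mathbf{x}')$. By the standard manifold expansion (Assumption~\ref{assump:manifold_and_density}, using normal coordinates and $\mathcal{C}^3$ densities),
\[
q_h(\mathbf{x}) = (\pi h)^{d_1/2}\bigl(\mu(\mathbf{x}) + \mathcal{O}(h)\bigr)
\]
uniformly in $\mathbf{x}$. Conditioning on $\mathbf{x}_i$ and applying Bernstein's inequality to the leave-one-out sum gives
\[
|\hat q(\mathbf{x}_i) - q_h(\mathbf{x}_i)| \lesssim h^{d_1/2}\sqrt{\log m/(m h_1^{d_1/2\cdot 2})}.
\]
After normalizing by $h^{d_1/2}$, this is a relative error of order $\sqrt{\log m/(m h_1^{d_1})}$. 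A union bound over $i\in[m]$ holds with probability $1-m^{-2}\ge 1-n^{-2\gamma}$. The lower bandwidth constraint in~\eqref{eq:bandwidth_window} guarantees that this relative error is below a small constant, so $\hat q$ stays bounded away from zero. Next, $\hat Z$ concentrates around
\[
(\pi h)^{d_1/2}\left(\int \mu^2\,d\omega + \mathcal{O}(h)\right),
\]
since it is an average of $\mu$-weighted quantities and is a U-statistic, controlled by the same bound. Hence, uniformly in $i$,
\[
\hat f_i = \frac{\mu(\mathbf{x}_i)}{\|\mu\|_{L^2}^2}\,\bigl(1+\delta_i\bigr), \qquad |\delta_i| \lesssim h_1 + \sqrt{\frac{\log m}{m h_1^{d_1}}},
\]
and similarly for $\hat g_j$. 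These two events account for the $4n^{-2\gamma}$ term in the probability bound.

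\textbf{Step 2: removing the global constants.} The plan $\mathbf{W}^{(\theta)}$ is equivariant under global rescaling of $\hat f$ and $\hat g$. Replacing $\hat f$ by $c\hat f$ multiplies $M$ by $c^{-\theta}$. It also multiplies $S$ by $c^{-\theta}$, and hence the right-hand sides of~\eqref{eq:row_col_constraint} by $c^{-\theta/2}$. Absorbing these factors into $\alpha$ and $\beta$ rescales $\mathbf{W}$ by $c^{\theta/2}$. With $c=\|\mu\|_{L^2}^{-2}$ and the analogous constant for $\nu$, we may therefore work with $\tilde f_i = \mu(\mathbf{x}_i)(1+\delta_i)$ and $\tilde g_j = \nu(\mathbf{y}_j)(1+\delta'_j)$. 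Doing so produces exactly the extra factor $(\|\mu\|_{L^2}\|\nu\|_{L^2})^{\theta}$ appearing in $C^{(\theta)'}$.

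\textbf{Step 3: stability of the Sinkhorn scaling.} This is the main obstacle. We must show that multiplicative perturbations $(1+\delta_i)$ and $(1+\delta_j')$ of the weights change $W_{ij}$ by at most $\mathcal{O}(\max|\delta|+\max|\delta'|)$ uniformly. The approach is to write the scaling problem as the fixed point of the Sinkhorn map in Hilbert's projective metric. The kernel $\mathcal{K}_\varepsilon$ is bounded above and below on compact manifolds, so the Birkhoff contraction coefficient is uniformly less than one. A perturbation of the row and column weights of relative size $\eta$ then moves the fixed point by $\mathcal{O}(\eta/(1-\kappa))$ in the projective metric. Pinning down the global scale through the total-mass identity from $\sqrt{S}$ converts this into an entrywise relative bound on $\alpha_i$ and $\beta_j$. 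Since $\mathbf{W}^{(\theta)}$ computed with the exact densities is uniformly bounded by Theorem~\ref{thm:finite_population_density_known}, a relative error of size $\eta$ translates into an absolute error of size $\eta\,C(\varepsilon)$. The constants depend on $\varepsilon$ through the ratio $e^{\mathrm{diam}^2/\varepsilon}$.

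\textbf{Step 4: combining the bounds.} Finally, apply the triangle inequality: the perturbation bound from Steps~1--3, plus Theorem~\ref{thm:finite_population_density_known} for the exact-density plan. The term $\sqrt{\log m/m}$ from Theorem~\ref{thm:finite_population_density_known} is dominated by $\sqrt{\log m/(m h_1^{d_1})}$ because $h_1\le 1$. A union bound over the events then yields~\eqref{eq:W_ij2W_kernel_est} with probability at least $1-n^{-\tau}-4n^{-2\gamma}$.
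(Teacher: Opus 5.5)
Your proposal is correct, but it takes a different route from the paper. The paper does not use Theorem~\ref{thm:finite_population_density_known} as a black box. Instead it reruns that argument directly with the estimated densities. It takes $\hat\alpha_i = u_\varepsilon^{(\theta)}(\mathbf{x}_i)/\bigl(\|\mu\|_{L^2}^{\theta}\sqrt{Z_x^{(\theta)}}\bigr)$ and $\hat\beta_j = v_\varepsilon^{(\theta)}(\mathbf{y}_j)/\bigl(\|\nu\|_{L^2}^{\theta}\sqrt{Z_y^{(\theta)}}\bigr)$ as approximate scaling factors, and expands $\hat f_i^{-\theta}$, $\hat g_j^{-\theta}$ and $S^{(\theta)}$ on the KDE event. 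It then checks that this pair scales $\mathbf{M}^{(\theta)}$ to the prescribed marginals up to a factor $1+\widetilde{\mathcal{O}}_{m,n}(\rho_1+\rho_2)$, where $\rho_1,\rho_2$ are the two KDE rates, and invokes Lemma~\ref{lem:scaling_factor_concen} once. So sampling error and density-estimation error enter through a single $\delta$, and the factor $(\|\mu\|_{L^2}\|\nu\|_{L^2})^{\theta}$ is built into $\hat\alpha,\hat\beta$.

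You instead separate the two error sources. Your rescaling-equivariance step is correct, including the factor $c^{\theta/2}$, and it reduces the problem to relative perturbations $(1+\delta_i)$, $(1+\delta_j')$ of the exact densities. The Birkhoff--Hilbert contraction then gives a deterministic relative bound on $\alpha_i\beta_j$ that does not depend on $m,n$. This works because the density factors cancel in the cross-ratios $M_{ij}M_{kl}/(M_{il}M_{kj})$, so the contraction coefficient depends only on $\mathrm{diam}^2/\varepsilon$.

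What your version buys is modularity. Any density estimator with uniform relative accuracy plugs in, the stability step needs no probability, and the event bookkeeping reduces to the KDE events plus the event of Theorem~\ref{thm:finite_population_density_known}. The paper's version buys self-containment, since it uses a single stability tool throughout.

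Two smaller remarks. First, the total-mass pinning in your Step~3 is only needed to reconcile the separate projective scales of $\boldsymbol{\alpha}$ and $\boldsymbol{\beta}$. If you define $\tilde{\boldsymbol{\alpha}}$ from $\tilde{\boldsymbol{\beta}}$ through the row equations, that scale cancels automatically. Second, you could avoid the projective metric entirely by applying Lemma~\ref{lem:scaling_factor_concen} with the approximate pair given by the exact-density scaling factors multiplied by $(1+\delta_i)^{\theta}$ and $(1+\delta_j')^{\theta}$. On the event of Theorem~\ref{thm:finite_population_density_known}, that pair reproduces the exact marginals up to $1+\mathcal{O}(\eta)$, and its constants $a,b,C_1,C_2,s$ are controlled.
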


    Theorem~\ref{thm:finite_population_density_estimated} extends Theorem~\ref{thm:finite_population_density_known} to the practical setting where the sampling densities $\mu$ and $\nu$ are unknown and must be estimated from the data. The probabilistic bound in~\eqref{eq:W_ij2W_kernel_est} depends explicitly on the bandwidths $h_1$ and $h_2$, the sample sizes $m$ and $n$, and the intrinsic dimensions $d_1$ and $d_2$. Choosing $h_1 \asymp \left(\frac{\log m}{m}\right)^{1/(d_1+2)}$ and $h_2 \asymp \left(\frac{\log n}{n}\right)^{1/(d_2+2)}$ balances the bias-like and variance-like terms for each density estimator in~\eqref{eq:W_ij2W_kernel_est}, yielding a convergence rate of \sloppy $\mathcal{O}\!\left( \left(\frac{\log m}{m}\right)^{1/(d_1+2)} + \left(\frac{\log n}{n}\right)^{1/(d_2+2)} \right)$. A data-driven approach to set the bandwidths is described in the next section.

\section{Experiments}\label{sec:exp_main}

    In this section, we consider the task of aligning two datasets generated by sampling from the same underlying manifold up to a constant shift, but with substantially different sampling densities. We demonstrate that our density-reweighted EOT framework recovers geometrically faithful correspondences under this setting, while standard EOT and UOT (across multiple hyperparameter configurations) fail systematically.

    The experiment involves two manifolds $\mathcal{M}_x$ and $\mathcal{M}_y$, each composed of the union of two circular arcs embedded in $\mathbb{R}^2$. Specifically, $\mathcal{M}_x$ consists of a full unit circle centered at the origin and a semi-circle of radius $2$ centered at $(4, 1)$. The manifold $\mathcal{M}_y$ is obtained by translating $\mathcal{M}_x$ by $(3, 5)^\top$. Datasets $\mathbf{X} = \{\mathbf{x}_i\}_{i=1}^m$ and $\mathbf{Y} = \{\mathbf{y}_j\}_{j=1}^n$ are sampled from $\mathcal{M}_x$ and $\mathcal{M}_y$ respectively, with opposing sampling densities over their components. Specifically, each $\mathbf{x}_i$ is drawn from the unit circle with probability $0.8$ and from the semi-circle with probability $0.2$, while each $\mathbf{y}_j$ reverses these weights: drawn from the unit circle with probability $0.2$ and from the semi-circle with probability $0.8$. Additional density variations are imposed within each component; see Supplement~\ref{sec:sim_detail} for details and Figure~\ref{fig:sim_result}(a) for a visualization of the sampling densities on each manifold. 

    We compare our approach against standard EOT and UOT, with the latter tested across multiple hyperparameter configurations. Performance is measured by the geometric fidelity of the correspondences encoded in the resulting transport plans, using the $\overline{R_k}$ metric:
    \begin{equation}\label{eq:recall_k}
        \overline{R_k}
        =
        \frac{1}{2}\left(
        R_k(\mathbf{X},\mathbf{Y})
        +
        R_k(\mathbf{Y},\mathbf{X})
        \right),
        \end{equation}
        where
        \begin{equation}
        R_k(\mathbf{S},\mathbf{T})
        =
        \frac{1}{C_{\mathbf{S}}}
        \sum_{c=1}^{C_{\mathbf{S}}}
        \frac{1}{n_c}
        \sum_{i=1}^{n_c}
        \frac{
        \left|
        \mathcal{K}^{(\mathbf{W})}_k(\mathbf{s}_i,\mathbf{T})
        \cap
        \mathcal{K}^{(\mathrm{true})}_k(\mathbf{s}_i,\mathbf{T})
        \right|
        }{k}.
    \end{equation}
    Here $(\mathbf{S},\mathbf{T})$ is either $(\mathbf{X},\mathbf{Y})$ or $(\mathbf{Y},\mathbf{X})$, with $\mathbf{S}$ and $\mathbf{T}$ denoting the source and target datasets, respectively, and $\mathbf{s}_i$ denoting the $i$-th point in the $c$-th component of $\mathbf{S}$. Here $C_{\mathbf{S}}$ is the number of disjoint components of the manifold from which $\mathbf{S}$ is sampled (e.g., $C_{\mathbf{X}}=C_{\mathbf{Y}}=2$), $n_c$ is the number of points in $\mathbf{S}$ belonging to the $c$-th component, $\mathcal{K}^{(\mathrm{true})}_k(\mathbf{s}_i,\mathbf{T})$ is the set of true $k$ nearest neighbors of $\mathbf{s}_i$ in $\mathbf{T}$, and $\mathcal{K}^{(\mathbf{W})}_k(\mathbf{s}_i,\mathbf{T})$ is the set of $k$ points in $\mathbf{T}$ with the highest coupling weights from $\mathbf{s}_i$ under plan $\mathbf{W}^{(\theta)}$.
    The metric macro-averages over components so that each geometric component contributes equally regardless of sampling density imbalances, and averages symmetrically over both transfer directions ($\mathbf{X}\rightarrow\mathbf{Y}$ and $\mathbf{Y}\rightarrow\mathbf{X}$) so that the score does not favor either direction. In our experiment, $\mathcal{K}^{(\mathrm{true})}_k$ is computed by translating $\mathbf{Y}$ back by the known shift to obtain $\mathbf{Y}'$, and finding nearest neighbors between $\mathbf{X}$ and $\mathbf{Y}'$ via pairwise Euclidean distances.

    We next describe the hyperparameter configurations for each method, beginning with parameters specific to each approach, after which we describe how the entropic regularization parameter $\varepsilon$ is selected. Our method requires two kernel bandwidths for the density estimation in the mean-normalized KDE~\eqref{eq:self_KDE}, which we select via a data-driven procedure combining bootstrap resampling~\cite{bootstrap} with Lepski's method~\cite{lepski}. Lepski's method considers a geometric grid of candidate bandwidths and selects the largest bandwidth $h$ for which the kernel density estimate does not differ significantly from those at any smaller bandwidth in the grid (i.e., the discrepancy remains within the expected stochastic fluctuation). Intuitively, it chooses the coarsest resolution at which no additional statistically significant structure is revealed at finer scales: for any $\eta < h$ in the grid, the difference of density estimates $\hat{f}_h - \hat{f}_\eta$ is small enough to be explained by stochastic variability rather than systematic bias. Bootstrap resampling is used to empirically quantify this stochastic variability. The full procedure is detailed in Algorithm~\ref{alg:bootstrap_lepski} in Supplement~\ref{sec:sim_detail}. For UOT, 
    we set $\lambda_1 = \lambda_2 =: \lambda$ in~\eqref{eq:UOT_problem} and test a grid of values for $\lambda$. All three methods require choosing an entropic regularization parameter $\varepsilon$, which controls the diffusiveness of the resulting transport plan. We select $\varepsilon$ for each method (with other hyperparameters fixed) by maximizing the number of mutual $k$-nearest neighbor (MKNN) pairs in the resulting plan, which serves as a proxy for alignment consistency. Specifically, $\mathbf{x}_i$ and $\mathbf{y}_j$ form an MKNN pair if each is among the top-$k$ maximum-coupling points of the other, reflecting a mutually agreed-upon correspondence. When $\varepsilon$ is too small, the transport plan is overly sparse and few MKNN pairs exist; when $\varepsilon$ is too large, the plan becomes nearly uniform and MKNN pairs are again scarce as correspondences lose specificity. Selecting the $\varepsilon$ that maximizes the number of MKNN pairs at a prespecified $k$ thus strikes a balance between these two extremes, yielding the greatest alignment consistency. See Supplement~\ref{sec:sim_detail} for details. 

    \begin{figure}
        \centering
        \includegraphics[width=\linewidth]{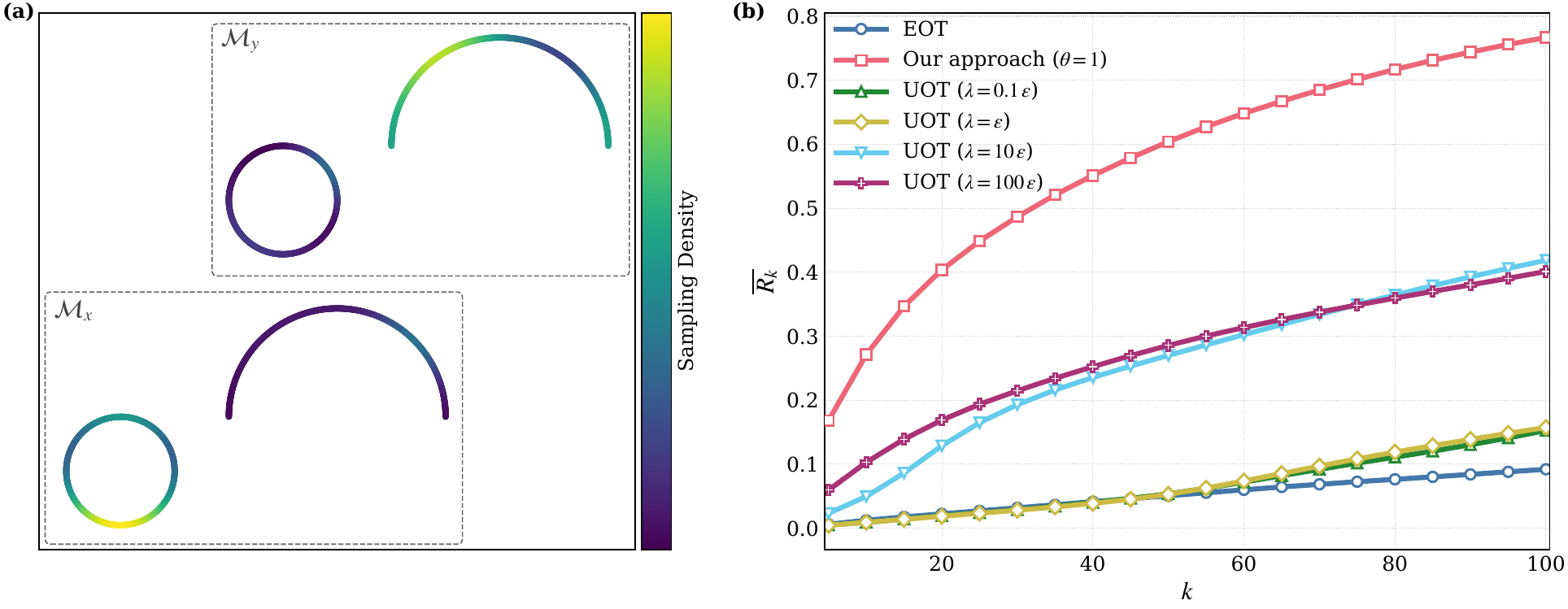}
        \caption{\textbf{Synthetic alignment under opposing sampling densities.} \textbf{(a)} Visualization of $\mathcal{M}_x$ and $\mathcal{M}_y$ colored by sampling density (yellow: high; purple: low). \textbf{(b)} Alignment quality $\overline{R_k}$ in~\eqref{eq:recall_k} as a function of $k \in [5, 100]$ for all methods and hyperparameter configurations examined.}
        \label{fig:sim_result}
    \end{figure}

    In Figure~\ref{fig:sim_result}(b), we report $\overline{R_k}$  averaged over 10 independent trials for all methods across $k \in [5, 100]$. Our approach ($\theta = 1$) substantially outperforms all other methods at every neighborhood size, with $\overline{R_k}$ consistently exceeding that of the best UOT configuration by an approximate factor of $2\times$ or more, and that of standard EOT by nearly an order of magnitude across all examined $k$. Standard EOT performs worst across all $k$, confirming that correspondences derived without accounting for heterogeneous sampling densities are systematically confounded by local data density. UOT partially mitigates this failure by relaxing the marginal constraints, but its performance varies substantially across hyperparameter configurations. Such sensitivity to $\lambda$ may limit the practical utility of UOT, as dataset alignment is typically unsupervised and provides no natural criterion for choosing the degree of marginal relaxation.

    \section*{Acknowledgments}
    The authors used Anthropic Claude Sonnet and OpenAI ChatGPT to assist with language editing, exposition, and technical consistency checks. This work was supported by NIH grants UM1PA051410, U54AG076043, U54AG079759, U01DA053628, P50CA121974, and R33DA047037.

\begin{appendices}
    \section{Details of Experiments and Additional Results}\label{sec:exp_detail}

    \subsection{Experimental Setup for Figures~\ref{fig:demo}, \ref{fig:demo_alpha}, and \ref{fig:demo_convergence}}~\label{sec:demo}
        We consider a two-dimensional toy example to illustrate the effect of sampling densities on EOT plans. The source dataset $\mathbf{X} = \{\mathbf{x}_i\}_{i=1}^n = \{(x_{i1}, x_{i2})\}_{i=1}^n$ lies on the line $x_{i2} = x_{i1}$, and the target dataset $\mathbf{Y} = \{\mathbf{y}_j\}_{j=1}^m = \{(y_{j1}, y_{j2})\}_{j=1}^m$ lies on the curve $y_{j2} = 2 + y_{j1} + 0.5\,y_{j1}^2$. Both datasets share the first-coordinate domain $[0,1]$ but differ substantially in sampling density. Specifically, for the dataset $\mathbf{X}$, the first coordinates $x_{i1}$ are drawn uniformly from $[0, 0.5]$ with probability $0.9$ and uniformly from $[0.5, 1]$ with probability $0.1$; the dataset $\mathbf{Y}$ is generated analogously with the probabilities reversed. The second coordinate of data point in each dataset is then determined by the respective curve equation applied to its first coordinate.

        For computing the transport plan, standard EOT imposes uniform empirical measures as marginal constraints (i.e, the row and columns sums are $n \cdot \mathbf{1}_m$ and $m\cdot \mathbf{1}_n$, respectively), whereas our method solves for $\mathbf{W}^{(\theta)}$ via Algorithm~\ref{alg:EOT_invariant} using the ground-truth sampling densities. The entropic regularization parameter is fixed at $\varepsilon = 5 \times 10^{-2}$ for both methods.

        \textbf{Figure 1.} The sample sizes are $n = 3{,}000$ and $m = 2{,}000$. We set $\theta = 1$ in Algorithm~\ref{alg:EOT_invariant}.

        \textbf{Figure 2.} The sample sizes are $n = 3{,}000$ and $m = 2{,}000$. We vary $\theta$ over a uniform grid on $[0, 1]$ and compute the corresponding transport plan $\mathbf{W}^{(\theta)}$ for each value via Algorithm~\ref{alg:EOT_invariant}.

        \textbf{Figure 3.} The population-level EOT plan $\mathcal{W}_{\varepsilon}$ in~\eqref{eq:W_kernel} is computed via the following interpolation procedure. We first construct a fine grid of $N = 10{,}000$ points placed equally spaced with respect to arc length on each manifold, and solve standard EOT on this $N \times N$ grid via the Sinkhorn--Knopp algorithm, yielding scaling vectors $\hat{\mathbf{u}}, \hat{\mathbf{v}} \in \mathbb{R}^N$. These vectors are then extended to continuous scaling functions $\tilde{u}_\varepsilon$ and $\tilde{v}_\varepsilon$ via cubic spline interpolation in the first coordinate, yielding approximations to ${u}_\varepsilon$ and ${v}_\varepsilon$ defined in~\eqref{eq:integral_eq}. For each dataset, the approximate population-level EOT plan is then calculated entrywise as
        \begin{equation}
            \tilde{\mathcal{W}}_\varepsilon\!\bigl(\mathbf{x}_i, \mathbf{y}_j\bigr) = \tilde{u}_\varepsilon(x_{i1})\exp\!\Bigl(-\tfrac{\|\mathbf{x}_i - \mathbf{y}_j\|^2}{\varepsilon}\Bigr)\tilde{v}_\varepsilon(y_{j1}), \qquad i \in [m],\ j \in [n].
        \end{equation}
    
        To generate the convergence curve, we vary $m$ over a log-uniform grid of $10$ values on $[100, 10{,}000]$ and set $n = 2m$. For each value of $m$, the experiment is repeated $20$ times independently and the results are averaged to reduce variability. The two error metrics considered are:
        \begin{equation}
        \begin{aligned}
            E_1(m) &= \frac{1}{mn}\sum_{i=1}^{m}\sum_{j=1}^{n}
            \Bigl|\sqrt{\mathrm{Vol}(\mathcal{M}_x)\,\mathrm{Vol}(\mathcal{M}_y)}\,W_{ij}
            - \mathcal{W}_\varepsilon(\mathbf{x}_i,\mathbf{y}_j)\Bigr|, \\
            E_\infty(m) &= \max_{i,j}
            \Bigl|\sqrt{\mathrm{Vol}(\mathcal{M}_x)\,\mathrm{Vol}(\mathcal{M}_y)}\,W_{ij}
            - \mathcal{W}_\varepsilon(\mathbf{x}_i,\mathbf{y}_j)\Bigr|,
        \end{aligned}
        \end{equation}
        where $\mathrm{Vol}(\mathcal{M}_x)$ and $\mathrm{Vol}(\mathcal{M}_y)$ are computed analytically via arc-length integration.

    \subsection{Estimation Performance of mean-normalized KDE}\label{sec:self_kde_prop}

    \begin{lemma}
    \label{lem:self-normalizing kde}
    Let $\mathcal{M}_x \subset \mathbb{R}^{D}$ be a smooth, compact, 
    boundaryless $d$-dimensional Riemannian manifold isometrically embedded in
    $\mathbb{R}^{D}$, let $\mu \in \mathcal{C}^{3}(\mathcal{M}_x)$ be a positive
    probability density on $\mathcal{M}_x$, and let
    $\mathbf{X} = \{\mathbf{x}_i\}_{i=1}^{m} \overset{\mathrm{iid}}{\sim} \mu$. Let
    $\hat f(\mathbf{x}_i)$ be the mean-normalized density estimator defined
    in~\eqref{eq:self_KDE} with bandwidth $h$. Then there exist positive constants
    $h_0(\mathcal{M}_x, \mu) \le 1$, $\kappa(\mathcal{M}_x, \mu)$,
    $C_0(\mathcal{M}_x, \mu)$, and $m_0(h_0, \kappa) \ge 2$ such that for all
    $m \ge m_0(h_0,\kappa)$ and all fixed $h$ with
    \begin{equation}
    \label{eq:A1-window}
        \left( \frac{\kappa \log m}{m} \right)^{1/d}
        \;\le\; h \;\le\; h_0(\mathcal{M}_x, \mu),
    \end{equation}
    with probability at least $1 - 2 m^{-2}$,
    \begin{equation}
    \label{eq:A1-bound}
        \left| \hat f(\mathbf{x}_i) - \frac{\mu(\mathbf{x}_i)}{(\|\mu\|_{L^2})^{2}} \right|
        \;\le\; C_0 \left( h + \sqrt{\frac{\log m}{m h^{d}}} \right)
        \qquad \text{for all } i \in [m],
    \end{equation}
    where
    $\|\mu\|_{L^2} = \bigl( \int_{\mathcal{M}_x} \mu(\mathbf{x})^{2}\, d\omega(\mathbf{x}) \bigr)^{1/2}$.
    \end{lemma}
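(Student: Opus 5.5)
The plan is to separate $\hat f(\mathbf{x}_i)=\hat q(\mathbf{x}_i)/\hat Z$ into a deterministic bias part and a stochastic fluctuation part, both measured relative to the natural scale $(\pi h)^{d/2}$. This scale cancels in the ratio, which is why the intrinsic dimension never has to be known.

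\textbf{Bias.} Define the population quantities
\[
q_h(\mathbf{x})=\int_{\mathcal{M}_x} K_h(\mathbf{x},\mathbf{y})\,\mu(\mathbf{y})\,d\omega(\mathbf{y}),
\qquad
Z_h=\int_{\mathcal{M}_x} q_h\,\mu\, d\omega .
\]
I would invoke the standard small-bandwidth expansion of Gaussian kernel integrals on a compact, boundaryless, isometrically embedded manifold, as in the Diffusion Maps analysis: pass to normal coordinates, and compare the extrinsic and geodesic distances up to $O(\|\mathbf{s}\|^4)$. Note that $K_h$ has effective width $\sqrt h$. This gives, uniformly in $\mathbf{x}$,
\[
q_h(\mathbf{x})=(\pi h)^{d/2}\bigl(\mu(\mathbf{x})+O(h)\bigr),
\]
where the $O(h)$ term collects $\Delta\mu$ and curvature terms. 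Its constant is uniform because $\mu\in\mathcal{C}^3$ and $\mathcal{M}_x$ is compact. The threshold $h_0$ is chosen below a power of the injectivity radius and the reach, and also small enough that $q_h\ge \tfrac12(\pi h)^{d/2}\min\mu$. Integrating against $\mu$ then gives
\[
Z_h=(\pi h)^{d/2}\bigl(\|\mu\|_{L^2}^2+O(h)\bigr).
\]
Consequently $q_h(\mathbf{x})/Z_h=\mu(\mathbf{x})/\|\mu\|_{L^2}^2+O(h)$ uniformly in $\mathbf{x}$.

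\textbf{Fluctuation of $\hat q$.} Conditionally on $\mathbf{x}_i$, the leave-one-out sum $\hat q(\mathbf{x}_i)$ is an average of $m-1$ i.i.d. terms lying in $[0,1]$, with mean $q_h(\mathbf{x}_i)$. Their variance is at most
\[
\int K_h^2\,\mu\,d\omega=\int K_{h/2}\,\mu\,d\omega\le C h^{d/2}.
\]
Bernstein's inequality at confidence level $m^{-3}$, followed by a union bound over $i\in[m]$, then gives with probability at least $1-m^{-2}$
\[
\max_i|\hat q(\mathbf{x}_i)-q_h(\mathbf{x}_i)|\le C\Bigl(\sqrt{h^{d/2}\log m/m}+\log m/m\Bigr).
\]
The lower window condition $h^d\ge \kappa\log m/m$, together with $h\le1$, makes the linear term no larger than the square-root term. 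After dividing by $(\pi h)^{d/2}$, the relative error is $O\bigl(\sqrt{\log m/(m h^{d/2})}\bigr)$, which is bounded by $O\bigl(\sqrt{\log m/(m h^{d})}\bigr)$ since $h\le 1$. Choosing $\kappa$ large also makes this relative error small, say at most $\tfrac14\min\mu$.

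\textbf{Fluctuation of $\hat Z$.} Write
\[
\hat Z - Z_h=\frac1m\sum_i\bigl(\hat q(\mathbf{x}_i)-q_h(\mathbf{x}_i)\bigr)+\Bigl(\frac1m\sum_i q_h(\mathbf{x}_i)-Z_h\Bigr).
\]
The first term is controlled by the maximum bound from the previous step. The second is an i.i.d. average of terms bounded by $C h^{d/2}$, so Hoeffding's inequality with failure probability $m^{-2}$ controls it at relative order $\sqrt{\log m/m}$, which is smaller than the target rate. Combining the two events gives total probability at least $1-2m^{-2}$. On this event $\hat Z\ge \tfrac12 Z_h\ge c\,h^{d/2}$.

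\textbf{Conclusion and main obstacle.} The ratio perturbation bound
\[
\Bigl|\tfrac{a'}{b'}-\tfrac ab\Bigr|\le \frac{|a'-a|}{b'}+\frac{a\,|b'-b|}{b\,b'}
\]
then combines the bias and fluctuation estimates into~\eqref{eq:A1-bound}. The step I expect to be the main obstacle is making the bias expansion uniform, with an explicit $O(h)$ remainder. This requires careful handling of the cutoff region where the exponential map is not defined, which contributes only $O(e^{-c/h})$. It also requires tracking how the extrinsic distance $\|\mathbf{x}-\mathbf{y}\|$ differs from the geodesic distance, so that the leading term is exactly $(\pi h)^{d/2}\mu(\mathbf{x})$ and the $h$-dependence cancels between numerator and denominator.
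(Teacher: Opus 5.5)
Your proof is correct. Its skeleton matches the paper's: the Diffusion-Maps bias expansion, splitting the normalizer into an average of pointwise errors plus a Hoeffding term, a lower bound on the normalizer from the window condition, and a final ratio-perturbation step.

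The genuine difference is in how you control the stochastic term.
\begin{itemize}
\item The paper divides by $(\pi h)^{d/2}$ at the outset. It rewrites the leave-one-out quantity through the full-sample KDE via $\hat p_{-i}(\mathbf{x}_i)=\frac{m}{m-1}\hat p(\mathbf{x}_i)-\frac{1}{(m-1)(\pi h)^{d/2}}$. It then invokes a uniform (supremum over $\mathcal{M}_x$) concentration bound for the Gaussian kernel class, taken from an empirical-process result. Finally it compares $\hat p_{-i}(\mathbf{x}_i)$ and $\bar Z$ directly with $\mu(\mathbf{x}_i)$ and $\|\mu\|_{L^2}^2$, applying Hoeffding to $\frac1m\sum_i \mu(\mathbf{x}_i)$.
\item You instead use that, conditionally on $\mathbf{x}_i$, the leave-one-out sum is an average of i.i.d.\ terms independent of $\mathbf{x}_i$. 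You apply a variance-sensitive Bernstein bound and a union bound over the sample points. You center at the $h$-dependent population quantities $q_h$ and $Z_h$, so the factor $(\pi h)^{d/2}$ cancels in the ratio.
\end{itemize}

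What your route buys:
\begin{itemize}
\item It is more elementary: no uniform-in-$\mathbf{x}$ empirical-process machinery, no diagonal correction term, and no side condition such as $h \ge 4m^{-2}$.
\item Because $K_h$ has effective width $\sqrt h$, Bernstein gives the sharper fluctuation term $\sqrt{\log m/(m h^{d/2})}$. You only loosen this to $\sqrt{\log m/(m h^{d})}$ at the end to match the stated bound.
\item Your argument would also go through under the weaker window $h^{d/2} \ge \kappa \log m/m$, in place of $h^{d} \ge \kappa \log m/m$.
\end{itemize}

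What the paper's route buys: a KDE bound valid uniformly over the whole manifold, not only at sample points, with less bookkeeping. The cost is that the concentration is not variance-adaptive. This is exactly why the cruder term $\sqrt{\log m/(m h^{d})}$ appears in the lemma, and hence the slower bandwidth-optimized rate in the downstream theorem.
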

    
    The proof is in Appendix~\ref{sec:proof_density_estimated}.
    
    \subsection{Experimental Setup for Figure~\ref{fig:sim_result}}\label{sec:sim_detail}

    \paragraph{Sampling Density.}The sampling density on each component is parameterized by $\phi$ as follows:
    \begin{itemize}
        \item \textbf{Unit circle of $\mathcal{M}_x$:}
            $\phi = \tilde{\phi} \bmod 2\pi, \text{ where } \tilde{\phi} \sim 0.3\cdot\mathcal{N}(0.5\pi,\;0.8^2)
                        + 0.7\cdot\mathcal{N}(1.5\pi,\;1.0^2)$;
        \item \textbf{Semi-circle of $\mathcal{M}_x$:}
            $\phi = \tilde{\phi} \bmod \pi, \text{ where } \tilde{\phi} \sim 0.6\cdot\mathcal{N}(0.25\pi,\;0.3^2)
                        + 0.4\cdot\mathcal{N}(0.65\pi,\;0.8^2)$;
        \item \textbf{Unit circle of $\mathcal{M}_y$:}
            $\phi = \tilde{\phi} \bmod 2\pi, \text{ where } \tilde{\phi} \sim 0.5\cdot\mathcal{N}(0.2\pi,\;0.5^2)
                        + 0.5\cdot\mathcal{N}(1.3\pi,\;0.9^2)$;
        \item \textbf{Semi-circle of $\mathcal{M}_y$:}
            $\phi = \tilde{\phi} \bmod \pi, \text{ where } \tilde{\phi} \sim 0.25\cdot\mathcal{N}(0.1\pi,\;0.4^2)
                        + 0.75\cdot\mathcal{N}(0.7\pi,\;0.6^2)$,
    \end{itemize}
    where $\mathcal{N}(\mu, \sigma^2)$ denotes a Gaussian distribution with mean
    $\mu$ and variance $\sigma^2$. Given $\phi$, a point on a circle or semi-circle of radius $r$ centered at
    $\mathbf{c}$ is given by $\mathbf{c} + r(\cos\phi,\, \sin\phi)^\top$. We note that the densities of $\phi$ on each component are chosen arbitrarily and are not tuned to favor any particular method.

    \paragraph{KDE bandwidth Selection.} The KDE bandwidth for each dataset $\mathbf{X}$ and $\mathbf{Y}$ is selected independently via Algorithm~\ref{alg:bootstrap_lepski} with the following settings: $k = 20$, $2h_{\min}^2 = 10^{-2}$, $2h_{\max}^2 = 1$, $m = 50$, $\alpha = 0.05$, and $C = 1$.
 
    \paragraph{Entropic Regularization Parameter $\varepsilon$.} For each method, $\varepsilon$ is chosen to maximize the number of MKNN pairs ($K = 50$) in the resulting transport plan, via golden-section search over $[0, 5]$, a range chosen based on the scale of the data. The search terminates when the active interval shrinks below $0.10$ or after at most $20$ iterations, whichever comes first.

    \paragraph{Figure~\ref{fig:sim_result}.}We fix the sample sizes at $m = n = 2{,}000$ and compute transport plans using standard EOT, our approach with $\theta = 1$, and UOT with $\lambda = \gamma \varepsilon$ for $\gamma \in \{0.1, 1, 10, 100\}$, with all remaining hyperparameters selected as described above. Each plan is evaluated via $\overline{R_k}$ as defined in~\eqref{eq:recall_k}, and results are averaged over $10$ independent replicates.
    
    \begin{algorithm}[H]
    \caption{Bootstrap--Lepski Bandwidth Selection}
    \label{alg:bootstrap_lepski}

    \textbf{Input:} Datasets $\mathbf{X} = \{\mathbf{x}_i\}_{i=1}^n \subset \mathbb{R}^D$; number of bootstrap resamples $k \in \mathbb{N}$; bandwidth range $(h_{\min}, h_{\max})$; number of candidate bandwidths $m \in \mathbb{N}$; quantile level $\alpha \in (0,1)$; threshold scaling constant $C > 0$.

    \begin{algorithmic}[1]

        \State \textbf{Construct bandwidth grid.}
        Form the geometric grid
        $\mathcal{H} = \{h_1, \ldots, h_m\}$ with
        \begin{equation}\label{eq:bw_grid}
            h_j = h_{\min} \cdot \gamma^{\,j-1}, \qquad j \in [m],
        \end{equation}
        where $\gamma = (h_{\max}/h_{\min})^{1/(m-1)}$. 

        \State \textbf{Bootstrap Resampling.}
        Resample $k$ i.i.d.\ datasets $\widetilde{\mathbf{X}}^{(1)}, \ldots, \widetilde{\mathbf{X}}^{(k)}$ of size $n$ uniformly with replacement from $\mathbf{X}$.

        \State \textbf{Compute pairwise squared distance matrices.}
        \begin{itemize}
            \item Self-distances on $\mathbf{X}$:
                \begin{equation}\label{eq:self_dist}
                    D_{ij} = \|\mathbf{x}_i - \mathbf{x}_j\|^2,
                    \qquad \forall i,j \in [n].
                \end{equation}
            \item Cross-distances between $\mathbf{X}$ and each resample $\widetilde{\mathbf{X}}^{(\ell)}$:
                \begin{equation}\label{eq:cross_dist}
                    D_{ij}^{(\ell)} = \bigl\|\mathbf{x}_i - \tilde{\mathbf{x}}_j^{(\ell)}\bigr\|^2,
                    \qquad \forall i,j \in [n], \, \ell \in [k].
                \end{equation}
        \end{itemize}

        \State \textbf{Evaluate mean-normalized density estimates.}
        For each $h \in \mathcal{H}$, compute the mean-normalized KDE (Definition~\ref{def:self_normalized_kde}) at each $\mathbf{x}_p \in \mathbf{X}$ as:
        \begin{equation}\label{eq:scaled_density}
            \hat{f}_h(\mathbf{x}_p)
            = \frac{\displaystyle \sum_{j=1}^{n} \exp\!\left(-\dfrac{D_{pj}}{2h^2}\right)}
                   {\displaystyle \dfrac{1}{n}\sum_{i=1}^{n}\sum_{j=1}^{n}
                    \exp\!\left(-\dfrac{D_{ij}}{2h^2}\right)},
        \end{equation}
        and its bootstrap counterpart on $\widetilde{\mathbf{X}}^{(\ell)}$,
        with normalizer retained from \eqref{eq:scaled_density} as:
        \begin{equation}\label{eq:boot_density}
            \hat{f}_h^{(\ell)}(\mathbf{x}_p)
            = \frac{\displaystyle \sum_{j=1}^{n}
                    \exp\!\left(-\dfrac{D_{pj}^{(\ell)}}{2h^2}\right)}
                   {\displaystyle \dfrac{1}{n}\sum_{i=1}^{n}\sum_{j=1}^{n}
                    \exp\!\left(-\dfrac{D_{ij}}{2h^2}\right)}, \qquad \forall l \in [k].
        \end{equation}

        \State \textbf{Form bootstrap null distributions and pointwise thresholds.}
        For each pair $h_i < h_j$ and each point $\mathbf{x}_p \in \mathbf{X}$,
        compute the centered absolute bootstrap differences:
        \begin{equation}\label{eq:boot_diff}
            \widetilde{V}_{ij}^{(\ell)}(\mathbf{x}_p)
            = \Bigl|\bigl(\hat{f}_{h_j}^{(\ell)}(\mathbf{x}_p)
                         - \hat{f}_{h_i}^{(\ell)}(\mathbf{x}_p)\bigr)
                   - \bar{V}_{ij}(\mathbf{x}_p)\Bigr|,
            \quad \ell \in [k],
        \end{equation}
        where $\bar{V}_{ij}(\mathbf{x}_p) = \frac{1}{k}\sum_{\ell=1}^k
        \bigl(\hat{f}_{h_j}^{(\ell)}(\mathbf{x}_p) - \hat{f}_{h_i}^{(\ell)}(\mathbf{x}_p)\bigr)$.
        Set the pointwise threshold as the scaled $(1-\alpha)$-quantile:
        \begin{equation}\label{eq:threshold}
            T_{ij}(\mathbf{x}_p)
            = C \cdot q_{1-\alpha}\!\Bigl(
                \widetilde{V}_{ij}^{(1)}(\mathbf{x}_p),\, \ldots,\,
                \widetilde{V}_{ij}^{(k)}(\mathbf{x}_p)
              \Bigr).
        \end{equation}

        \State \textbf{Lepski bandwidth selection.}
        Sweep $j = 1, \ldots, m$ in increasing order and return the largest $h_j$ such that
        for all $i < j$,
        \begin{equation}\label{eq:lepski_condition}
            \sum_{p=1}^n \bigl|\hat{f}_{h_j}(\mathbf{x}_p) - \hat{f}_{h_i}(\mathbf{x}_p)\bigr|
            \;\le\;
            \sum_{p=1}^n T_{ij}(\mathbf{x}_p).
        \end{equation}
    \end{algorithmic}
    \end{algorithm}

    \section{Proofs and Related Background}

    \subsection{Proof of Proposition~\ref{prop:density_invariant_eot}}\label{sec:proof_opt}
    \begin{proof}
        The proof proceeds in two steps. We first verify that the feasible set of~\eqref{eq:density_invariant_opt} is nonempty and compact, which, together with the strict convexity and continuity of the objective, guarantees the existence of a unique minimizer. We then show that the matrix $\mathbf{W}^{(\theta)}$ defined in~\eqref{eq:EOT_solution_form} is feasible and satisfies the first-order optimality conditions, which are sufficient to identify it as the unique minimizer.
    
        For notational simplicity, we define
        \begin{equation}\label{eq:notation}
        r_i \;:=\; \frac{n}{\sqrt{S^{(\theta)}}\,(\hat{f}_i)^{\,\theta}}, \qquad c_j \;:=\; \frac{m\sqrt{S^{(\theta)}}}{(\hat{g}_j)^{\,\theta}}, \qquad C_{ij} \;:=\; \|\mathbf{x}_i - \mathbf{y}_j\|_2^{2}, \qquad h_{ij} \;:=\; (\hat{f}_i)^{\,\theta}(\hat{g}_j)^{\,\theta}, \qquad \forall i\in[m],\ \forall j\in[n].
        \end{equation}
        
        \textbf{Step 1.} A direct calculation using the definition of $S^{(\theta)}$ in~\eqref{eq:S_scale} yields $\sum_{i=1}^{m} r_i = \sum_{j=1}^{n} c_j$, so the feasible set of~\eqref{eq:density_invariant_opt} is nonempty: for instance, the rank-one plan $W_{ij} = h_{ij}\,r_i c_j / \sum_{k} r_k$ satisfies both marginal constraints. Moreover, the feasible set is closed (as the intersection of finitely many hyperplanes with the nonnegative orthant) and bounded (since each entry satisfies $0 \le W_{ij} \le h_{ij}\min\{n r_i,\, m c_j\}$), hence compact.
    
        \textbf{Step 2.} \emph{Feasibility.} By~\eqref{eq:EOT_solution_form} and the definition of $\mathbf{M}^{(\theta)}$ in~\eqref{eq:M}, we have
        \begin{equation}
            \frac{W_{ij}^{(\theta)}}{h_{ij}} \;=\; \alpha_i^{(\theta)}\, M_{ij}^{(\theta)}\,\beta_j^{(\theta)}, \qquad \forall i \in [m],\ j \in [n].
        \end{equation}
        Substituting into the Sinkhorn scaling conditions in~\eqref{eq:row_col_constraint} gives exactly the marginal constraints in~\eqref{eq:density_invariant_opt}.
    
        \emph{First-order conditions.} 
        Introducing multipliers $\boldsymbol{\lambda}\in\mathbb{R}^{m}$ and $\boldsymbol{\mu}\in\mathbb{R}^{n}$ for the row and column constraints in~\eqref{eq:density_invariant_opt} respectively, the Lagrangian reads
        \[
        \mathcal{L}(\mathbf{\tilde{W}},\boldsymbol{\lambda},\boldsymbol{\mu}) \;=\; \sum_{i,j}\frac{\tilde{W}_{ij}}{h_{ij}}\,C_{ij} \;+\; \varepsilon\sum_{i,j}\frac{\tilde{W}_{ij}}{h_{ij}}\log\tilde{W}_{ij} \;-\; \sum_{i}\lambda_i\!\left(\frac{1}{n}\sum_{j}\frac{\tilde{W}_{ij}}{h_{ij}} - \frac{1}{\sqrt{S^{(\theta)}}\,\hat{f}_i^{\,\theta}}\right) \;-\; \sum_{j}\mu_j\!\left(\frac{1}{m}\sum_{i}\frac{\tilde{W}_{ij}}{h_{ij}} - \frac{\sqrt{S^{(\theta)}}}{(\hat{g}_j)^{\,\theta}}\right).
        \]
        Differentiating with respect to $\tilde{W}_{ij}$, and setting $\partial\mathcal{L}/\partial \tilde{W}_{ij} = 0$ yields

        \begin{equation}
            \tilde{W}_{ij} \;=\; \exp\!\left(\frac{\lambda_i}{n\varepsilon} - \frac{1}{2} \right)\cdot \exp\!\left(-\frac{C_{ij}}{\varepsilon}\right)\cdot \exp\!\left(\frac{\mu_j}{m\varepsilon} - \frac{1}{2} \right), \quad \forall i\in [m], \forall j \in [n].
        \end{equation}
        We note $\mathbf{W}^{(\theta)}$ in~\eqref{eq:EOT_solution_form} factors in the same way.
    \end{proof}

    \subsection{Auxiliary Definition and Lemma for the Proof of Theorem~\ref{thm:finite_population_density_known} and~\ref{thm:finite_population_density_estimated}}

    We first introduce the matrix scaling problem and state the relevant lemmas, following Section SM5 of~\cite{boris_EOT}.

    Matrix scaling is the process of diagonally scaling a positive matrix $\mathbf{A} \in \mathbb{R}_{++}^{m \times n}$ to achieve certain prescribed row sums $\mathbf{r} \in \mathbb{R}_{++}^{m}$ and column sums $\mathbf{c} \in \mathbb{R}_{++}^{n}$, where $\|\mathbf{r}\|_1 = \|\mathbf{c}\|_1$. We say a pair of vectors $\Alpha \in \mathbb{R}_{++}^m$ and $\Beta \in \mathbb{R}_{++}^n$ scales $\mathbf{A}$ to row sums $\mathbf{r}$ and column sums $\mathbf{c}$, if
    \begin{equation}\label{eq:scaling_def}
        r_i = \sum_{j=1}^{n} \alpha_i A_{i,j} \beta_j, \quad \text{ and } \quad c_j =  \sum_{i=1}^{m} \alpha_i A_{i,j} \beta_j,
    \end{equation}
    for all $i \in [m]$ and $j \in [n]$. We refer to $\Alpha$ and $\Beta$ from~\eqref{eq:scaling_def} (or their entries) as \textit{scaling factors} of $\mathbf{A}$.

    \begin{lemma}[Existence and uniqueness~\cite{sinkhorn_existence}]
    \label{lemma:existence_uniqueness_scaling}
    Suppose that $\mathbf{A} \in \mathbb{R}_{++}^{m\times n}$, $\mathbf{r} \in \mathbb{R}_{++}^{m}$, and $\mathbf{c} \in \mathbb{R}_{++}^{n}$ with $\|\mathbf{r}\|_1 = \|\mathbf{c}\|_1$. Then, there exists a pair of vectors $\Alpha \in \mathbb{R}_{++}^m$ and $\Beta \in \mathbb{R}_{++}^n$ that scales $\mathbf{A}$ to row sums $\mathbf{r}$ and column sums $\mathbf{c}$. Furthermore, the pair of scaling factor $(\Alpha, \Beta)$ is unique up to a scaler multiple; that is, any other valid
    scaling pair is of the form $(c\, \Alpha,\, c^{-1} \Beta)$ for some constant $c > 0$.
    \end{lemma}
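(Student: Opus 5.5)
The plan is to recast matrix scaling as the minimization of a smooth convex potential in logarithmic coordinates. Existence then follows from coercivity modulo the obvious one-dimensional symmetry, and uniqueness from strict convexity transverse to that symmetry. Concretely, writing $\alpha_i = e^{a_i}$ and $\beta_j = e^{b_j}$, I would define for $(\mathbf{a},\mathbf{b}) \in \mathbb{R}^m\times\mathbb{R}^n$
\begin{equation*}
F(\mathbf{a},\mathbf{b}) \;=\; \sum_{i=1}^m\sum_{j=1}^n A_{ij}\,e^{a_i+b_j} \;-\; \sum_{i=1}^m r_i a_i \;-\; \sum_{j=1}^n c_j b_j .
\end{equation*}
Its gradient is $\partial F/\partial a_i = \sum_j A_{ij}e^{a_i+b_j} - r_i$ and $\partial F/\partial b_j = \sum_i A_{ij}e^{a_i+b_j} - c_j$. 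Hence critical points of $F$ correspond exactly to pairs $(\Alpha,\Beta)$ that scale $\mathbf{A}$ to row sums $\mathbf{r}$ and column sums $\mathbf{c}$ in the sense of~\eqref{eq:scaling_def}. Since $\|\mathbf{r}\|_1=\|\mathbf{c}\|_1$, the function $F$ is invariant under $(\mathbf{a},\mathbf{b})\mapsto(\mathbf{a}+t\mathbf{1},\mathbf{b}-t\mathbf{1})$, which is the source of the multiplicative ambiguity $(c\Alpha, c^{-1}\Beta)$ in the statement.

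For existence, I would restrict $F$ to the subspace $V=\{(\mathbf{x},\mathbf{y}) : \mathbf{1}^\top\mathbf{x} = \mathbf{1}^\top\mathbf{y}\}$, which is complementary to the invariant direction $(\mathbf{1},-\mathbf{1})$. On $V$ I would show coercivity via recession directions: since $F$ is convex and continuous, it suffices that $F(s\mathbf{x},s\mathbf{y})\to\infty$ as $s\to\infty$ for every nonzero $(\mathbf{x},\mathbf{y})\in V$. There are two cases.
\begin{itemize}
\item If $x_i+y_j>0$ for some pair $(i,j)$, then the term $A_{ij}e^{s(x_i+y_j)}$ grows exponentially, using $A_{ij}>0$, and dominates the linear part.
\item Otherwise $x_i+y_j\le 0$ for all $i,j$, so $\max_j y_j \le -\max_i x_i$. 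With $T=\|\mathbf{r}\|_1=\|\mathbf{c}\|_1$ and positivity of $\mathbf{r},\mathbf{c}$, this gives $\mathbf{r}^\top\mathbf{x}+\mathbf{c}^\top\mathbf{y}\le T(\max_i x_i+\max_j y_j)\le 0$. Equality forces $\mathbf{x}$ and $\mathbf{y}$ to be constant vectors with $\mathbf{y}=-\mathbf{x}$. Such a vector lies in $V$ only if it is zero, so the inequality is strict and the term $-s(\mathbf{r}^\top\mathbf{x}+\mathbf{c}^\top\mathbf{y})\to+\infty$.
\end{itemize}
A convex function with no nonzero direction of recession on a finite-dimensional space has compact sublevel sets, so it attains a minimum. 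At that minimizer the gradient of $F$ vanishes along $V$, and also along $(\mathbf{1},-\mathbf{1})$ by invariance, so the full gradient is zero. Exponentiating the minimizer gives a strictly positive scaling pair.

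For uniqueness, I would compute the Hessian quadratic form
\begin{equation*}
\langle (\mathbf{x},\mathbf{y}), \nabla^2F\,(\mathbf{x},\mathbf{y})\rangle = \sum_{i,j}A_{ij}e^{a_i+b_j}(x_i+y_j)^2 .
\end{equation*}
This vanishes only if $x_i+y_j=0$ for all $i,j$, i.e.\ only if $(\mathbf{x},\mathbf{y})\in\operatorname{span}(\mathbf{1},-\mathbf{1})$. So $F$ is strictly convex on $V$ and has at most one critical point there. Any two critical points in $\mathbb{R}^{m+n}$ therefore differ by a multiple of $(\mathbf{1},-\mathbf{1})$, which translates to $(c\Alpha,c^{-1}\Beta)$ with $c=e^t$.

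I expect the main obstacle to be the coercivity step. The delicate point is showing the linear part has a strictly favorable sign in the case where all $x_i+y_j\le0$; this is precisely where the hypotheses $\|\mathbf{r}\|_1=\|\mathbf{c}\|_1$ and strict positivity of $\mathbf{r},\mathbf{c},\mathbf{A}$ enter. The remaining steps are routine convex analysis.
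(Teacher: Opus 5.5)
Your proof is correct, and it takes a genuinely different route from the paper. The paper gives no proof of this lemma: it imports it from the classical matrix-scaling literature, cited as~\cite{sinkhorn_existence}. The standard arguments there are either constructive (convergence of alternating row/column normalization, i.e.\ Sinkhorn's iteration) or fixed-point based. Uniqueness is usually settled separately by comparing the extreme ratios $\alpha_i'/\alpha_i$ and $\beta_j'/\beta_j$ of two scaling pairs.

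Your variational argument obtains existence and uniqueness from a single object, and each step checks out:
\begin{itemize}
\item The two-case recession analysis is sound. Positivity of $\mathbf{A}$ drives the first case; positivity of $\mathbf{r},\mathbf{c}$ together with $\|\mathbf{r}\|_1=\|\mathbf{c}\|_1$ drives the second, including the strictness on $V$.
\item Checking divergence along rays from the origin suffices, since $F$ is convex and continuous.
\item The Hessian form $\sum_{i,j}A_{ij}e^{a_i+b_j}(x_i+y_j)^2$ has kernel exactly $\operatorname{span}(\mathbf{1},-\mathbf{1})$.
\item The one implicit step in the uniqueness part is that sliding a critical point along $(\mathbf{1},-\mathbf{1})$ into $V$ keeps it critical. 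This is immediate from the invariance of $\nabla F$.
\end{itemize}

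What your route buys is a self-contained proof with a transparent account of where each hypothesis is used. It also links the lemma to the rest of the paper. $F$ is the log-domain dual of the entropic problem with kernel $\mathbf{A}$, so $\log\Alpha$ and $\log\Beta$ play the role of the (rescaled) multipliers in the proof of Proposition~\ref{prop:density_invariant_eot}. The Sinkhorn step of Algorithm~\ref{alg:EOT_invariant} is exactly alternating exact minimization of $F$ in $\mathbf{a}$ and in $\mathbf{b}$. Strict convexity of $F$ transverse to $(\mathbf{1},-\mathbf{1})$ is also the natural starting point for quantitative stability results such as Lemma~\ref{lem:scaling_factor_concen}. What your argument does not give by itself is convergence of that iteration, which the constructive classical proof supplies.
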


    The following lemma characterizes the stability of the scaling factors to perturbations in the row and column sums. In particular, it states that if there exists a pair of positive vectors $(\hat{\Alpha}, \hat{\Beta})$ that approximately scales $\mathbf{A}$ to certain row and column sums, then $(\hat{\Alpha}, \hat{\Beta})$ must be close to a true pair of scaling factors for these row and column sums.
    
    \begin{lemma}[Stability of scaling factors under approximate scaling~\cite{boris_EOT}]
    \label{lem:scaling_factor_concen}
    Let $\mathbf{A} \in \mathbb{R}_{++}^{m \times n}$ and denote $a = \min_{i,j} A_{i,j}$, $b = \max_{i,j} A_{i,j}$. Suppose that there exists $\delta \in (0,1)$ and vectors $\hat{\Alpha} \in \mathbb{R}_{++}^{m}$ and $\hat{\Beta} \in \mathbb{R}_{++}^{n}$, such that
    \begin{equation}
    \left| \frac{1}{c_j} \sum_{i=1}^{m} \hat{\alpha}_i A_{i,j} \hat{\beta}_j - 1 \right| \leq \delta, \qquad \left| \frac{1}{r_i} \sum_{j=1}^{n} \hat{\alpha}_i A_{i,j} \hat{\beta}_j - 1 \right| \leq \delta,
    \end{equation}
    for all $i \in [m]$ and $j \in [n]$. Then, there exists a pair of vectors $\Alpha \in \mathbb{R}_{++}^{m}$ and $\Beta \in \mathbb{R}_{++}^{n}$ that scales $\mathbf{A}$ to row sums $\mathbf{r}$ and column sums $\mathbf{c}$, satisfying
    \begin{equation}\label{eq:lemma_alpha_bound}
    \frac{|\alpha_i - \hat{\alpha}_i|}{\hat{\alpha}_i} \leq \delta \left( \frac{1}{1-\delta} + \frac{4\sqrt{b}}{a^2 \sqrt{s} C_1^{3/2} C_2^{3/2} m \min_i r_i} \right),
    \end{equation}
    \begin{equation}\label{eq:lemma_beta_bound}
    \frac{|\beta_j - \hat{\beta}_j|}{\hat{\beta}_j} \leq \delta \left( \frac{1}{1-\delta} + \frac{4\sqrt{b}}{a^2 \sqrt{s} C_1^{3/2} C_2^{3/2} n \min_j c_j} \right),
    \end{equation}
    for all $i \in [m]$ and $j \in [n]$, where $C_1 = \min_i \{ \hat{\alpha}_i / r_i \}$, $C_2 = \min_j \{ \hat{\beta}_j/ c_j \}$, and $s = \|\mathbf{r}\|_1 = \|\mathbf{c}\|_1$.
    \end{lemma}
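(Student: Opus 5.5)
Our plan is to reduce the statement to a perturbation bound for Sinkhorn scaling of a matrix that is already nearly scaled. Write $B_{ij} := \hat{\alpha}_i A_{i,j} \hat{\beta}_j$. By hypothesis, the row sums of $\mathbf{B}$ lie in $r_i[1-\delta,1+\delta]$ and its column sums lie in $c_j[1-\delta,1+\delta]$.

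By Lemma~\ref{lemma:existence_uniqueness_scaling} there exist positive vectors $\mathbf{x},\mathbf{y}$ with
\begin{equation*}
x_i \sum_j B_{ij} y_j = r_i, \qquad y_j \sum_i x_i B_{ij} = c_j .
\end{equation*}
Setting $\alpha_i = x_i \hat{\alpha}_i$ and $\beta_j = y_j \hat{\beta}_j$ gives a true scaling pair. Its relative errors are exactly $|x_i - 1|$ and $|y_j - 1|$, and we still have the freedom $(\mathbf{x},\mathbf{y}) \mapsto (c\,\mathbf{x}, c^{-1}\mathbf{y})$. The task is therefore to show that, after fixing this constant suitably, $\max_i |x_i-1|$ and $\max_j |y_j-1|$ are bounded by $\delta$ times the factor in~\eqref{eq:lemma_alpha_bound}--\eqref{eq:lemma_beta_bound}.

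\textbf{Step 1: a uniform lower bound on $\mathbf{B}$.} Since $A_{i,j} \ge a$, $\hat{\alpha}_i \ge C_1 r_i$ and $\hat{\beta}_j \ge C_2 c_j$, we get
\begin{equation*}
B_{ij} \ge a C_1 C_2\, r_i c_j \quad \text{for all } i,j.
\end{equation*}
This is a Doeblin-type minorization. Normalized by the row sums, it says that the Markov kernel $P_{ij} = B_{ij}/\sum_k B_{ik}$ puts mass at least of order $a C_1 C_2\, c_j$ on each $j$. Consequently its Dobrushin contraction coefficient is at most $1 - \kappa$, with $\kappa \gtrsim a C_1 C_2 s$ up to $(1\pm\delta)$ factors. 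The same holds for the column-normalized kernel. The upper bound $b$ on $\mathbf{A}$, together with the near-marginal conditions, will be used to control $\hat{\alpha}_i$ and $\hat{\beta}_j$ from above. This is how $\sqrt{b}$ enters, through a symmetric normalization such as fixing $\prod$ or $\sum$ appropriately weighted.

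\textbf{Step 2: the oscillation of $\mathbf{y}$ is $O(\delta/\kappa)$.} The row equation gives $x_i = r_i/\sum_j B_{ij} y_j$. Substituting this into the column equation yields a fixed-point equation of the form
\begin{equation*}
1/y_j = \sum_i Q_{ji}\,\big(\textstyle\sum_k P_{ik} y_k\big)^{-1}(1+O(\delta)),
\end{equation*}
where $Q$ is the column-normalized kernel. Taking $\max_j y_j/\min_j y_j$ and using the contraction from Step~1 for both kernels, we expect a bound $\log(\max y/\min y) \le 2\delta/\kappa + O(\delta^2)$. Similarly, $\mathbf{x}$ is then close to a constant.

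\textbf{Step 3: pin down the constant.} Choose the free scalar $c$ so that, say, $\min_j y_j \le 1 \le \max_j y_j$. Then $|y_j-1| \lesssim \delta/\kappa$. Plugging back into $x_i = r_i/\sum_j B_{ij}y_j$ with $\sum_j B_{ij} \in r_i[1\pm\delta]$ gives $|x_i - 1| \le \delta/(1-\delta) + O(\delta/\kappa)$. This matches the structure $\delta\bigl(\frac{1}{1-\delta} + \text{const}\bigr)$ of~\eqref{eq:lemma_alpha_bound}.

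\textbf{Main obstacle.} We expect the main difficulty to be tracking constants so that the second term comes out exactly as $4\sqrt{b}/(a^2\sqrt{s}\,C_1^{3/2}C_2^{3/2}\, m \min_i r_i)$. The factors $m\min_i r_i$ and $n\min_j c_j$ suggest that the original argument works with an $\ell_2$/spectral-gap version of the contraction rather than Dobrushin's $\ell_\infty$ one: the second singular value of $D_r^{-1/2} \mathbf{B} D_c^{-1/2}$ would be bounded away from $1$ using the same entrywise lower bound, and an $\ell_2$ bound would then be converted to an entrywise bound, costing $\sqrt{b}/\sqrt{\min}$ factors. We would first try the Dobrushin route to obtain a bound of the correct form. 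If the exact constant is required, we would redo Step~2 in the weighted $\ell_2$ norm, using the singular-gap estimate $1-\sigma_2 \ge a C_1 C_2 \cdot(\cdot)$, and convert to $\ell_\infty$ at the end.
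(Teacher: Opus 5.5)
The paper does not prove this lemma (it is imported from \cite{boris_EOT}), so there is no in-text argument to compare against. Judged on its own, your proposal has a genuine gap. The reduction to $B_{ij}=\hat\alpha_iA_{i,j}\hat\beta_j$, the minorization $B_{ij}\ge aC_1C_2r_ic_j$, and Step~3 are fine. But Step~2, which carries all of the content, is only announced, and the tool you chose cannot deliver it on the range the lemma covers.

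Dobrushin's coefficient controls the additive oscillation of a linear average. Your fixed-point map, $y_j=(c_j/\Gamma_j)/(Q\mathbf{h})_j$ with $h_i=(r_i/R_i)/(P\mathbf{y})_i$ (where $R_i,\Gamma_j$ are the row and column sums of $\mathbf{B}$), is nonlinear. Carry it out with $R=\max_j y_j/\min_j y_j$, $p=(1+\delta)/(1-\delta)$, and $\kappa\ge aC_1C_2s/(1+\delta)$ for both $P$ and $Q$. You get $(R-1)\bigl(1-p^2(1-\kappa)^2\bigr)\le (p-1)\bigl(1+p(1-\kappa)\bigr)$. This bounds $R$ only when $p(1-\kappa)<1$, i.e.\ $\delta<\kappa/(2-\kappa)$, and is vacuous beyond that. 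Your ``$O(\delta^2)$'' remainder has a constant of order $\kappa^{-2}$, which hides exactly this restriction. The lemma, however, is asserted for every $\delta\in(0,1)$, with no smallness condition relative to $aC_1C_2s$.

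The stated coefficient is also never derived. The bound $b$ plays no role anywhere in your argument, and your $\ell_2$/spectral-gap explanation of $\sqrt b$, $a^2$, $C_1^{3/2}C_2^{3/2}$ and $m\min_i r_i$ is conjecture.

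On the constant, you are overestimating the difficulty: you only need to dominate it, not reproduce it. Summing your minorization over $j$ against the row hypothesis gives $t:=aC_1C_2s\le 1+\delta<2$. Also $m\min_i r_i\le s$ and $n\min_j c_j\le s$. So both stated coefficients are at least $4\sqrt{b/a}\,t^{-3/2}\ge 2\sqrt2/t$. Hence, after fixing the free constant, the bounds $|y_j-1|\le 2\sqrt2\,\delta/t$ and $|x_i-1|\le \delta\bigl(\tfrac{1}{1-\delta}+2\sqrt2/t\bigr)$, if they held for \emph{all} $\delta\in(0,1)$, would already imply the lemma. Your Doeblin parameter $t$ is the natural quantity here.

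What is missing is a global, explicit version of Step~2: one that controls the oscillation of $\mathbf{y}$ linearly in $\delta$ without assuming $\delta\ll t$. A perturbative contraction argument of the kind you sketch cannot supply this. For the way this paper uses the lemma ($\delta\le 1/2$, $\delta\to0$, and $aC_1C_2s$ bounded below by a constant depending on $\varepsilon$), a completed local version of your argument would suffice. But it proves a weaker statement than the one given.
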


    For notational simplicity, we introduce the notation of \textit{order in probability}, denoted as $\widetilde{\mathcal{O}}_{m,n}^{(\varepsilon)}$.
    \begin{definition} \label{def:order_prob}
        For a random variable $z$, we say $z = \widetilde{\mathcal{O}}_{m,n}^{(\varepsilon)}\left(f(m,n)\right)$ if there exist global constant \sloppy $t_0, n_0(\varepsilon, t), C(\varepsilon) >0$ such that for all $t>t_0$, $n \geq n_0(\varepsilon, t)$, and $m 
        \geq n^{\gamma}$ (under Assumption~\ref{assump:m_and_n}),
            \begin{equation}
                |z| \leq {t} C(\varepsilon) f(m,n),
            \end{equation}
            with probability at least $1-n^{-t}$.  When $n_0$ and $C$ can be chosen independently of $\varepsilon$, we write $z = \widetilde{\mathcal{O}}_{m,n}\!\left(f(m,n)\right)$.
    \end{definition}
 
    Note that under Definition~\ref{def:order_prob}, given a polynomial function $P(n)$, if for any $ i = 1, 2, \ldots,P(n)$, $z_i = \widetilde{\mathcal{O}}^{(\varepsilon)}_{m,n}\left(f(m,n)\right)$, then \begin{equation}\label{eq:order_max}
            \max_{i=1,2\ldots,P(n)} |z_i| = \widetilde{\mathcal{O}}_{m,n}^{(\varepsilon)}\left(f(m,n)\right), 
        \end{equation}
    which can be derived by applying the union bound.

    Furthermore, if $Z = \widetilde{\mathcal{O}}_{m,n}^{(\varepsilon)}(f(m,n))$ and $Y = g^{(\varepsilon)}(Z)$, where $g^{(\varepsilon)}$ is $\mathcal{C}^1$ on a neighborhood of $0$ for all $\varepsilon > 0$ and \sloppy$\lim_{m,n \to \infty} f(m,n) = 0$, then by a Taylor expansion of $g^{(\varepsilon)}(x)$ around zero,
    \begin{equation}\label{eq:order_taylor}
        Y = g^{(\varepsilon)}(0) + \widetilde{\mathcal{O}}_{m,n}^{(\varepsilon)}(f(m,n)).
    \end{equation}
    We will use properties \eqref{eq:order_max} and \eqref{eq:order_taylor} of Definition~\ref{def:order_prob} 
    seamlessly throughout the remaining proofs.

\subsection{Proof of Theorem~\ref{thm:finite_population_density_known}}\label{sec:proof density known}

For notation simplicity, we define:
\begin{equation}
    Z_x^{(\theta)} = \int_{\mathcal{M}_x} \mu(\mathbf{x})^{1-\theta}\, d\omega(\mathbf{x}),
    \qquad
    Z_y^{(\theta)} = \int_{\mathcal{M}_y} \nu(\mathbf{y})^{1-\theta}\, d\eta(\mathbf{y}).
\end{equation}

\begin{proof}
    Let $\Alpha^{(\theta)}$ and $\Beta^{(\theta)}$ be the scaling factors in Algorithm~\ref{alg:EOT_invariant} that diagonally scale $\mathbf{M}^{(\theta)}$, defined in~\eqref{eq:M}, to satisfy the row and column constraints in~\eqref{eq:row_col_constraint}, for fixed $\theta \in [0,1]$, $\hat{\mathbf{f}} = \boldsymbol{\mu}$, and $\hat{\mathbf{g}} = \boldsymbol{\nu}$. Define approximate scaling factors $\hat{\Alpha}$ and $\hat{\Beta}$ as
    \begin{equation}\label{eq:hat_scaling_factor}
        \hat{\alpha}_i^{(\theta)} = \frac{1}{\sqrt{\Zx}}u^{(\theta)}_\varepsilon(\mathbf{x}_i),\quad \text{ and } \quad \hat{\beta}_j^{(\theta)} = \frac{1}{\sqrt{\Zy}} v^{(\theta)}_\varepsilon(\mathbf{y}_j),
    \end{equation}
    for any $i \in [m]$ and any $j \in [n]$, where $u^{(\theta)} _\varepsilon$ and  $v^{(\theta)}_\varepsilon$ are defined in~\eqref{eq:integral_eq}.

    We first derive a bound on $S^{(\theta)}$ in~\eqref{eq:S_scale}.
    Applying Hoeffding's inequality, we have:
    \begin{equation}\label{eq:S1}
        \frac{1}{m} \sum_{i=1}^m \frac{1}{(\hat{f}_i)^{\theta}}  
        = \frac{1}{m} \sum_{i=1}^m \frac{1}{(\mu(\mathbf{x}_i))^{\theta}}
        = \int_{\mathcal{M}_x} 
         \frac{1}{(\mu(\mathbf{x}))^{\theta}} \mu(\mathbf{x}) \, d\omega(\mathbf{x}) + \widetilde{\mathcal{O}}_{m}\left(\sqrt{\frac{\log m}{m}}\right)
         = \Zx +  \widetilde{\mathcal{O}}_{m}\left(\sqrt{\frac{\log m}{m}}\right),
    \end{equation}
    where $\Zx$ is defined in~\eqref{eq:tilt_density}. Similarly, 
    \begin{equation}\label{eq:S2}
        \frac{1}{n} \sum_{j=1}^n \frac{1}{(\hat{g}_j)^{\theta}}  
        = \Zy +  \widetilde{\mathcal{O}}_{n}\left(\sqrt{\frac{\log n}{n}}\right) 
        = \Zy +  \widetilde{\mathcal{O}}_{m}\left(\sqrt{\frac{\log m}{m}}\right).
    \end{equation} 
    In the derivation, we use the fact that $\sqrt{\log n/n} \leq 2\sqrt{\log m/m}$ for all sufficiently large $m$ under Assumption~\ref{assump:m_and_n}.
    
    Combining~\eqref{eq:S1} and~\eqref{eq:S2},
    \begin{equation}\label{eq:S_bound}
        S^{(\theta)} = \frac{\Zx}{\Zy} + \widetilde{\mathcal{O}}_{m}\left(\sqrt{\frac{\log m}{m}}\right), \quad \text{ and } \quad
        \frac{1}{S^{(\theta)}} = \frac{\Zy}{\Zx} + \widetilde{\mathcal{O}}_{m}\left(\sqrt{\frac{\log m}{m}}\right).
    \end{equation}
    
    We next show that $\hat{\Alpha}^{(\theta)}$ and $\hat{\Beta}^{(\theta)}$ approximately scale $\mathbf{M}^{(\theta)}$ to have prescribed row sums $\mathbf{r}$ and column sums $\mathbf{c}$. Recall that, from~\eqref{eq:row_col_constraint},
    \begin{equation}\label{eq:row_col_sum}
        r_i = \frac{n}{(\hat{f}_i)^{\theta}\sqrt{S^{(\theta)}}} 
        = \frac{n}{(\mu(\mathbf{x}_i))^{\theta} \cdot \sqrt{S^{(\theta)}}}, \quad \text{and} \quad
        c_j = \frac{m\sqrt{S^{(\theta)}}}{(\hat{g}_j)^{\theta}} = \frac{m\sqrt{S^{(\theta)}}}{(\nu(\mathbf{y}_j))^{\theta}},
    \end{equation}
    for all $ i \in [m]$ and $j \in [n]$, and $S^{(\theta)}$ is defined in~\eqref{eq:S_scale}.
    Using~\eqref{eq:hat_scaling_factor},~\eqref{eq:row_col_sum}, $\mathcal{K}_{\varepsilon}$ and $\mathbf{M}^{(\theta)}$ defined in~\eqref{eq:W_kernel} and~\eqref{eq:M}, we have: for any $i \in [m]$,
    \begin{equation}\label{eq:row_discrete}
    \begin{aligned}
        \frac{1}{r_i} \sum_{j=1}^n \hat{\alpha}_i^{(\theta)} M_{ij}^{(\theta)} \hat{\beta}_j^{(\theta)} 
        &= \frac{(\mu(\mathbf{x}_i))^{\theta} \cdot \sqrt{S^{(\theta)}}}{n} \sum_{j=1}^n \frac{u^{(\theta)} _\varepsilon(\mathbf{x}_i) }{\sqrt{\Zx}}\frac{\mathcal{K}_\varepsilon(\mathbf{x}_i, \mathbf{y}_j)}{(\mu(\mathbf{x}_i)\nu(\mathbf{y}_j))^{\theta}} \frac{v^{(\theta)} _\varepsilon(\mathbf{y}_j)}{\sqrt{\Zy}} \\
        &= \frac{\sqrt{S^{(\theta)} }}{\sqrt{\Zx\Zy}} \frac{1}{n} \sum_{j=1}^n u^{(\theta)} _\varepsilon(\mathbf{x}_i) \mathcal{K}_\varepsilon(\mathbf{x}_i, \mathbf{y}_j) v^{(\theta)} _\varepsilon(\mathbf{y}_j) \frac{1}{(\nu(\mathbf{y}_j))^{\theta}}.
    \end{aligned}
    \end{equation}
    Conditioning on $\mathbf{x}_i$ and applying Hoeffding's inequality yields
    \begin{equation}\label{eq:row_bound}
    \begin{aligned}
        &\frac{1}{r_i} \sum_{j=1}^n \hat{\alpha}_i^{(\theta)}M_{ij}^{(\theta)} \hat{\beta}_j^{(\theta)}
        = \frac{\sqrt{S^{(\theta)} }}{\sqrt{\Zx\Zy}} \int_{\mathcal{M}_y} u^{(\theta)} _\varepsilon(\mathbf{x}_i) \mathcal{K}_\varepsilon(\mathbf{x}_i,\mathbf{y}) v^{(\theta)} _\varepsilon(\mathbf{y}) \cdot (\nu(\mathbf{y}))^{1-\theta}\, d\eta(\mathbf{y}) + \widetilde{\mathcal{O}}_{n}\left(\sqrt{\frac{\log n}{n}}\right) \\
        &= \frac{\sqrt{S^{(\theta)}}\Zy}{\sqrt{\Zx\Zy}} +  \widetilde{\mathcal{O}}_{n}\left(\sqrt{\frac{\log n}{n}}\right) = \sqrt{\frac{\Zx}{\Zy} + \widetilde{\mathcal{O}}_{m}\left(\sqrt{\frac{\log m}{m}}\right)} \sqrt{\frac{\Zy}{\Zx}} + \widetilde{\mathcal{O}}_{n}\left(\sqrt{\frac{\log n}{n}}\right)\\
        &= 1 +  \widetilde{\mathcal{O}}_{m}\left(\sqrt{\frac{\log m}{m}}\right) + \widetilde{\mathcal{O}}_{n}\left(\sqrt{\frac{\log n}{n}}\right) = 1 + \widetilde{\mathcal{O}}_{m}\left(\sqrt{\frac{\log m}{m}}\right),
    \end{aligned}
    \end{equation}
    where we use~\eqref{eq:integral_eq} and~\eqref{eq:S_bound} in the second line. Under Assumption~\ref{assump:m_and_n}, a union bound over all $i \in [m]$ shows that~\eqref{eq:row_bound} holds simultaneously for all $i \in [m]$.
    
    Similarly, for any $j \in [n]$,
    \begin{equation}
    \begin{aligned}
        \frac{1}{c_j} \sum_{i=1}^m \hat{\alpha}_i^{(\theta)} M_{ij}^{(\theta)} \hat{\beta}_j^{(\theta)} 
        &= \frac{1}{\sqrt{S^{(\theta)}}
        \sqrt{\Zx\Zy}} \frac{1}{m} \sum_{i=1}^m u^{(\theta)} _\varepsilon(\mathbf{x}_i) \mathcal{K}_\varepsilon(\mathbf{x}_i, \mathbf{y}_j) v^{(\theta)} _\varepsilon(\mathbf{y}_j) \frac{1}{(\mu(\mathbf{x}_i))^{\theta}}.
    \end{aligned}
    \end{equation}
    Following an analogous derivation of~\eqref{eq:row_bound},
    \begin{equation}\label{eq:col_bound}
    \begin{aligned}
        \frac{1}{c_j} \sum_{i=1}^m \hat{\alpha}_i^{(\theta)} M_{ij}^{(\theta)} \hat{\beta}_j^{(\theta)} & 
        = 1 + \widetilde{\mathcal{O}}_{m}\left(\sqrt{\frac{\log m}{m}}\right),
    \end{aligned}
    \end{equation}
    for all $j \in [n]$.
    
    We now apply Lemma~\ref{lem:scaling_factor_concen} to the approximate scaling factor of $\mathbf{M}^{(\theta)}$, namely $\hat{\Alpha}^{(\theta)}$ and $\hat{\Beta}^{(\theta)}$. We first evaluate and bound the different quantities appearing in Lemma~\ref{lem:scaling_factor_concen}. 

    By the definition of $s$ in Lemma~\ref{lem:scaling_factor_concen} and using~\eqref{eq:S1},~\eqref{eq:S_bound}, and~\eqref{eq:row_col_sum}:
    \begin{equation}\label{eq:s}
        s = \|\mathbf{r}\|_1 
        = \sum_{i=1}^m \frac{n}{(\mu(\mathbf{x}_i))^{\theta} \cdot \sqrt{S^{(\theta)}}} 
        = \frac{nm}{\sqrt{S^{(\theta)} }} \cdot \frac{1}{m}\sum_{i=1}^m \frac{1}{(\mu(\mathbf{\mathbf{x}_i}))^{\theta}} 
        = nm \left(\sqrt{\Zx\Zy} +  \widetilde{\mathcal{O}}_{m}\left(\sqrt{\frac{\log m}{m}}\right)\right).
    \end{equation}

    Since $u^{(\theta)} _\varepsilon$, $v^{(\theta)} _\varepsilon$, $\mu$, and $\nu$ are positive and continuous on the compact manifolds, they are bounded away from zero and infinity. Thus, there exist some positive constants $C^{'}_{1}(\varepsilon)$ and $C^{'}_{2}(\varepsilon)$ (which may depend on $\varepsilon$), such that $C_1$ and $C_2$ defined in Lemma~\ref{lem:scaling_factor_concen} satisfy:
    \begin{equation}\label{eq:lemma_C}
        C_1 = \min_i \left\{\frac{\hat{\alpha}_i^{(\theta)} }{r_i}\right\} 
        \geq \frac{C^{'}_{1}(\varepsilon)}{n}, \quad  C_2 = \min_j \left\{\frac{\hat{\beta}_j^{(\theta)}}{c_j}\right\}
        \geq \frac{C^{'}_{2}(\varepsilon)}{m}.
    \end{equation}
    Similarly, there exist positive constants $C^{'}_{3}(\varepsilon)$ and $C^{'}_{4}(\varepsilon)$ such that
\begin{equation}\label{eq:lemma_M}
   C^{'}_{3}(\varepsilon) \leq M_{ij}^{(\theta)} = \frac{\operatorname{exp}\left(-\|\mathbf{x}_i - \mathbf{y}_j\|_2^2/\varepsilon\right)}{(\mu(\mathbf{x}_i)\nu(\mathbf{y}_j))^{\theta}}\leq C^{'}_{4}(\varepsilon).
\end{equation}
Thus, $a$ and $b$ defined in Lemma~\ref{lem:scaling_factor_concen} satisfy
\begin{equation}\label{eq:ab}
    C^{'}_{3}(\varepsilon) \leq a \leq b \leq C^{'}_{4}(\varepsilon).
\end{equation}

Hence, for any $\delta \in (0,1/2]$, using~\eqref{eq:s},~\eqref{eq:lemma_C}, and~\eqref{eq:ab}, the coefficients of~\eqref{eq:lemma_alpha_bound} and~\eqref{eq:lemma_beta_bound} are bounded:
\begin{equation}\label{eq:constant}
    \left(\frac{1}{1-\delta}+\frac{4\sqrt{b}}{a^2 \sqrt{s} C_1^{3/2} C_2^{3/2} m \min_i r_i}\right) \leq D_1(\varepsilon), \text{ and } \left(\frac{1}{1-\delta}+\frac{4\sqrt{b}}{a^2 \sqrt{s} C_1^{3/2} C_2^{3/2} n \min_j c_j}\right) \leq D_2(\varepsilon) ,
\end{equation}
for some positive constants $D_1(\varepsilon)$ and $D_2(\varepsilon)$.

Applying Lemma~\ref{lem:scaling_factor_concen} with $\delta := \tau\, C(\varepsilon) \sqrt{\log m / m}$, where
$C(\varepsilon)$ is the constant associated with~\eqref{eq:row_bound}
and~\eqref{eq:col_bound} in Definition~\ref{def:order_prob} and $\delta \le 1/2$
upon choosing $n_0(\varepsilon, \tau)$ properly, there exists a pair of positive vectors $(\Alpha^{(\theta)'}, \Beta^{(\theta)'})$ that scales $\mathbf{M^{(\theta)}}$ to row sums $\mathbf{r}$ and column sums $\mathbf{c}$, such that
\begin{equation}
    \frac{|\alpha_i^{(\theta)'} - \hat{\alpha}_i^{(\theta)}|}{\hat{\alpha}_i^{(\theta)}} = \widetilde{\mathcal{O}}_{m}^{(\varepsilon)}\left(\sqrt{\frac{\log m}{m}}\right), \quad \text{ and } \quad \frac{|\beta_j^{(\theta)'} - \hat{\beta}_j^{(\theta)}|}{\hat{\beta}_j^{(\theta)}} =\widetilde{\mathcal{O}}_{m}^{(\varepsilon)}\left(\sqrt{\frac{\log m}{m}}\right),
\end{equation}
for any $i \in [m]$ and any $j \in [n]$.

By Lemma~\ref{lemma:existence_uniqueness_scaling} and the definition of scaling factors, the two pairs of scaling factors ($\Alpha^{(\theta)}$, $\Beta^{(\theta)}$) and ($\Alpha^{(\theta)'}$, $\Beta^{(\theta)'}$) satisfy $\alpha_i^{(\theta)}\beta_j^{(\theta)} = \alpha_i^{(\theta)'}\beta_j^{(\theta)'}$, for any $i \in [m]$ and $j \in [n]$. Hence, applying a union bound over all $i \in [m]$ and $j \in [n]$ under Assumption~\ref{assump:m_and_n}, we have: for any $i \in [m]$ and $j \in [n]$,
\begin{equation}\label{eq:mul1}
\begin{aligned}
    &W_{ij}^{(\theta)} = \alpha_i^{(\theta)} e^{\frac{-\|\mathbf{x}_i - \mathbf{y}_j\|_2^2}{\varepsilon}} \beta_j^{(\theta)} 
    = \alpha_i^{(\theta)'} e^{\frac{-\|\mathbf{x}_i - \mathbf{y}_j\|_2^2}{\varepsilon}}\beta_j^{(\theta)'} 
    = \hat{\alpha}_i^{(\theta)} e^{\frac{-\|\mathbf{x}_i - \mathbf{y}_j\|_2^2}{\varepsilon}}\hat{\beta}_j^{(\theta)}\left(1+\widetilde{\mathcal{O}}_{m}^{(\varepsilon)}\left(\sqrt{\frac{\log m}{m}}\right)\right) \\
    &= \frac{u^{(\theta)} _\varepsilon(\mathbf{x}_i) \mathcal{K}_\varepsilon(\mathbf{x}_i,\mathbf{y}_j) v^{(\theta)} _\varepsilon(\mathbf{y}_j) }{\sqrt{\Zx\Zy}} \left(1+\widetilde{\mathcal{O}}_{m}^{(\varepsilon)}\left(\sqrt{\frac{\log m}{m}}\right)\right) = \frac{\mathcal{W}^{(\theta)}_{\varepsilon}(\mathbf{x}_i,\mathbf{y}_j)}{\sqrt{\Zx\Zy}} \left(1+\widetilde{\mathcal{O}}_{m}^{(\varepsilon)}\left(\sqrt{\frac{\log m}{m}}\right)\right)
\end{aligned}
\end{equation}
Since $\mathcal{W}_\varepsilon^{(\theta)}$ is continuous and bounded above on the
compact product manifold $\mathcal{M}_x \times \mathcal{M}_y$, the multiplicative
estimate in~\eqref{eq:mul1} implies the additive bound
in~\eqref{eq:W_ij2W_kernel}, completing the proof.

\end{proof}

\subsection{Auxiliary Lemmas for the Proof of Lemma~\ref{lem:self-normalizing kde}}\label{sec:lemma_kde}

\begin{lemma}
\label{lem:bias}
Let $\mathcal{M} \subset \mathbb{R}^{D}$ be a smooth, compact, boundaryless
$d$-dimensional Riemannian manifold, and let $f \in \mathcal{C}^3(\mathcal{M})$.
Then there exist $h_0(\mathcal{M}) > 0$ and
$C = C(\mathcal{M}, f) > 0$ such that for all
$0 < h \le h_0(\mathcal{M})$,
\begin{equation}
    \sup_{x \in \mathcal{M}} \left|
    \frac{1}{(\pi h)^{d/2}} \int_{\mathcal{M}}
    \exp\!\left(-\frac{\|x - y\|^2}{h}\right) f(y)\, dV(y)
    - f(x) \right| \;\le\; Ch .
\end{equation}
\end{lemma}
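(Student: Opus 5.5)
The bound follows from the standard Laplace-type expansion of a Gaussian kernel integral on a manifold, carried out in normal coordinates. I will make it uniform in $x$ using compactness.

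\textbf{Step 1 (localization).} Fix $x\in\mathcal{M}$ and set $r_h = h^{1/2}\log(1/h)$ (any $r_h$ with $r_h\to0$ and $r_h^2/h\to\infty$ works). Split the integral into the regions $\|x-y\|\le r_h$ and $\|x-y\|> r_h$. On the far region the integrand is at most $e^{-r_h^2/h}\|f\|_\infty$. Since $\mathrm{Vol}(\mathcal{M})<\infty$, after multiplying by $(\pi h)^{-d/2}$ this contributes $O(h^{-d/2}e^{-\log^2(1/h)})=o(h^2)$ uniformly in $x$. Compactness and the smooth embedding give a uniform injectivity radius $\iota>0$ and a constant $c$ with $d_{\mathcal{M}}(x,y)\le c\|x-y\|$ for nearby points. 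Hence, for $h\le h_0(\mathcal{M})$, the near region lies inside the geodesic ball $B(x,\iota/2)$, and we may pass to normal coordinates $y=\exp_x(s)$ with $s\in T_x\mathcal{M}\cong\mathbb{R}^d$.

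\textbf{Step 2 (expansions).} In normal coordinates we use three standard uniform expansions, with constants controlled by the second fundamental form and curvature, which are bounded by compactness:
\begin{itemize}
\item $\|x-\exp_x(s)\|^2=\|s\|^2+Q_4(s)+O(\|s\|^5)$, where $Q_4$ is a homogeneous quartic;
\item $dV=(1+Q_2(s)+O(\|s\|^3))\,ds$, where $Q_2$ is a quadratic (Ricci) term;
\item $f(\exp_x s)=f(x)+\nabla f(x)\cdot s+\tfrac12 s^\top H(x)s+O(\|s\|^3)$, using $f\in\mathcal{C}^3$.
\end{itemize}
Substituting $s=\sqrt h\,t$ gives $(\pi h)^{-d/2}ds=\pi^{-d/2}dt$ and
\[
\exp\!\big(-\|x-y\|^2/h\big)=e^{-\|t\|^2}\big(1-hQ_4(t)+O(h^{3/2}\|t\|^5)\big),
\]
valid on the region $\|t\|\le r_h/\sqrt h$, where $hQ_4(t)$ stays small.

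\textbf{Step 3 (integrate term by term).} The zeroth-order term is $\pi^{-d/2}\int e^{-\|t\|^2}f(x)\,dt=f(x)$. Truncating the domain of $t$ to the ball costs only an exponentially small amount. The first-order term $\sqrt h\,\nabla f\cdot t$ integrates to zero by odd symmetry. Every remaining term is $O(h)$ times a Gaussian moment, with coefficients bounded by $\|f\|_{\mathcal{C}^3}$ and the geometry. The difference is therefore at most $Ch$ uniformly in $x$; in fact the $O(h^{1/2})$ and $O(h^{3/2})$ terms vanish by symmetry, though we only need $O(h)$.

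\textbf{Main obstacle.} The main difficulty is the uniformity of the remainder bounds in $x$. One needs the Taylor remainders for the distance, the volume density and $f$ to hold with constants independent of $x$ on a fixed-radius normal ball. I will handle this by noting that the exponential map, the metric coefficients and the embedding are smooth functions of $(x,s)$ on the compact set $\{(x,s):\|s\|\le\iota/2\}$. Their derivatives up to the needed order are therefore uniformly bounded, which also fixes $h_0(\mathcal{M})$.
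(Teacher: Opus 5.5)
Your argument is correct, but it takes a different route from the paper. The paper computes nothing directly. It applies Lemma~8 of Coifman--Lafon to the profile $e^{-t}$ and reads off $f(x)+\tfrac{h}{4}\bigl[\omega(x)f(x)+\Delta_{\mathcal{M}}f(x)\bigr]+O(h^2)$ uniformly in $x$; the boundary restriction in that lemma is vacuous since $\partial\mathcal{M}=\emptyset$. It then bounds the bracket by compactness. You instead re-derive by hand, via Laplace's method in normal coordinates, only the part of that expansion that is actually needed.

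The citation is shorter and identifies the first-order coefficient, but it delegates uniformity and regularity to the source. An $O(h^2)$ remainder can fail when $f$ is only $\mathcal{C}^3$: a $\mathcal{C}^3$ function equal to $\|s\|^{7/2}$ in normal coordinates near a point gives a term of exact order $h^{7/4}$ there. This is harmless only because the lemma needs just $O(h)$.

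Your derivation is self-contained, and it makes explicit both the uniformity in $x$ and the fact that $h_0$ depends only on $\mathcal{M}$. It also shows how little is required. The facts $\|x-\exp_x s\|^2=\|s\|^2+O(\|s\|^4)$ and $dV=(1+O(\|s\|^2))\,ds$, plus a first-order Taylor expansion of $f$ with $O(\|s\|^2)$ remainder, already give $Ch$. So $f\in\mathcal{C}^2$ would suffice, and identifying $Q_4$, the Ricci term and the Hessian is more than you need.

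One detail to tighten: you split by Euclidean distance, but then expand on, and use odd symmetry over, the ball $\|t\|\le r_h/\sqrt{h}$. The Euclidean near region is not a ball in $t$. Localize instead by geodesic radius $\|s\|\le r_h$ from the start. By your comparison $d_{\mathcal{M}}(x,y)\le c\|x-y\|$ for Euclidean-close points, every $y$ outside that geodesic ball satisfies $\|x-y\|>r_h/c$ once $h$ is small. The kernel there is then at most $e^{-\log^2(1/h)/c^2}$, which is still super-polynomially small.
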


\begin{proof}
By \cite[Lemma 8]{diffusion_maps} applied to the Gaussian kernel
$h(t) = e^{-t}$, for which
$\int_{\mathbb{R}^d} h(\|u\|^2)\,du = \pi^{d/2}$ and
$\int_{\mathbb{R}^d} u_1^2 h(\|u\|^2)\,du = \tfrac{1}{2}\pi^{d/2}$, we have
\begin{equation*}
    \frac{1}{(\pi h)^{d/2}} \int_{\mathcal{M}}
    \exp\!\left(-\frac{\|x - y\|^2}{h}\right) f(y)\, dV(y)
    = f(x) + \frac{h}{4}\left[\omega(x)f(x)
    + \Delta_{\mathcal{M}}f(x)\right] + O(h^2),
\end{equation*}
uniformly in $x \in \mathcal{M}$, where $\omega$ is a potential term depending
on the embedding of $\mathcal{M}$ and $\Delta_{\mathcal{M}}$ is the Laplace–Beltrami operator. (The estimate in \cite{diffusion_maps} is
stated uniformly over points at distance more than $\varepsilon^{\gamma}$ from
$\partial\mathcal{M}$ for any fixed $0 < \gamma < 1/2$; since
$\partial\mathcal{M} = \emptyset$ this is all of $\mathcal{M}$.) Since $\omega$, $f$ and
$\Delta_{\mathcal{M}}f$ are continuous on the compact manifold $\mathcal{M}$,
the first-order term is bounded by a constant multiple of $h$
uniformly in $x$, and the claim follows.
\end{proof}

\begin{lemma}
\label{lem:kde-uniform}
Let $\mathcal{M} \subset \mathbb{R}^{D}$ be a smooth, compact, boundaryless
$d$-dimensional Riemannian manifold isometrically embedded in $\mathbb{R}^{D}$,
let $\mu \in \mathcal{C}^{3}(\mathcal{M})$ be a positive probability density on
$\mathcal{M}$, and let $\mathbf{x}_1, \dots, \mathbf{x}_n \overset{\mathrm{iid}}{\sim} \mu$.
Define
\begin{equation}\label{eq:kde1}
    \hat{p}(x) \;:=\; \frac{1}{n}\sum_{j=1}^{n} \frac{1}{(\pi h)^{d/2}}
    \exp\!\left(-\frac{\|x - \mathbf{x}_j\|^{2}}{h}\right).
\end{equation}
Then there exist $h_0(\mathcal{M}, \mu) > 0$ and $C = C(\mathcal{M}, \mu) > 0$
such that for all $n \ge 2 h_0(\mathcal{M},\mu)^{-1/2}$ and all fixed $h$ with
$4 n^{-2} \le h \le h_0(\mathcal{M}, \mu)$, with probability at least
$1 - n^{-2}$,
\begin{equation}
\label{eq:kde-uniform}
    \sup_{x \in \mathcal{M}} \bigl| \hat{p}(x) - \mu(x) \bigr|
    \;\le\; C\left( h + \sqrt{\frac{\log n}{n h^{d}}} \right).
\end{equation}
\end{lemma}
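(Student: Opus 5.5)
We split the error into a deterministic bias term and a stochastic fluctuation term,
\[
\hat p(x)-\mu(x) = \bigl(\hat p(x)-\mathbb{E}\hat p(x)\bigr) + \bigl(\mathbb{E}\hat p(x)-\mu(x)\bigr),
\]
where $\mathbb{E}\hat p(x) = (\pi h)^{-d/2}\int_{\mathcal{M}} \exp(-\|x-y\|^2/h)\,\mu(y)\,dV(y)$. The bias term is handled directly by Lemma~\ref{lem:bias} with $f=\mu\in\mathcal{C}^3(\mathcal{M})$: for $h\le h_0(\mathcal{M})$ it is at most $Ch$ uniformly in $x$. All remaining work is a uniform-in-$x$ concentration bound of order $\sqrt{\log n/(nh^d)}$.

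\textbf{Pointwise Bernstein bound.} For fixed $x$, write $\hat p(x)=\frac1n\sum_j \xi_j$ with $\xi_j=(\pi h)^{-d/2}\exp(-\|x-\mathbf{x}_j\|^2/h)$. Then $0\le \xi_j\le (\pi h)^{-d/2}$. For the variance,
\[
\mathbb{E}\xi_j^2=(\pi h)^{-d}\int \exp(-2\|x-y\|^2/h)\,\mu(y)\,dV(y)\le C_1 h^{-d/2},
\]
which follows by applying Lemma~\ref{lem:bias} with bandwidth $h/2$ (allowed since $h/2\le h_0$) and using $\sup\mu<\infty$. Bernstein's inequality then gives, with probability at least $1-2n^{-k}$,
\[
|\hat p(x)-\mathbb{E}\hat p(x)|\lesssim \sqrt{\frac{k\log n}{nh^{d/2}}}+\frac{k\log n}{n h^{d/2}}.
\]
This is already better than the target $\sqrt{\log n/(nh^d)}$, since $h^{d/2}\ge h^d$ for $h\le1$. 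The linear term is dominated by the square-root term whenever $\log n/(nh^{d/2})\lesssim 1$. If it is not, then $\log n/(nh^d)\gtrsim 1$ and the claimed bound exceeds a constant. In that regime the trivial bound $|\hat p-\mu|\le (\pi h)^{-d/2}+\sup\mu$ requires care, so it is cleaner to absorb it via $\frac{\log n}{nh^{d/2}}\le \sqrt{\frac{\log n}{nh^{d/2}}}\cdot\sqrt{\frac{\log n}{nh^{d/2}}}$ and a case split.

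\textbf{Uniformity via a net (main obstacle).} We need the bound over all $x\in\mathcal{M}$, not just sample points. We take a $\delta$-net $\mathcal{N}$ of the compact $d$-manifold with $|\mathcal{N}|\le C_{\mathcal{M}}\delta^{-d}$. The gradient of $x\mapsto (\pi h)^{-d/2}e^{-\|x-y\|^2/h}$ is bounded by $C h^{-d/2-1/2}$, so both $\hat p$ and $\mathbb{E}\hat p$ are Lipschitz with constant $L\lesssim h^{-(d+1)/2}$. Using $h\ge 4n^{-2}$ gives $L\lesssim n^{d+1}$. Choosing $\delta=n^{-(d+2)}$ makes the discretization error $L\delta\lesssim n^{-1}$, which is below $\sqrt{\log n/(nh^d)}$ because $h\le1$. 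The net then has polynomially many points, $|\mathcal{N}|\lesssim n^{d(d+2)}$.

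\textbf{Union bound and assembly.} A union bound over $\mathcal{N}$ with $k=d(d+2)+2$ keeps the failure probability at most $n^{-2}$ (adjusting constants), and $k$ enters only through $C$. The conditions $n\ge 2h_0^{-1/2}$ and $h\ge 4n^{-2}$ serve only to make the interval for $h$ nonempty and the polynomial net size valid. Combining the bias bound $Ch$ with the uniform fluctuation bound yields~\eqref{eq:kde-uniform}.

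The delicate point is bookkeeping the Bernstein linear term and the Lipschitz constant so that everything is absorbed into $C(h+\sqrt{\log n/(nh^d)})$ under only the weak lower constraint $h\ge 4n^{-2}$. The net step is routine but requires the explicit polynomial control above.
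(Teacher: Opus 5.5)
Your proof is correct, but it handles the fluctuation term differently from the paper. The paper does no net argument of its own. It cites a uniform concentration bound from \cite{dww} for the class of \emph{unnormalized} kernels $\{\exp(-\|x-\cdot\|^2/h): x\in\mathcal{M}\}$, which gives a deviation of order $n^{-1/2}\bigl(\sqrt{\log(2/\sqrt h)}+\sqrt{\log n}\bigr)$ using only boundedness by $1$. It then uses $h\ge 4n^{-2}$ to replace $\log(2/\sqrt h)$ by $\log n$, and dividing by $(\pi h)^{d/2}$ gives exactly $\sqrt{\log n/(nh^d)}$.

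Your pointwise Bernstein bound plus an explicit $n^{-(d+2)}$-net is self-contained. Because it uses the variance bound $\mathbb{E}\xi_j^2\lesssim h^{-d/2}$, it actually gives the sharper fluctuation rate $\sqrt{\log n/(nh^{d/2})}$. That is the natural rate, since the effective bandwidth of $\exp(-\|x-y\|^2/h)$ is $\sqrt h$, consistent with the $O(h)$ bias. The paper's Hoeffding-type route loses a factor $h^{-d/4}$ but is shorter. In both proofs the condition $h\ge 4n^{-2}$ does the same job: it keeps the entropy of the kernel class at scale $O(\log n)$. In yours it enters through the Lipschitz constant $h^{-(d+1)/2}\lesssim n^{d+1}$.

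Two small bookkeeping points need fixing.
\begin{itemize}
\item Absorbing the Bernstein linear term needs no case split, since $\frac{\log n}{nh^{d/2}}=\sqrt{\frac{\log n}{nh^{d}}}\cdot\sqrt{\frac{\log n}{n}}\le\sqrt{\frac{\log n}{nh^{d}}}$ for all $n\ge 1$. The inequality you wrote is just an identity and does not accomplish this. You are right that the trivial bound $(\pi h)^{-d/2}$ would not suffice in the bad regime.
\item In the union bound, $k=d(d+2)+2$ leaves a total failure probability of $2C_{\mathcal{M}}n^{-2}$. Take $k$ larger by roughly $\log_2(2C_{\mathcal{M}})$ so that the bound is $n^{-2}$ for all $n\ge 2$; this only changes $C$.
\end{itemize}
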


\begin{proof}
We decompose the error via the triangle inequality:
\begin{equation}
\label{eq:kde-split}
    \sup_{x \in \mathcal{M}} \bigl| \hat{p}(x) - \mu(x) \bigr|
    \;\le\; \sup_{x \in \mathcal{M}} \bigl| \hat{p}(x) - \mathbb{E}[\hat{p}(x)] \bigr|
    \;+\; \sup_{x \in \mathcal{M}} \bigl| \mathbb{E}[\hat{p}(x)] - \mu(x) \bigr|.
\end{equation}

\emph{Bias term.} Since $\mathbf{x}_j \overset{\mathrm{iid}}{\sim} \mu$, the expected
value of $\hat{p}(x)$ is given by
\begin{equation*}
    \mathbb{E}[\hat{p}(x)] \;=\; \frac{1}{(\pi h)^{d/2}} \int_{\mathcal{M}}
    \exp\!\left(-\frac{\|x - y\|^{2}}{h}\right) \mu(y)\, dV(y),
\end{equation*}
so applying Lemma~\ref{lem:bias} with $f = \mu$, there exist constants
$C_1 = C_1(\mathcal{M}, \mu) > 0$ and
$h_1(\mathcal{M},\mu) > 0$ such that for all $0 < h \le h_1$,
\begin{equation}
\label{eq:kde-bias}
    \sup_{x \in \mathcal{M}} \bigl| \mathbb{E}[\hat{p}(x)] - \mu(x) \bigr|
    \;\le\; C_1 h .
\end{equation}
Since this bound is deterministic, it holds with probability $1$.

\emph{Variance term.} Set $\eta := \sqrt{h}/2$, so that the kernel
$k_\eta(x,y) = \exp\bigl(-\|x-y\|^{2}/(4\eta^{2})\bigr)$ of
\cite[Section~3.1]{dww} coincides with $\exp\bigl(-\|x-y\|^{2}/h\bigr)$. By
\cite[Proposition~2, equation~(64)]{dww}, applied with bandwidth $\eta = \sqrt{h}/2$ to the
function class $\mathcal{K}_\eta = \{\, k_\eta(x,\cdot) : x \in \mathcal{M} \,\}$
of \cite[Definition~6]{dww}, there exist $h_2(\mathcal{M},\mu) > 0$ and
$C_{\mathrm{gc}} > 0$ such that for all $0 < h \le h_2(\mathcal{M},\mu)$, with
probability at least $1 - n^{-2}$,
\begin{equation}
\label{eq:kde-gc}
    \sup_{x \in \mathcal{M}} \left| \frac{1}{n}\sum_{j=1}^{n}
    \exp\!\left(-\frac{\|x - \mathbf{x}_j\|^{2}}{h}\right)
    - \int_{\mathcal{M}} \exp\!\left(-\frac{\|x - y\|^{2}}{h}\right)
    \mu(y)\, dV(y) \right|
    \;\le\; \frac{C_{\mathrm{gc}}}{\sqrt{n}}
    \left( \sqrt{\log(2/\sqrt{h})} + \sqrt{\log n} \right),
\end{equation}
where $C_{\mathrm{gc}}$ depends only on $d$, the diameter of $\mathcal{M}$, and
the uniform upper and lower bounds of $\mu$, all of which are finite since
$\mathcal{M}$ is compact and $\mu$ is positive and continuous. 

The condition
$h \ge 4 n^{-2}$ gives $\eta = \sqrt{h}/2 \ge 1/n$, hence
$\log(2/\sqrt{h}) = \log(1/\eta) \le \log n$, so that the right-hand side of
\eqref{eq:kde-gc} is at most $2 C_{\mathrm{gc}} \sqrt{\log n / n}$. Dividing
\eqref{eq:kde-gc} by $(\pi h)^{d/2}$, the left-hand side becomes
$\sup_{x \in \mathcal{M}} \bigl| \hat{p}(x) - \mathbb{E}[\hat{p}(x)] \bigr|$,
and we obtain
\begin{equation}
\label{eq:kde-var}
    \sup_{x \in \mathcal{M}} \bigl| \hat{p}(x) - \mathbb{E}[\hat{p}(x)] \bigr|
    \;\le\; C_3 \sqrt{\frac{\log n}{n h^{d}}},
    \qquad C_3 := 2 \pi^{-d/2} C_{\mathrm{gc}} .
\end{equation}

Substituting \eqref{eq:kde-bias} and \eqref{eq:kde-var} into
\eqref{eq:kde-split} and setting $h_0 := \min\{h_1, h_2\}$ and
$C := \max\{C_1, C_3\}$ yields \eqref{eq:kde-uniform}.
\end{proof}

\subsection{Proof of Lemma~\ref{lem:self-normalizing kde}}\label{sec:proof_density_estimated}

\begin{proof}
Let $h_0^{\star}$ and $C^{\star}$
be the constants in Lemma~\ref{lem:kde-uniform}, both depending only on
$\mathcal{M}_x$ and $\mu$. For a fixed bandwidth $h$, let $\hat p$ denote the
kernel density estimator in~\eqref{eq:kde1} constructed from all $m$
samples $\mathbf{x}_1, \dots, \mathbf{x}_m$ and for each $i\in [m]$, we define 
\begin{equation}
\label{eq:A1-loo-def}
    \hat p_{-i}(\mathbf{x}_i) := \frac{1}{m-1} \sum_{j \neq i}^m
    \frac{1}{(\pi h)^{d/2}} K_h(\mathbf{x}_i, \mathbf{x}_j) = \frac{\hat q(\mathbf{x}_i)}{(\pi h)^{d/2}}, \quad \bar Z = \frac{1}{m}\sum_{i=1}^{m} \hat p_{-i}(\mathbf{x}_i). 
\end{equation}
Note by definition of $\hat f(\mathbf{x}_i)$ in~\eqref{eq:self_KDE}, $\hat f(\mathbf{x}_i) = \frac{\hat p_{-i}(\mathbf{x}_i)}{\bar Z}$. We also note that since $\mu$ is positive and continuous on the compact
manifold $\mathcal{M}_x$, there exist $0 < c_1 \le c_2 < \infty$ with $
    c_1 \le \mu(\mathbf{x}) \le c_2, \forall\, \mathbf{x} \in \mathcal{M}_x$.

We take $m_0 \ge 2$ large enough that, for all $m \ge m_0$: (i) $(\kappa \log m / m)^{1/d} \le h_0$, so that \eqref{eq:A1-window} is
nonempty; and (ii) $(\kappa \log m / m)^{1/d} \ge 4 m^{-2}$, so that
\eqref{eq:A1-window} implies $4 m^{-2} \le h \le h_0 \le h_0^{\star}$ and
Lemma~\ref{lem:kde-uniform} applies with sample size $m$. Each of these holds
for all sufficiently large $m$. The constants $h_0 \le \min\{1, h_0^{\star}\}$
and $\kappa$ are fixed at the end of Step~3; the constants $C_1, C_2, C_3$
introduced below depend only on $C^{\star}$, $c_1$, $c_2$ and $d$, and in
particular not on $h_0$ or $\kappa$.

\emph{Step 1:} We first bound $\bigl| \hat p_{-i}(\mathbf{x}_i) - \mu(\mathbf{x}_i) \bigr|$.

Since $K_h(\mathbf{x}_i, \mathbf{x}_i) = 1$, we note $\hat p(\mathbf{x}_i)$ and $\hat p_{-i}(\mathbf{x}_i)$ are related by
\begin{equation}\label{eq:leave}
    \hat p_{-i}(\mathbf{x}_i) = \frac{m}{m-1}\, \hat p(\mathbf{x}_i)
    - \frac{1}{(m-1)(\pi h)^{d/2}} , \quad \forall i \in [m].
\end{equation}
Using~\eqref{eq:leave}, we obtain, for any $i \in [m]$:
\begin{equation}
\label{eq:A1-loo-bound}
\begin{aligned}
    &\bigl| \hat p_{-i}(\mathbf{x}_i) - \mu(\mathbf{x}_i) \bigr|
    = \left| \frac{m}{m-1}\bigl( \hat p(\mathbf{x}_i) - \mu(\mathbf{x}_i) \bigr)
       + \frac{\mu(\mathbf{x}_i)}{m-1}
       - \frac{1}{(m-1)\, \pi^{d/2}\, h^{d/2}} \right| \\
    &\le \frac{m}{m-1}
       \bigl| \hat p(\mathbf{x}_i) - \mu(\mathbf{x}_i) \bigr|
       + \frac{c_2}{m-1}
       + \frac{1}{(m-1)\, \pi^{d/2}\, h^{d/2}} 
    \le 2 \bigl| \hat p(\mathbf{x}_i) - \mu(\mathbf{x}_i) \bigr|
       + \frac{2 c_2}{m} + \frac{2}{\pi^{d/2}\, m\, h^{d/2}} \\
    & \le 2
       \bigl| \hat p(\mathbf{x}_i) - \mu(\mathbf{x}_i) \bigr| + (2 c_2 + 2\pi^{-d/2}) \sqrt{\frac{\log m}{m h^{d}}}.
\end{aligned}
\end{equation}
where the first line uses \eqref{eq:leave}, the second line follows from the triangle inequality and the bounds $m/(m-1) \le 2$ and $1/(m-1) \le 2/m$,
both valid for $m \ge m_0 \ge 2$, and the third line from
$\tfrac{1}{m h^{d/2}} \le \sqrt{\tfrac{\log m}{m h^{d}}}$ and
$\tfrac{1}{m} \le \sqrt{\tfrac{\log m}{m h^{d}}}$, valid for
$m \ge m_0 \ge 2$ and $h \le h_0 \le 1$.
    
By (ii) the bandwidth $h$ lies in the admissible range of
Lemma~\ref{lem:kde-uniform}, so with probability at least $1 - m^{-2}$,
\begin{equation}
\label{eq:A1-event1}
    \sup_{\mathbf{x} \in \mathcal{M}_x} \bigl| \hat p(\mathbf{x}) - \mu(\mathbf{x}) \bigr|
    \;\le\; C^{\star} \left( h + \sqrt{\frac{\log m}{m h^{d}}} \right).
\end{equation}
Call this event $\mathcal{A}$. Since \eqref{eq:A1-event1} is uniform over
$\mathcal{M}_x$, it holds simultaneously at every sample point $\mathbf{x}_i$. 
Combining~\eqref{eq:A1-loo-bound} and ~\eqref{eq:A1-event1}, we have: with probability at least $1 - m^{-2}$,
\begin{equation}
\label{eq:A1-step1}
    \max_{i \in [m]} \bigl| \hat p_{-i}(\mathbf{x}_i) - \mu(\mathbf{x}_i) \bigr|
    \;\le\; C_1 \left( h + \sqrt{\frac{\log m}{m h^{d}}} \right),
    \qquad C_1 := 2 C^{\star} + 2 c_2 + 2\pi^{-d/2} .
\end{equation}

\emph{Step 2:} we next bound $\bigl| \bar Z - (\|\mu\|_{L^2})^{2} \bigr|$.
\begin{equation}
\label{eq:A1-Zsplit}
    \bar Z - (\|\mu\|_{L^2})^{2}
    = \frac{1}{m}\sum_{i=1}^{m}
      \bigl( \hat p_{-i}(\mathbf{x}_i) - \mu(\mathbf{x}_i) \bigr)
    + \left( \frac{1}{m}\sum_{i=1}^{m} \mu(\mathbf{x}_i) - (\|\mu\|_{L^2})^{2} \right). 
\end{equation}
On event $\mathcal{A}$, the first term is bounded by
$C_1 ( h + \sqrt{\log m/(m h^{d})})$ by \eqref{eq:A1-step1}. 
For the second, since $\mathbb{E}[\mu(\mathbf{x})] = \int_{\mathcal{M}_x} \mu(\mathbf{x}) \cdot
\mu(\mathbf{x}) \, d\omega(\mathbf{x}) = (\|\mu\|_{L^2})^{2}$ and
$\mathbf{x}_1, \dots, \mathbf{x}_m$ are i.i.d. with
$\mu(\mathbf{x}_i) \in [c_1, c_2]$, Hoeffding's inequality gives, for any $t>0$,
\begin{equation}
\label{eq:A1-hoeffding}
    \mathbb{P}\left( \left| \frac{1}{m}\sum_{i=1}^{m} \mu(\mathbf{x}_i)
    - (\|\mu\|_{L^2})^{2} \right| \ge t \right)
    \;\le\; 2 \exp\!\left( -\frac{2 m t^{2}}{(c_2 - c_1)^{2}} \right).
\end{equation}
Choosing $t = (c_2 - c_1)\sqrt{\log(2m^{2})/(2m)}$ makes the right-hand side of
\eqref{eq:A1-hoeffding} equal to $m^{-2}$, and with probability at least $1 - m^{-2}$,
\begin{equation}
\label{eq:A1-event2}
    \left| \frac{1}{m}\sum_{i=1}^{m} \mu(\mathbf{x}_i) - (\|\mu\|_{L^2})^{2} \right|
    \;\le\; (c_2 - c_1) \sqrt{\frac{\log(2m^{2})}{(2m)}} \;\le\; 2 (c_2 - c_1) \sqrt{\frac{\log m}{m}}
    \;\le\; 2 (c_2 - c_1) \sqrt{\frac{\log m}{m h^{d}}},
\end{equation}
where we use $m \ge m_0 \ge 2$, and
$h \le h_0 \le 1$. Call this event $\mathcal{B}$ and define $C_2: = 2 (c_2 - c_1)$. 

By the union bound
$\mathbb{P}(\mathcal{A} \cap \mathcal{B}) \ge 1 - 2m^{-2}$, and on
$\mathcal{A} \cap \mathcal{B}$, combining
\eqref{eq:A1-Zsplit}--\eqref{eq:A1-event2},
\begin{equation}
\label{eq:A1-step2}
    \bigl| \bar Z - (\|\mu\|_{L^2})^{2} \bigr|
    \;\le\; C_3 \left( h + \sqrt{\frac{\log m}{m h^{d}}} \right),
    \qquad C_3 := C_1 + C_2 .
\end{equation}

\emph{Step 3: }We next bound $\bar Z$ away from zero. Under \eqref{eq:A1-window} we have
$h \le h_0$ and $m h^{d} \ge \kappa \log m$, hence
\begin{equation}
\label{eq:A1-Ebound}
    h + \sqrt{\frac{\log m}{m h^{d}}}
    \;\le\; h_0 + \frac{1}{\sqrt{\kappa}} .
\end{equation}
Choosing
\begin{equation}
\label{eq:A1-choice}
    h_0 \;:=\; \min\left\{ 1,\; h_0^{\star},\;
    \frac{(\|\mu\|_{L^2})^{2}}{4 C_3} \right\},
    \qquad
    \kappa \;:=\; \left( \frac{4 C_3}{(\|\mu\|_{L^2})^{2}} \right)^{2},
\end{equation}
both of which depend only on $\mathcal{M}_x$ and $\mu$, inequalities
\eqref{eq:A1-Ebound} and \eqref{eq:A1-step2} give
$| \bar Z - (\|\mu\|_{L^2})^{2} | \le (\|\mu\|_{L^2})^{2}/2$ and therefore, on $\mathcal{A} \cap \mathcal{B}$:
\begin{equation}
\label{eq:A1-Zlower}
    \bar Z \;\ge\; \frac{(\|\mu\|_{L^2})^{2}}{2}.
\end{equation}
 
\begin{equation*}
\begin{aligned}
    &\left| \hat f(\mathbf{x}_i)
    - \frac{\mu(\mathbf{x}_i)}{(\|\mu\|_{L^2})^{2}} \right|
    = \left| \frac{\hat p_{-i}(\mathbf{x}_i)}{\bar Z}
       - \frac{\mu(\mathbf{x}_i)}{(\|\mu\|_{L^2})^{2}} \right| 
    = \left| \frac{(\|\mu\|_{L^2})^{2}
       \bigl( \hat p_{-i}(\mathbf{x}_i) - \mu(\mathbf{x}_i) \bigr)
       - \mu(\mathbf{x}_i)
       \bigl( \bar Z - (\|\mu\|_{L^2})^{2} \bigr)}
       {(\|\mu\|_{L^2})^{2}\, \bar Z} \right| \\
    &\le \frac{(\|\mu\|_{L^2})^{2}
       \bigl| \hat p_{-i}(\mathbf{x}_i) - \mu(\mathbf{x}_i) \bigr|
       + \mu(\mathbf{x}_i)
       \bigl| \bar Z - (\|\mu\|_{L^2})^{2} \bigr|}
       {(\|\mu\|_{L^2})^{2}\, \bar Z} 
    \le \frac{(\|\mu\|_{L^2})^{2} C_1
       \left( h + \sqrt{\dfrac{\log m}{m h^{d}}} \right)
       + c_2 C_3
       \left( h + \sqrt{\dfrac{\log m}{m h^{d}}} \right)}
       {(\|\mu\|_{L^2})^{2}\, \bar Z} \\
    &\le \frac{2 \bigl[ (\|\mu\|_{L^2})^{2} C_1 + c_2 C_3 \bigr]}
       {(\|\mu\|_{L^2})^{4}}
       \left( h + \sqrt{\frac{\log m}{m h^{d}}} \right),
\end{aligned}
\end{equation*}
where we use triangle inequality,~\eqref{eq:A1-step1} and~\eqref{eq:A1-step2} in the second line, and~\eqref{eq:A1-Zlower} in the last line. This concludes the proof.
\end{proof}

\subsection{Proof of Theorem~\ref{thm:finite_population_density_estimated}}\label{sec:proof density estimated}

\begin{proof}
    Let $\Alpha^{(\theta)}$ and $\Beta^{(\theta)}$ be the scaling factors in Algorithm~\ref{alg:EOT_invariant} that diagonally scale $\mathbf{M}^{(\theta)}$, defined in~\eqref{eq:M}, to satisfy the row and column constraints in~\eqref{eq:row_col_constraint}, for fixed $\theta \in [0,1]$, where $\hat{\mathbf{f}}$ and $\hat{\mathbf{g}}$ are the mean-normalized density estimates in~\eqref{eq:self_KDE} applied to $\mathbf{X}$ and $\mathbf{Y}$ with bandwidths $h_1$ and $h_2$. Define approximate scaling factors $\hat{\Alpha}^{(\theta)}$ and $\hat{\Beta}^{(\theta)}$ as
    \begin{equation}\label{eq:hat_scaling_factor_est}
        \hat{\alpha}_i^{(\theta)} = \frac{u^{(\theta)}_\varepsilon(\mathbf{x}_i)}{(\|\mu\|_{L^2})^{\theta}\sqrt{\Zx}},\quad \text{ and } \quad \hat{\beta}_j^{(\theta)} = \frac{v^{(\theta)}_\varepsilon(\mathbf{y}_j)}{(\|\nu\|_{L^2})^{\theta}\sqrt{\Zy}},
    \end{equation}
    for any $i \in [m]$ and any $j \in [n]$, where $u^{(\theta)}_\varepsilon$ and $v^{(\theta)}_\varepsilon$ are defined in~\eqref{eq:integral_eq}. For brevity we write
    \begin{equation}\label{eq:rate_def}
        \rho_1 := h_1 + \sqrt{\frac{\log m}{m h_1^{d_1}}},
        \qquad
        \rho_2 := h_2 + \sqrt{\frac{\log n}{n h_2^{d_2}}}.
    \end{equation}

    \emph{The density-estimation event.} Let $\mathcal{E}$ denote the event on which the conclusion of Lemma~\ref{lem:self-normalizing kde}, applied to $\mathbf{X}$ with bandwidth $h_1$ and to $\mathbf{Y}$ with bandwidth $h_2$, holds simultaneously:
    \begin{equation}\label{eq:kde_event}
        \left| \hat f(\mathbf{x}_i) - \frac{\mu(\mathbf{x}_i)}{(\|\mu\|_{L^2})^{2}} \right| \le C_0^x \rho_1
        \quad \text{and} \quad
        \left| \hat g(\mathbf{y}_j) - \frac{\nu(\mathbf{y}_j)}{(\|\nu\|_{L^2})^{2}} \right| \le C_0^y \rho_2,
        \qquad \forall\, i \in [m],\ \forall\, j \in [n],
    \end{equation}
    where $C_0^x = C_0(\mathcal{M}_x,\mu)$ and $C_0^y = C_0(\mathcal{M}_y,\nu)$ are defined in Lemma~\ref{lem:self-normalizing kde}. Applying the union bound over each dataset and under Assumption~\ref{assump:m_and_n} ($m \ge n^{\gamma}$ and $\gamma \le 1$), we have:
    \begin{equation}\label{eq:kde_event_prob}
        \mathbb{P}(\mathcal{E}) \;\ge\; 1 - 2m^{-2} - 2n^{-2} \;\ge\; 1 - 4 n^{-2\gamma}.
    \end{equation} 

    We next show that on $\mathcal{E}$,  $\hat f(\mathbf{x})$ and $\hat g(\mathbf{y})$ are bounded away from zero and infinity for any $\mathbf{x}$ and $\mathbf{y}$.

    Since $\mu$ and $\nu$ are positive and continuous on the compact manifolds
    $\mathcal{M}_x$ and $\mathcal{M}_y$, there exist constants
    $0 < c_1 \le c_2 < \infty$ and $0 < c_3 \le c_4 < \infty$ with
    \begin{equation}\label{eq:density_bounds}
        c_1 \le \mu(\mathbf{x}) \le c_2 \quad \forall\, \mathbf{x} \in \mathcal{M}_x,
        \qquad
        c_3 \le \nu(\mathbf{y}) \le c_4 \quad \forall\, \mathbf{y} \in \mathcal{M}_y .
    \end{equation}
    Writing $B_\mu := (\|\mu\|_{L^2})^{2}$ and $B_\nu := (\|\nu\|_{L^2})^{2}$, we have $\mu(\mathbf{x}_i)/B_\mu \in [c_1/B_\mu, c_2/B_\mu]$ and
    $\nu(\mathbf{y}_j)/B_\nu \in [c_3/B_\nu, c_4/B_\nu]$.
    Under~\eqref{eq:bandwidth_window}, we have 
    $\rho_1 \le h_1' + \kappa_1^{-1/2}$, and $\rho_2 \le h_2' + \kappa_2^{-1/2}$. Choosing
    \begin{equation}\label{eq:h_kappa_choice}
        h_1' \le \frac{c_1}{4 C_0^x B_\mu}, \quad
        \kappa_1 \ge \left( \frac{4 C_0^x B_\mu}{c_1} \right)^{2},
        \qquad
        h_2' \le \frac{c_3}{4 C_0^y B_\nu}, \quad
        \kappa_2 \ge \left( \frac{4 C_0^y B_\nu}{c_3} \right)^{2},
    \end{equation}
    all of which depend only on $\mathcal{M}_x, \mathcal{M}_y, \mu$ and $\nu$,
    gives $C_0^x \rho_1 \le c_1/(2 B_\mu)$ and
    $C_0^y \rho_2 \le c_3/(2 B_\nu)$. Combining with~\eqref{eq:kde_event}, on
    $\mathcal{E}$ we obtain
    \begin{equation}\label{eq:fhat_interval}
        \hat f(\mathbf{x}_i) \in I_x := \left[ \frac{c_1}{2 B_\mu},\, \frac{2 c_2}{B_\mu} \right],
        \qquad
        \hat g(\mathbf{y}_j) \in I_y := \left[ \frac{c_3}{2 B_\nu},\, \frac{2 c_4}{B_\nu} \right],
        \qquad \forall\, i \in [m],\ \forall\, j \in [n],
    \end{equation}
    and the intervals $I_x, I_y$ also contain the targets
    $\mu(\mathbf{x}_i)/B_\mu$ and $\nu(\mathbf{y}_j)/B_\nu$. In particular
    $I_x$ and $I_y$ are fixed compact subintervals of $(0,\infty)$, independent
    of $m$, $n$, $h_1$ and $h_2$.

    The map $\phi(z) = z^{-\theta}$ is continuously differentiable on
    $(0,\infty)$ with $\phi'(z) = -\theta z^{-\theta-1}$. For $\theta \in [0,1]$, using~\eqref{eq:fhat_interval}, we have:
    \begin{equation}\label{eq:lipschitz}
        L_x := \sup_{z \in I_x} |\phi'(z)| \le \left( \frac{2 B_\mu}{c_1} \right)^{2},
        \qquad
        L_y := \sup_{z \in I_y} |\phi'(z)| \le \left( \frac{2 B_\nu}{c_3} \right)^{2} .
    \end{equation}

    Applying the mean value theorem, we have: on $\mathcal{E}$, for any $i \in [m]$,
    \begin{equation}\label{eq:f_inv_est}
    \begin{aligned}
        \left| \frac{1}{(\hat f(\mathbf{x}_i))^{\theta}}
        - \frac{(\|\mu\|_{L^2})^{2\theta}}{(\mu(\mathbf{x}_i))^{\theta}} \right|
        &= \left| \phi\bigl( \hat f(\mathbf{x}_i) \bigr)
           - \phi\!\left( \frac{\mu(\mathbf{x}_i)}{B_\mu} \right) \right|
         = \bigl| \phi'(\xi_i) \bigr| \cdot
           \left| \hat f(\mathbf{x}_i) - \frac{\mu(\mathbf{x}_i)}{B_\mu} \right| 
        \le  L_x C_0^x \rho_1,
    \end{aligned}
    \end{equation}
    where $\xi_i$ lies between $\hat f(\mathbf{x}_i)$ and
    $\mu(\mathbf{x}_i)/B_\mu$, hence in $I_x$ by~\eqref{eq:fhat_interval}. Equivalently, we have:
    \begin{equation}\label{eq:1/f}
        \frac{1}{(\hat f(\mathbf{x}_i))^{\theta}} = \frac{(\|\mu\|_{L^2})^{2\theta}}{(\mu(\mathbf{x}_i))^{\theta}} + \mathcal{O}(\rho_1).
    \end{equation}
    An analogous derivation
    gives, for any $j \in [n]$,
    \begin{equation}\label{eq:1/g}
        \frac{1}{(\hat g(\mathbf{y}_j))^{\theta}} = \frac{(\|\nu\|_{L^2})^{2\theta}}{(\nu(\mathbf{y}_j))^{\theta}} + \mathcal{O}(\rho_2).
    \end{equation}
    Both bounds hold simultaneously for all $i \in [m]$ and $j \in [n]$, with
    absorbed constants depending only on $\mathcal{M}_x, \mathcal{M}_y, \mu, \nu$ and
    $\theta$.

    \emph{Bounding $S^{(\theta)}$.} Applying~\eqref{eq:1/f} and then Hoeffding's inequality, we have on $\mathcal{E}$
    \begin{equation}\label{eq:S1_est}
    \begin{aligned}
        \frac{1}{m} \sum_{i=1}^m \frac{1}{(\hat f(\mathbf{x}_i))^{\theta}}
        &= \frac{1}{m} \sum_{i=1}^m \frac{(\|\mu\|_{L^2})^{2\theta}}{(\mu(\mathbf{x}_i))^{\theta}} + \mathcal{O}(\rho_1) 
        = (\|\mu\|_{L^2})^{2\theta} \int_{\mathcal{M}_x} \frac{1}{(\mu(\mathbf{x}))^{\theta}} \mu(\mathbf{x})\, d\omega(\mathbf{x}) + \widetilde{\mathcal{O}}_{m}\left(\sqrt{\frac{\log m}{m}}\right) + \mathcal{O}(\rho_1)
        \; \\
        &=\; (\|\mu\|_{L^2})^{2\theta}\, \Zx + \widetilde{\mathcal{O}}_{m}(\rho_1),
    \end{aligned}
    \end{equation}
    where in the last step we use $\sqrt{\log m/m} \le \sqrt{\log m/(m h_1^{d_1})} \le \rho_1$,
    valid since $h_1 \le h_1' \le 1$, and absorb the term $\mathcal{O}(\rho_1)$,
    a deterministic bound on $\mathcal{E}$, into
    $\widetilde{\mathcal{O}}_{m}(\rho_1)$.
    Similarly,
    \begin{equation}\label{eq:S2_est}
        \frac{1}{n} \sum_{j=1}^n \frac{1}{(\hat g(\mathbf{y}_j))^{\theta}}
        = (\|\nu\|_{L^2})^{2\theta}\, \Zy + \widetilde{\mathcal{O}}_{n}(\rho_2).
    \end{equation}

Combining~\eqref{eq:S1_est} and~\eqref{eq:S2_est}, we have
    \begin{equation}\label{eq:S_bound_est}
    \begin{aligned}
        &\left| S^{(\theta)} - \frac{(\|\mu\|_{L^2})^{2\theta}\Zx}{(\|\nu\|_{L^2})^{2\theta}\Zy} \right|
        = \left| \frac{\frac{1}{m}\sum_{i=1}^m (\hat f(\mathbf{x}_i))^{-\theta}}
                      {\frac{1}{n}\sum_{j=1}^n (\hat g(\mathbf{y}_j))^{-\theta}}
          - \frac{(\|\mu\|_{L^2})^{2\theta}\Zx}{(\|\nu\|_{L^2})^{2\theta}\Zy} \right| \\
        &= \left| \frac{(\|\nu\|_{L^2})^{2\theta}\Zy
                 \left( \frac{1}{m}\sum_{i=1}^m (\hat f(\mathbf{x}_i))^{-\theta} - (\|\mu\|_{L^2})^{2\theta}\Zx \right)
                 - (\|\mu\|_{L^2})^{2\theta}\Zx
                 \left( \frac{1}{n}\sum_{j=1}^n (\hat g(\mathbf{y}_j))^{-\theta} - (\|\nu\|_{L^2})^{2\theta}\Zy \right)}
                 {(\|\nu\|_{L^2})^{2\theta}\Zy \cdot \frac{1}{n}\sum_{j=1}^n (\hat g(\mathbf{y}_j))^{-\theta}} \right| \\
        &\le \frac{(\|\nu\|_{L^2})^{2\theta}\Zy
                 \left| \frac{1}{m}\sum_{i=1}^m (\hat f(\mathbf{x}_i))^{-\theta} - (\|\mu\|_{L^2})^{2\theta}\Zx \right|
                 + (\|\mu\|_{L^2})^{2\theta}\Zx
                 \left| \frac{1}{n}\sum_{j=1}^n (\hat g(\mathbf{y}_j))^{-\theta} - (\|\nu\|_{L^2})^{2\theta}\Zy \right|}
                 {(\|\nu\|_{L^2})^{2\theta}\Zy \cdot \frac{1}{n}\sum_{j=1}^n (\hat g(\mathbf{y}_j))^{-\theta}} \\
        &= \widetilde{\mathcal{O}}_{m,n}(\rho_1 + \rho_2),
    \end{aligned}
    \end{equation}
    where the
    third line uses the triangle inequality, and the last uses~\eqref{eq:S1_est}
    and~\eqref{eq:S2_est} together with the fact that on $\mathcal{E}$ the average
    $\frac{1}{n}\sum_{j=1}^n (\hat g(\mathbf{y}_j))^{-\theta}$ is bounded away
    from zero by a constant depending only on $\mathcal{M}_y$ and $\nu$,
    by~\eqref{eq:fhat_interval} and~\eqref{eq:1/g}. An analogous derivation gives
    \begin{equation}\label{eq:S_inv_bound_est}
        \left| \frac{1}{S^{(\theta)}}
        - \frac{(\|\nu\|_{L^2})^{2\theta}\Zy}{(\|\mu\|_{L^2})^{2\theta}\Zx} \right|
        \;=\; \widetilde{\mathcal{O}}_{m,n}(\rho_1 + \rho_2).
    \end{equation}

    \emph{Approximate scaling.} We next show that $\hat{\Alpha}^{(\theta)}$ and $\hat{\Beta}^{(\theta)}$ approximately scale $\mathbf{M}^{(\theta)}$ to have prescribed row sums $\mathbf{r}$ and column sums $\mathbf{c}$. Recall that, from~\eqref{eq:row_col_constraint},
    \begin{equation}\label{eq:row_col_sum_est}
        r_i = \frac{n}{(\hat f(\mathbf{x}_i))^{\theta}\sqrt{S^{(\theta)}}}, \quad \text{and} \quad
        c_j = \frac{m\sqrt{S^{(\theta)}}}{(\hat g(\mathbf{y}_j))^{\theta}}, \quad \forall i \in [m], \forall j \in [n].
    \end{equation}
    Using~\eqref{eq:hat_scaling_factor_est},~\eqref{eq:row_col_sum_est}, $\mathcal{K}_{\varepsilon}$ and $\mathbf{M}^{(\theta)}$ defined in~\eqref{eq:W_kernel} and~\eqref{eq:M}, we have on $\mathcal{E}$: for any $i \in [m]$,
    \begin{equation}\label{eq:row_bound_est}
    \begin{aligned}
        &\frac{1}{r_i} \sum_{j=1}^n \hat{\alpha}_i^{(\theta)} M_{ij}^{(\theta)} \hat{\beta}_j^{(\theta)}
        = \frac{(\hat f(\mathbf{x}_i))^{\theta} \sqrt{S^{(\theta)}}}{n} \sum_{j=1}^n \frac{u^{(\theta)}_\varepsilon(\mathbf{x}_i)}{(\|\mu\|_{L^2})^{\theta}\sqrt{\Zx}} \frac{\mathcal{K}_\varepsilon(\mathbf{x}_i,\mathbf{y}_j)}{(\hat f(\mathbf{x}_i)\, \hat g(\mathbf{y}_j))^{\theta}} \frac{v^{(\theta)}_\varepsilon(\mathbf{y}_j)}{(\|\nu\|_{L^2})^{\theta}\sqrt{\Zy}} \\
        &= \frac{\sqrt{S^{(\theta)}}}{(\|\mu\|_{L^2}\|\nu\|_{L^2})^{\theta}\sqrt{\Zx\Zy}} \cdot \frac{1}{n} \sum_{j=1}^n u^{(\theta)}_\varepsilon(\mathbf{x}_i) \mathcal{K}_\varepsilon(\mathbf{x}_i,\mathbf{y}_j) v^{(\theta)}_\varepsilon(\mathbf{y}_j) \frac{1}{(\hat g(\mathbf{y}_j))^{\theta}} \\
        &= \frac{(\|\nu\|_{L^2})^{\theta}\sqrt{S^{(\theta)}}}{(\|\mu\|_{L^2})^{\theta}\sqrt{\Zx\Zy}} \cdot \frac{1}{n} \sum_{j=1}^n u^{(\theta)}_\varepsilon(\mathbf{x}_i) \mathcal{K}_\varepsilon(\mathbf{x}_i,\mathbf{y}_j) v^{(\theta)}_\varepsilon(\mathbf{y}_j) \frac{1}{(\nu(\mathbf{y}_j))^{\theta}} + \mathcal{O}(\rho_2), \\
        &= \frac{(\|\nu\|_{L^2})^{\theta}\sqrt{S^{(\theta)}}}{(\|\mu\|_{L^2})^{\theta}\sqrt{\Zx\Zy}} \left( \int_{\mathcal{M}_y} u^{(\theta)}_\varepsilon(\mathbf{x}_i) \mathcal{K}_\varepsilon(\mathbf{x}_i,\mathbf{y}) v^{(\theta)}_\varepsilon(\mathbf{y}) (\nu(\mathbf{y}))^{1-\theta}\, d\eta(\mathbf{y}) + \widetilde{\mathcal{O}}_{n}\left(\sqrt{\frac{\log n}{n}}\right) \right) + \mathcal{O}(\rho_2) \\
        &= \frac{(\|\nu\|_{L^2})^{\theta}\sqrt{S^{(\theta)}}\, \Zy}{(\|\mu\|_{L^2})^{\theta}\sqrt{\Zx\Zy}} + \widetilde{\mathcal{O}}_{n}(\rho_2)
        \;=\; 1 + \widetilde{\mathcal{O}}_{m,n}(\rho_1 + \rho_2) 
    \end{aligned}
    \end{equation}
    where the third line replaces $(\hat g(\mathbf{y}_j))^{-\theta}$ using~\eqref{eq:1/g}, together with the fact that $u^{(\theta)}_\varepsilon$, $v^{(\theta)}_\varepsilon$ and $\mathcal{K}_\varepsilon$ are bounded on the compact manifolds, the fourth line applies Hoeffding's inequality and used~\eqref{eq:integral_eq}, and the last line used~\eqref{eq:S_bound_est}.
    Under Assumption~\ref{assump:m_and_n}, a union bound over all $i \in [m]$ shows that~\eqref{eq:row_bound_est} holds for all $i \in [m]$. 
    
    Following an analogous derivation, for all $j \in [n]$,
    \begin{equation}\label{eq:col_bound_est}
        \frac{1}{c_j} \sum_{i=1}^m \hat{\alpha}_i^{(\theta)} M_{ij}^{(\theta)} \hat{\beta}_j^{(\theta)}
        = 1 + \widetilde{\mathcal{O}}_{m,n}(\rho_1+\rho_2).
    \end{equation}

    \emph{Applying the matrix scaling stability lemma.} We now apply Lemma~\ref{lem:scaling_factor_concen} to $\hat{\Alpha}^{(\theta)}$ and $\hat{\Beta}^{(\theta)}$, and first bound the quantities appearing there. By the definition of $s$ and using~\eqref{eq:S1_est}, and~\eqref{eq:S_inv_bound_est}, 
    \begin{equation}\label{eq:s_est}
        s = \|\mathbf{r}\|_1
        = \frac{nm}{\sqrt{S^{(\theta)}}} \cdot \frac{1}{m}\sum_{i=1}^m \frac{1}{(\hat f(\mathbf{x}_i))^{\theta}}
        = nm \left( (\|\mu\|_{L^2}\|\nu\|_{L^2})^{\theta}\sqrt{\Zx\Zy} + \widetilde{\mathcal{O}}_{m,n}(\rho_1 + \rho_2)  \right).
    \end{equation}
    Since $u^{(\theta)}_\varepsilon$ and $v^{(\theta)}_\varepsilon$ are positive and continuous on the compact manifolds, and since $\hat f$ and $\hat g$ are bounded away from zero and infinity on $\mathcal{E}$, there exist positive constants $C^{'}_{1}(\varepsilon)$ and $C^{'}_{2}(\varepsilon)$ such that $C_1$ and $C_2$ defined in Lemma~\ref{lem:scaling_factor_concen} satisfy
    \begin{equation}\label{eq:lemma_C_est}
        C_1 = \min_i \left\{\frac{\hat{\alpha}_i^{(\theta)}}{r_i}\right\} \geq \frac{C^{'}_{1}(\varepsilon)}{n}, \quad
        C_2 = \min_j \left\{\frac{\hat{\beta}_j^{(\theta)}}{c_j}\right\} \geq \frac{C^{'}_{2}(\varepsilon)}{m}.
    \end{equation}
    Similarly, there exist positive constants $C^{'}_{3}(\varepsilon)$ and $C^{'}_{4}(\varepsilon)$ such that
    \begin{equation}\label{eq:lemma_M_est}
        C^{'}_{3}(\varepsilon) \leq M_{ij}^{(\theta)} = \frac{\operatorname{exp}\left(-\|\mathbf{x}_i - \mathbf{y}_j\|_2^2/\varepsilon\right)}{(\hat f(\mathbf{x}_i)\, \hat g(\mathbf{y}_j))^{\theta}} \leq C^{'}_{4}(\varepsilon),
    \end{equation}
    so that $a$ and $b$ defined in Lemma~\ref{lem:scaling_factor_concen} satisfy $C^{'}_{3}(\varepsilon) \leq a \leq b \leq C^{'}_{4}(\varepsilon)$. Hence, for any $\delta \in (0,1/2]$, using~\eqref{eq:s_est},~\eqref{eq:lemma_C_est}, and~\eqref{eq:lemma_M_est}, the coefficients of~\eqref{eq:lemma_alpha_bound} and~\eqref{eq:lemma_beta_bound} are bounded by positive constants $D_1(\varepsilon)$ and $D_2(\varepsilon)$, exactly as in~\eqref{eq:constant}.

    Applying Lemma~\ref{lem:scaling_factor_concen} with $\delta := \tau\, C(\varepsilon)(\rho_1 + \rho_2)$, where $C(\varepsilon)$ is the constant associated with~\eqref{eq:row_bound_est} and~\eqref{eq:col_bound_est} in Definition~\ref{def:order_prob} and $\delta \le 1/2$ upon choosing $n_0(\varepsilon,\tau)$, $h_1'$ and $h_2'$ properly, there exists a pair of positive vectors $(\Alpha^{(\theta)'}, \Beta^{(\theta)'})$ that scales $\mathbf{M}^{(\theta)}$ to row sums $\mathbf{r}$ and column sums $\mathbf{c}$, such that
    \begin{equation}\label{eq:scaling_concen_est}
        \frac{|\alpha_i^{(\theta)'} - \hat{\alpha}_i^{(\theta)}|}{\hat{\alpha}_i^{(\theta)}}
        = \widetilde{\mathcal{O}}_{m,n}^{(\varepsilon)}(\rho_1 + \rho_2),
        \quad \text{ and } \quad
        \frac{|\beta_j^{(\theta)'} - \hat{\beta}_j^{(\theta)}|}{\hat{\beta}_j^{(\theta)}}
        = \widetilde{\mathcal{O}}_{m,n}^{(\varepsilon)}(\rho_1 + \rho_2) ,
    \end{equation}
    for any $i \in [m]$ and any $j \in [n]$.

    \emph{Conclusion.} By Lemma~\ref{lemma:existence_uniqueness_scaling} and the definition of scaling factors, the two pairs $(\Alpha^{(\theta)}, \Beta^{(\theta)})$ and $(\Alpha^{(\theta)'}, \Beta^{(\theta)'})$ satisfy $\alpha_i^{(\theta)}\beta_j^{(\theta)} = \alpha_i^{(\theta)'}\beta_j^{(\theta)'}$ for any $i \in [m]$ and $j \in [n]$. Hence, for any $i \in [m]$ and $j \in [n]$,
    \begin{equation}\label{eq:mul1_est}
    \begin{aligned}
        W_{ij}^{(\theta)} &= \alpha_i^{(\theta)} e^{-\|\mathbf{x}_i - \mathbf{y}_j\|_2^2/\varepsilon} \beta_j^{(\theta)}
        = \alpha_i^{(\theta)'} e^{-\|\mathbf{x}_i - \mathbf{y}_j\|_2^2/\varepsilon} \beta_j^{(\theta)'} 
        = \hat{\alpha}_i^{(\theta)} e^{-\|\mathbf{x}_i - \mathbf{y}_j\|_2^2/\varepsilon} \hat{\beta}_j^{(\theta)} \left( 1 + \widetilde{\mathcal{O}}_{m,n}^{(\varepsilon)}(\rho_1 + \rho_2) \right) \\
        &= \frac{\mathcal{W}^{(\theta)}_{\varepsilon}(\mathbf{x}_i,\mathbf{y}_j)}{(\|\mu\|_{L^2}\|\nu\|_{L^2})^{\theta}\sqrt{\Zx\Zy}} \left( 1 + \widetilde{\mathcal{O}}_{m,n}^{(\varepsilon)}(\rho_1 + \rho_2) \right).
    \end{aligned}
    \end{equation}
    Since $\mathcal{W}_\varepsilon^{(\theta)}$ is continuous and bounded above on the compact product manifold $\mathcal{M}_x \times \mathcal{M}_y$, the multiplicative estimate in~\eqref{eq:mul1_est} implies the additive bound in~\eqref{eq:W_ij2W_kernel_est}. Finally, this estimate holds on the intersection of $\mathcal{E}$ with the events underlying~\eqref{eq:S1_est}--\eqref{eq:scaling_concen_est}; by~\eqref{eq:kde_event_prob} and the union bound, this intersection has probability at least $1 - n^{-\tau} - 4n^{-2\gamma}$, completing the proof.
\end{proof}
\end{appendices}
  
\bibliographystyle{unsrt}
\bibliography{references}
\end{document}